
\documentclass{article}

\usepackage{microtype}
\usepackage{graphicx}
\usepackage{subcaption}
\usepackage{booktabs} 
\usepackage{xurl}
\usepackage{hyperref}

\usepackage{fontawesome5}

\hypersetup{
    colorlinks=true,
    urlcolor=blue,
    linkcolor=blue
}


%

\usepackage[accepted]{icml2026}

\usepackage{hyperref}
\usepackage{url}
\usepackage{graphicx}
\usepackage{enumitem}
\usepackage[table]{xcolor}
\usepackage{wrapfig}
\usepackage{amsmath}
\usepackage{caption}
\usepackage{subcaption}
\usepackage{titletoc}
\makeatletter

\makeatother
\usepackage{hyperref}
\usepackage{tcolorbox}
\usepackage{tabularx}
\usepackage{pifont}
\usepackage{algorithm}
\usepackage{algorithmic}

\usepackage{makecell}
\colorlet{ours}{blue!10}
\usepackage[utf8]{inputenc} 
\usepackage[T1]{fontenc}    
\usepackage{hyperref}       
\usepackage{url}            
\usepackage{booktabs}       
\usepackage{amsfonts}       
\usepackage{nicefrac}       
\usepackage{microtype}      
\usepackage{xcolor}         
\usepackage{amssymb}
\usepackage{mathtools}
\usepackage{amsthm}
\usepackage{multirow}
\usepackage{amsmath}
\usepackage{appendix}

\newcommand{\com}[1]{\tiny$\pm$#1}


\usepackage[capitalize,noabbrev]{cleveref}

\theoremstyle{plain}
\newtheorem{theorem}{Theorem}[section]
\newtheorem{proposition}[theorem]{Proposition}
\newtheorem{lemma}[theorem]{Lemma}

\theoremstyle{definition}

\newtheorem{assumption}[theorem]{Assumption}
\theoremstyle{remark}
\newtheorem{remark}[theorem]{Remark}

\usepackage[textsize=tiny]{todonotes}

\icmltitlerunning{Black-Box Detection of LLM-Generated Text Using Generalized Jensen-Shannon Divergence}

\begin{document}

\twocolumn[
  \icmltitle{Black-Box Detection of LLM-Generated Text Using \\ Generalized Jensen Shannon Divergence}



  \icmlsetsymbol{equal}{*}

  \begin{icmlauthorlist}
    \icmlauthor{Shuangyi Chen}{yyy}
    \icmlauthor{Ashish Khisti}{yyy}

  \end{icmlauthorlist}

  \icmlaffiliation{yyy}{Department of Electrical and Computer Engineering, University of Toronto, Toronto, Canada}

  \icmlcorrespondingauthor{Shuangyi Chen}{shuangyi.chen@mail.utoronto.ca}

  \icmlkeywords{Machine Learning, ICML}

  \vskip 0.3in
]



\printAffiliationsAndNotice{}  

\begin{abstract}
  We study black-box detection of machine-generated text under practical constraints: the scoring model (proxy LM) may mismatch the unknown source model, and per-input contrastive generation is costly. We propose SurpMark, a reference-based detector that summarizes a passage by the dynamics of its token surprisals. SurpMark discretizes surprisals into interpretable states, estimates a state-transition matrix for the test text, and scores it via a generalized Jensen–Shannon (GJS) gap between the test transitions and two fixed references (human vs. machine) built once from existing corpora. Theoretically, we derive design guidance for how the discretization bins should scale with data and provide a principled justification for our test statistic. Empirically, across multiple datasets, source models, and scenarios, SurpMark consistently matches or surpasses baselines, demonstrating strong robustness across domains and generators; our experiments on hyperparameter sensitivity exhibit trends that our theoretical results help to explain.
\par\smallskip
\noindent\faIcon{github}\ 
\href{https://github.com/shuangyichen/SurpMark}{\texttt{github.com/shuangyichen/SurpMark}}
\end{abstract}

\section{Introduction}
Rapid advancements in LLMs have driven their text generation capabilities to near-human levels. 
This has blurred the boundary between human-written and machine-generated text, posing multiple concerns. These include susceptibility to fabrications \cite{Ji_2023} and outdated or misleading information, which can spread misinformation, or facilitate plagiarism \cite{10.1145/3543507.3583199}. LLMs are also vulnerable to malicious use in disinformation dissemination \cite{lin2022truthfulqameasuringmodelsmimic}, fraud \cite{10.1145/3603163.3609064}, social media spam \cite{mirsky2021threatoffensiveaiorganizations}, and academic dishonesty \cite{KASNECI2023102274}.
Moreover, the increasing use of LLM-generated content in training pipelines creates a recursive feedback loop \cite{alemohammad2023selfconsuminggenerativemodelsmad}, potentially degrading data quality and diversity, which poses long-term risks to both society and academia. These concerns motivate the development of detectors that reliably distinguish human-written from machine-generated text and can be deployed at scale across domains. 

Prior work on text detection can be grouped into two categories: classifier-based and statistics-based. Classifier-based detectors require training a task-specific model, which in turn hinges on collecting high-quality, domain-balanced labeled data \cite{guo2023closechatgpthumanexperts,GPTZero,guo2024detective}; this process is costly, time-consuming, and must be repeated when the target domain or generator shifts. Statistics-based methods fall into two categories: global statistics and distributional statistics. The first relies on global statistics such as likelihood or rank \cite{solaiman2019releasestrategiessocialimpacts,gehrmann-etal-2019-gltr}, which can be inaccurate or unstable under calibration mismatch, text-length variability, and domain shift. The second relies on distributional statistics, which are constructed by regenerating a neighborhood around the test passage, via sampling, perturbation, or continuation, thereby tying the detector to that particular input \cite{yang2023dnagptdivergentngramanalysis,su2023detectllmleveraginglogrank,mitchell2023detectgptzeroshotmachinegeneratedtext}. Such per-instance pipelines demand substantial compute and latency and are unrealistic when resources are constrained or throughput is high. Black-box constraints exacerbate calibration drift in global-statistic and regeneration-based detectors due to proxy-model mismatch. This motivates the development of detectors that avoid retraining and per-instance regeneration while remaining reliable under distribution shift in the black-box setting.

\begin{figure*}[t]
\centering
\includegraphics[width=0.88 \textwidth]{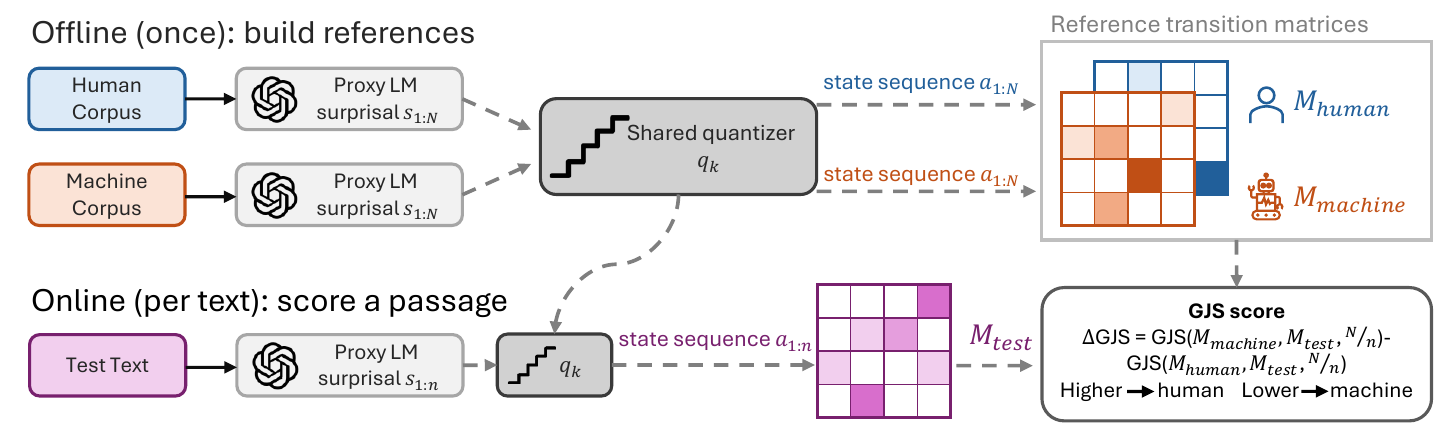}
\caption{SurpMark framework. Offline, we build human/machine reference transition matrices by scoring corpora with a proxy LM, discretizing surprisal via a shared $q_k$, and counting state transitions. Online, a test passage is summarized the same way and assigned a GJS score to measure proximity to human vs. machine references. Details are in Algorithm~\ref{alg:surpmark-offline} and \ref{alg:surpmark-online} in Appendix~\ref{sec:alg}.}
\label{fig:framework}
\end{figure*}

Accordingly, we pursue a design that sidesteps both training-classifier and per-instance regeneration by focusing on stable, dynamics-aware signals, that can be reused across test samples. Viewed through a black-box perspective, the problem naturally invites a likelihood-free hypothesis testing formulation \cite{32134,gerber2024likelihood}:  when the true likelihood is unknown, we compare the empirical summary statistics of a test text against human and machine references. 
Our summary statistic design is guided by two principles. First, because the references are existing corpora whose contexts differ from the test passage, the summary must be abstract and calibration-robust; second, decisions should exploit token dynamics which expose rich local patterns \cite{xu2025trainingfree}. Empirically, we observe a characteristic “recovery” pattern—LLM text tends to return to highly predictable tokens immediately after an unexpected one (Figure~\ref{fig:transition-heatmap-distribution}(a)). We therefore quantize token surprisal into  interpretable states and summarize texts by their state-transition patterns, allowing decisions to depend on relative structure rather than absolute likelihood levels. This representation captures token dynamics and provides a stable, interpretable basis for likelihood-free comparison to human and machine references.

In this paper, we present SurpMark, a black-box, reference-based detector that frames attribution as a likelihood-free hypothesis test. For each test text, token surprisals from a proxy LM are quantized into $k$ interpretable states. The text is summarized by its state-transition matrix and is then assigned a generalized Jensen-Shannon (GJS) divergence score that measures its proximity to the human or machine reference transitions. 
These design choices motivate the theoretical analysis: Under an idealized Markov model fitted to the discretized surprisal states, we analyze how discretization affects the estimation of GJS and study the properties of our decision statistic.

\subsection{Main Contributions}
\begin{itemize}[noitemsep]
    \item We propose SurpMark, a reference-based detector, as shown in Figure~\ref{fig:framework}. 
    \item A theoretical analysis that guides the discretization bin scaling and justifies the test statistic.
    \item A comprehensive experimental evaluation of SurpMark demonstrates its effectiveness across multiple models and domains.  
\end{itemize}
\subsection{Related Work}

Existing detection methods are broadly categorized into classifier-based and statistics-based approaches. Classifier-based detectors \citep{guo2023closechatgpthumanexperts,GPTZero,guo2024detective} are effective but brittle, incurring high costs to retrain upon domain or generator shifts. Statistics-based approaches offer a more transferable alternative. Global-statistic methods utilize holistic features such as likelihood, LogRank \citep{solaiman2019releasestrategiessocialimpacts}, or entropy \citep{gehrmann-etal-2019-gltr}. Distributional-statistic methods, conversely, estimate local divergence via perturbations or sampling. Prominent examples include DetectGPT \citep{mitchell2023detectgptzeroshotmachinegeneratedtext} and Fast-DetectGPT \citep{bao2024fastdetectgpt} which leverage probability curvature, and DNA-GPT \citep{yang2023dnagptdivergentngramanalysis} which analyzes n-gram divergences. Recently, Lastde++ \citep{xu2025trainingfree} combined global likelihood with local diversity entropy derived from discretized probabilities.

Our framework, SurpMark, bridges these paradigms. Unlike perturbation-based methods that require costly regeneration, SurpMark scores in a single pass. Unlike global-statistic methods, SurpMark incorporates both human and machine reference corpora to perform a comparative analysis. This aligns with recent kernel-based relative tests, specifically R-Detect \citep{song2025deep}, which compares test samples against human/machine anchors. However, while R-Detect requires optimizing kernel parameters on reference corpora, SurpMark only requires a lightweight discretization process. See more discussion in Appendix~\ref{app:related-work}.
\section{SurpMark: Detailed Methodology} \label{sec:metho}
In this section, we introduce the proposed detector SurpMark.

\begin{figure*}[t]
\centering
\includegraphics[width=0.95 \textwidth]{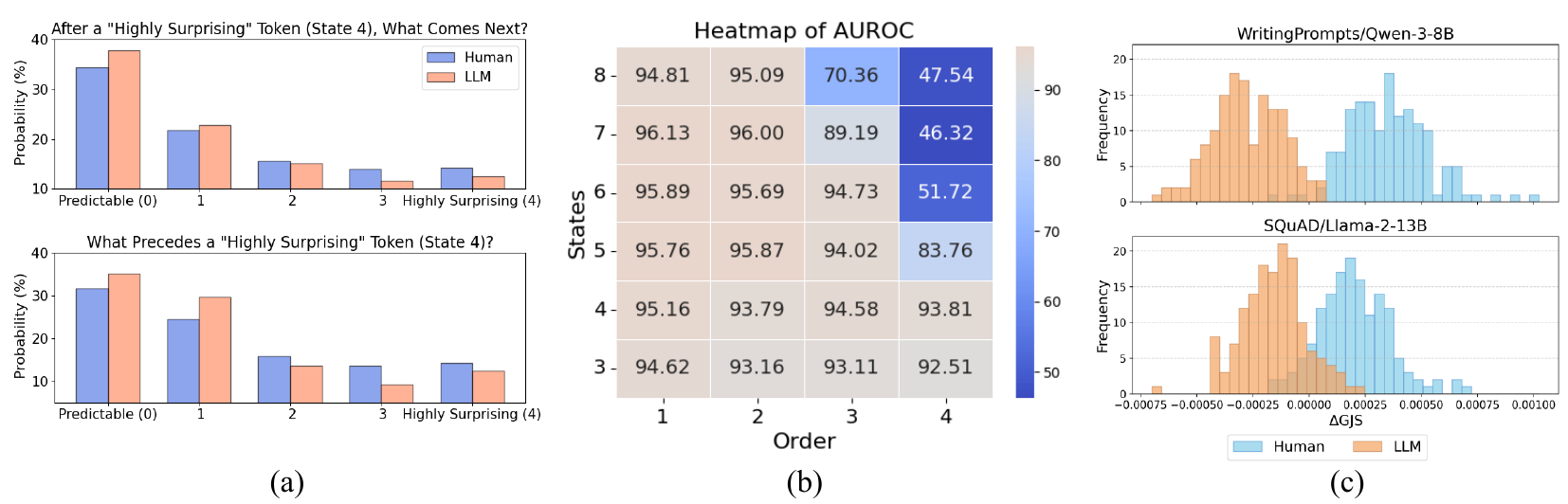}
\caption{(a) The 'Recovery' Phenomenon. Visualization of the key feature driving our detector by comparing the conditional probabilities of transitioning into and out of the "Highly Surprising" state under a 5-bin discretization. This reveals distinct dynamic patterns, including a stronger recovery tendency and a more pronounced spiking tendency from low-surprisal contexts in LLM-generated text. (b) A heatmap illustrating the detector's performance (AUROC) on SQuAD across different hyperparameter settings, using a fixed amount of reference and test data. Higher orders suffer from state-space explosion and sparse transitions, yielding no notable gains beyond the first-order model (see Appendix~\ref{sec:empirical-finding-for-FO} for more discussion). (c) The final score distributions of our detector.}
\label{fig:transition-heatmap-distribution}
\end{figure*}
\paragraph{Surprisal Sequence Estimation via Proxy Model.}
Given a fixed text passage $\mathbf{t}$ and a proxy model $F_{\theta}$, we first
tokenize $\mathbf{t}$ with the tokenizer associated with $F_{\theta}$ to obtain
a token sequence $\mathbf{x} = (x_1, \dots, x_n)$ of length $n$. We then run a
single forward pass of $F_{\theta}$ on this fixed sequence to compute the
token-level surprisal sequence $\{s_t\}_{t=1}^{n}$.
\begin{align}
    \{s_t\}_{t=1}^{n} &= \{s_1,  \dots, s_n\} \nonumber \\
     &= \{ -\log p_{\theta}(x_2|x_1) ,\dots, -\log p_{\theta}(x_n|\{x_t\}_{t=1}^{n-1})  \} \nonumber 
\end{align}
where $p_\theta(\cdot \mid \cdot)$ is the conditional probability estimated by the proxy model $F_\theta$.
\paragraph{Surprisal Discretization by K-means.}
Since surprisal values from the proxy model are continuous, we discretize them into a finite set of surprisal states to enable robust statistical modeling. We employ $k$‑means clustering to partition the surprisal distribution into $k$ levels, denoted as $\mathcal{A}=\{1,\dots,k\}$. For example, when $k = |\mathcal{A}| = 4$, the clusters correspond to interpretable states such as “Predictable,” “Slightly Surprising,” “Significantly Surprising,” and “Highly Surprising.” This abstraction simplifies modeling while preserving the essential structure of predictive uncertainty. Effectively, this step converts the initial sequence of continuous surprisal values, $ \{s_t\}_{t=1}^{n} $, into a discrete state sequence, $ \{a_t\}_{t=1}^{n} $, where $a_t \in \mathcal{A}$.

\paragraph{Modeling State Transitions as Markov Chain.}
After discretizing surprisal values into finite states, we model the resulting sequence as a Markov chain. Notably, LLMs often produce a highly predictable token after a highly surprising one, a recovery effect driven by perplexity minimization, as illustrated in Figure~\ref{fig:transition-heatmap-distribution}(a). To capture this structure, we summarize each text by its empirical first-order transition frequencies. Formally, given a discretized surprisal state sequence $\{a_1, a_2, \dots, a_n\}$, we estimate a transition probability matrix $\hat M$, where each entry $\hat M(j|i)$ represents the empirical probability of transitioning from state $i$ to state $j$, with  $i,j \in \mathcal{A}$.
\begin{align}
   \hat M(j|i)=\frac{\sum_{t=1}^{n-1} \mathbf{1}\{a_t = i,\ a_{t+1} = j\}}{\sum_{t=1}^{n-1} \mathbf{1}\{a_t = i\}}, \quad i, j \in \mathcal{A} \label{eq:counts}
\end{align}
Here, $ \mathbf{1}\{\cdot\}$ is the indicator function.


\paragraph{Reference-based Detection with GJS Divergence.} 
We frame the task of distinguishing between human-written and LLM-generated text as a binary likelihood-free hypothesis testing (LFHT) problem \cite{32134, gerber2024likelihood}. To adapt LFHT for this specific domain, we introduce three key methodological modifications: (1) we utilize token-level surprisals from a fixed proxy LM as observable features; (2) we employ $k$-means quantization to transform continuous values into statistically tractable discrete state sequences; and (3) we propose a new test statistic, $\Delta \text{GJS}_n$. Crucially, unlike standard LFHT which typically evaluates divergence from a single reference distribution \cite{32134}, $\Delta \text{GJS}_n$ leverages a two-sided comparison against both human and machine references to enhance discriminative power. In this framework, the null hypothesis $H_0$ posits that the text is machine-generated, while the alternative $H_1$ suggests it is human-written. Since the true source distributions ($P$ and $Q$) are unknown, our approach remains strictly reference-based, relying on historical corpora to approximate the underlying statistics.

Specifically, given reference texts $\mathbf{t}_P, \mathbf{t}_Q$  from both model source $P$ and human source $Q$, we first compute their empirical surprisal transition probability matrices, denoted by $\hat{M}_P$ and $\hat{M}_Q$, respectively. For a given test text $\mathbf{t}$ coming from either $P$ or $Q$, we similarly compute its surprisal transition probability matrix $\hat{M}_T$ using the surprisal state levels estimated from reference texts.  We then calculate two separate divergence scores using the generalized Jensen-Shannon Divergence (GJS): one measuring the distance between the test text and the machine reference model $ \text{GJS} ( \hat{M}_{P}, \hat{M}_{T}, \alpha )$ and another measuring the distance to the human reference model $ \text{GJS} ( \hat{M}_{Q}, \hat{M}_{T}, \alpha )$, where $\alpha$ denotes the reference–test length ratio. The GJS divergence between $M_A$ and $M_B$ with weight $\alpha$ is defined as 
$
\mathrm{GJS}(M_A,M_B,\alpha) =
\tfrac{\alpha}{1+\alpha}\, D_{\mathrm{KL}}(M_A, M_\alpha)
+ \tfrac{1}{1+\alpha}\, D_{\mathrm{KL}}(M_B , M_\alpha),
M_\alpha = \tfrac{\alpha}{1+\alpha} M_A + \tfrac{1}{1+\alpha} M_B$,
where $D_{\mathrm{KL}}$ denotes the Kullback–Leibler divergence.
We score each test passage with $\Delta\text{GJS}_n = \text{GJS} \left( \hat{M}_{P}, \hat{M}_{T}, \alpha \right) - \text{GJS} \left( \hat{M}_{Q}, \hat{M}_{T},\alpha \right)$. We classify via a tunable threshold $\tau$.
\begin{align}
    \Omega  = 
    \begin{cases} 
    H_0 & \text{if  }  \Delta\text{GJS}_n\leq \tau , \\
    H_1 & \text{if  }  \Delta\text{GJS}_n > \tau 
\end{cases} 
\end{align}
See Algorithm~\ref{alg:surpmark-offline} and \ref{alg:surpmark-online} in Appendix~\ref{sec:alg} for details.

\paragraph{Choice of Markov Order.}
Figure~\ref{fig:transition-heatmap-distribution}(b) varies the order of the state-transition summary while keeping the reference and test sets fixed. AUROC deteriorates as the order increases, which we attribute to state-space explosion with limited data: higher-order transition counts on both the reference and test side become extremely sparse, so higher-order summaries bring no notable gains over the first-order one. Table~\ref{tab:aurox-vs-test} in Appendix further quantifies this sample-efficiency issue by sweeping the reference size and test length across different Markov orders. First-order Markov modeling consistently achieve the highest AUROC, while higher orders remain worse \textit{despite more data}. More details are in Appendix~\ref{sec:empirical-finding-for-FO}. These make first-order modeling the most practical and data-efficient choice in our setting.

\section{Analysis}


\paragraph{Scope of the Markov approximation.}
We do not assume human or LLM text is first-order Markov at the token level.
We invoke a first-order Markov \emph{approximation} only for the discretized proxy-surprisal state sequence induced by a fixed proxy LM and a shared quantizer; this is an idealization for analyzing our plug-in transition estimator, not a claim about the true generative mechanism.

This section provides theoretical support for our design. 
First, we argue that first-order modeling on the discretized surprisal state is sufficient in our regime, using Gray’s approximation theory together with the empirical finding that the second-order improvement is negligible.
Second, under this idealized framework, we motivate $\Delta \mathrm{GJS}_n$ and derive a non-asymptotic discretization-estimation tradeoff that guides the scaling choice of the bin count $k$.

\subsection{Approximation Error with First-Order Markov Modeling} \label{sec:theoretical-Justification-fo-Markov}


Our detector models the discretized surprisal-state sequence $\{a_t\}$ as a first-order Markov chain. Our choice is based on a classical framework of approximating an arbitrary stationary process with a Markov Chain \citep{gray2011entropy} that we present in detail in Appendix~\ref{sec:theo-justification}. In particular, \citep{gray2011entropy} establishes that the natural plug-in estimator provides the best $K$-th order approximation and also provides a quantitative metric to compute the penalty in using the first order Markov chain over higher order choices. For example, using the results in \citep{gray2011entropy} we show in Appendix~\ref{sec:theo-justification} that the approximation error between first and second order approximations is at-most  $I(a_t; a_{t-2} \mid a_{t-1})$. Table~\ref{tab:cond-entropy-perplexity-2} demonstrates experimentally that this value is negligible in our datasets (see Table~\ref{tab:cond-entropy-perplexity} in Appendix for more details). Thus not only is the choice of first-order Markov chain beneficial in that it requires fewer parameters to learn, but it also does not incur a substantial approximation error. 

\begin{table}[htbp]
\centering
{\scriptsize
\setlength{\tabcolsep}{5pt}
\begin{tabular}{l c c}
\toprule
Source & $\hat I$ (bits/token) & Rel.\ PP gain (2nd vs.\ 1st) \\
\midrule
GPT-5-chat & 0.0076 & +0.528\% \\
Human      & 0.0045 & +0.314\% \\
\bottomrule
\end{tabular}}
\caption{Incremental second-order information for discretized surprisal states: $\hat I= I(a_t;a_{t-2}\mid a_{t-1})$. Rel.\ PP gain reports the perplexity reduction from 1st to 2nd order.}
\label{tab:cond-entropy-perplexity-2}
\end{table}

\subsection{Analysis Under an Idealized First-Order Markov Approximation}\label{sec:setup}

Let $\{s_t^P\}_{t=1}^N$ and $\{s_t^Q\}_{t=1}^N$ denote proxy-LM surprisal sequences from the model source $P$ and human source $Q$. For analysis, let $\mathcal{S}_P$ and $\mathcal{S}_Q$ denote the (idealized) population first-order transition kernels on the continuous surprisal space $\mathbb{R}$ for the two sources.
Fix an integer $k\ge2$ and a shared quantizer $q_k:\mathbb{R}\to\mathcal{A}=\{1,\dots,k\}$, and define discretized states $a_t^P=q_k(s_t^P)$ and $a_t^Q=q_k(s_t^Q)$. 
The induced $k$-state transition kernels are
\begin{align}
&M_P(j\mid i):=\Pr[a_{t+1}^P=j\mid a_t^P=i],\quad \nonumber\\
& M_Q(j\mid i):=\Pr[a_{t+1}^Q=j\mid a_t^Q=i]. \nonumber
\end{align}
Their plug-in estimators $\hat M_P,\hat M_Q$ are formed from transition counts (Eq.~\ref{eq:counts}). 
We assume ergodicity with stationary distributions $\pi_P,\pi_Q$ and define $\pi_{\min}:=\min\{\min_{s}\pi_P(s),\min_{s}\pi_Q(s)\}$. 
Given an independent test sequence $a^T_{1:n}$, we test whether its source transition kernel $M_T$ equals $M_P$ (null) or $M_Q$ (alternative), using the same $q_k$.

\subsubsection{Discretization--Estimation Tradeoff}\label{sec:quant-effect}
How should we choose the number of bins $k$? Too few bins lose structural information, while too many, given a fixed-length reference, lead to  higher estimation noise. Thus, $k$ must balance information preservation and statistical reliability.
For a row-aggregated $f$-divergence functional $\mathcal{D}_f$ (row-wise GJS in our case), we study
\begin{align}
\underbrace{|\mathcal{D}_f(\mathcal{S}_P,\mathcal{S}_Q)-\mathcal{D}_f(M_P,M_Q)|}_{\text{discretization error}}& + \nonumber\\ \underbrace{|\mathcal{D}_f(\hat M_P, \hat M_Q)-\mathcal{D}_f(M_P,M_Q)|}_{\text{statistical error}} \nonumber
\end{align}
where $\mathcal{S}_P,\mathcal{S}_Q$ denote the population kernels on $\mathbb{R}$ and $M_P,M_Q$ their $k$-bin discretizations.

\begin{proposition}\label{prop:quant-error}
For any row-aggregated $f$-divergence $\mathcal{D}_f$, there exists a shared $k$-bin partition such that
$
|\mathcal{D}_f(\mathcal{S}_P,\mathcal{S}_Q)-\mathcal{D}_f(M_P,M_Q)| \le C/k.
$
\end{proposition}

\begin{theorem}\label{thm:sta-error}
In the setting of Section~\ref{sec:setup}, assume each discretized chain is ergodic with bounded mixing and $\pi_{\min}\ge c/k$ for some absolute constant $c\ge0$.
It holds that
\begin{align}
        &|\mathcal{D}_f(\hat M_P, \hat M_Q)-\mathcal{D}_f(M_P,M_Q)|\nonumber \\ &\leq  C\Big(\log N\cdot \sqrt{\frac{k^3\log(kN)}{N}}+\frac{k^3}{N}\log\big(1+\frac{N}{k}\big)+\frac{k}{\sqrt{N}} \Big) \nonumber
    \end{align}
\end{theorem}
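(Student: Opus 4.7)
The plan is to decompose the statistical error along the same three axes as the three summands in the bound: (i) fluctuation of the row-wise empirical transition kernels, (ii) bias from missing/unobserved transitions, and (iii) fluctuation of the stationary weights that the $f$-divergence functional uses to aggregate rows. Writing $\mathcal{D}_f(M_P,M_Q)=\sum_{i\in\mathcal A} w(i)\, D_f\bigl(M_P(\cdot|i), M_Q(\cdot|i)\bigr)$ for the mixture-type weighting $w$ induced by $\pi_P,\pi_Q$ (as in the row-aggregated GJS), I would base everything on the additive split
\[
\mathcal{D}_f(\hat M_P,\hat M_Q)-\mathcal{D}_f(M_P,M_Q)
=\sum_i \hat w(i)\,\bigl(\hat D_i-D_i\bigr)+\sum_i \bigl(\hat w(i)-w(i)\bigr)\,D_i,
\]
where $D_i$ and $\hat D_i$ denote the population and empirical row-wise $f$-divergences. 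This cleanly separates the Markov-specific weight perturbation from the purely row-level estimation error, each of which can then be handled by a different tool.

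For the weight term, I would apply a Chernoff/Bernstein-type inequality for ergodic Markov chains (leveraging the bounded-mixing-time and $T(M)=O(1)$ assumptions) to obtain $|\hat w(i)-w(i)|\lesssim \sqrt{\log(k/\delta)/N}$ uniformly in $i$ with probability $1-\delta$. Combined with the fact that $D_i$ is bounded under $\pi_{\min}\gtrsim 1/k$, a triangle inequality and union bound over $k$ states yields the $O(k/\sqrt{N})$ contribution. The key observation enabling the row-wise analysis is that, conditioned on the set of times $\{t: a_t^P=i\}$, the immediate successors form an IID sample of size $N_i$ from $M_P(\cdot|i)$ (and likewise for $Q$); a Markov Chernoff bound on occupation counts then guarantees $N_i\gtrsim N\pi_{\min}\gtrsim N/k$ on a high-probability event.

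On this good event, I would invoke the IID $f$-divergence concentration result adapted from Proposition 14 of \cite{pillutla2023mauvescoresgenerativemodels}, decomposed per row into a fluctuation piece of order $\log N_i\,\sqrt{k^2\log(kN)/N_i}$ and a missing-mass piece of order $(k^2/N_i)\log(1+N_i/k)$. Aggregating these over the $k$ rows and substituting $N_i\gtrsim N/k$ yields $\log N\,\sqrt{k^3\log(kN)/N}$ and $(k^3/N)\log(1+N/k)$ respectively, matching the first two summands of the claimed bound, while the union bound over rows and the two reference corpora is absorbed into the $\log(kN)$ factor. Summing the three contributions then gives the theorem.

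The main obstacle, I expect, is the passage from IID to Markov at the row level without losing rates. Two delicate points need to be addressed. First, the ``conditionally IID successors'' argument must be coupled with the high-probability event $\{N_i\gtrsim N/k\}$ on a single good event, which I would arrange by a joint union bound that exploits the bounded-mixing-time assumption to keep the Markov penalty multiplicative rather than additive in $k$. Second, the row-wise $f$-divergence is only locally Lipschitz, with the Lipschitz constant degrading near the simplex boundary; the $\pi_{\min}\gtrsim 1/k$ assumption and the missing-mass correction must interact carefully so that the boundary effects produce the advertised $\log N$ and $\log(1+N/k)$ factors rather than inflating to $\log(1/\pi_{\min})$ or an extra factor of $k$.
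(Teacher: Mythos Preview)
Your three-term decomposition into row-wise fluctuation, missing mass, and stationary-weight error matches the paper's structure exactly, but the technical route differs in two places worth noting. First, the paper uses a swap-one-argument triangle inequality, $|\mathcal{D}_f(\hat M_P,\hat M_Q)-\mathcal{D}_f(M_P,\hat M_Q)|+|\mathcal{D}_f(M_P,\hat M_Q)-\mathcal{D}_f(M_P,M_Q)|$, and then crudely drops the stationary weight via $\hat\pi_Q(s)\le 1$, summing the per-row errors \emph{unweighted} over the $k$ states; you instead keep the weights in a single add-and-subtract split. Second, rather than your conditionally-IID reduction followed by the IID result of \cite{pillutla2023mauvescoresgenerativemodels}, the paper invokes purpose-built Markov concentration tools directly: Wolfer's uniform row-wise TV bound (Lemma~\ref{conditional-row-tv-bound}) for the fluctuation, Skorski's Markov missing-mass bound (Lemma~\ref{thm:missing-mass}) for the unobserved transitions, and Chung--Lu--Vu (Lemma~\ref{thm:estimation-state}) for the occupation counts. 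Both routes are valid; the paper's buys you off-the-shelf Markov statements without having to manage the conditioning event, while yours is more elementary. One small point on rates: the per-row IID fluctuation should be $O\bigl(\log N_i\sqrt{k\log(kN)/N_i}\bigr)$, with $k$ rather than $k^2$; combined with your weighted aggregation (and $N_i\gtrsim N/k$) this would actually deliver $\log N\sqrt{k^2\log(kN)/N}$ for the first summand---the paper's extra $\sqrt{k}$ arises precisely from dropping the weight and summing unweighted over $k$ rows, so your route is if anything slightly tighter than the stated bound.
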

\begin{remark}
In Theorem~\ref{thm:sta-error}, $N$ refers to the number of transitions for brevity; this $O(1)$ difference from the sequence length does not affect the asymptotic bounds.
\end{remark}
\paragraph{Balancing Two Errors.}
Balancing $O(1/k)$ with the dominant estimation term $ O(k^{3/2}/\sqrt N)$ yields
$
k^*=\Theta(N^{1/5})
$
(up to polylog factors).

\subsubsection{Decision Statistic Analysis}
Building on this discretized Markov approximation, we next analyze the decision
statistic $\Delta \mathrm{GJS}_n$. Our detector extends Gutman’s universal hypothesis test \citep{32134} from a single-reference setting to a two-reference setting. In Gutman’s test, the test sequence is compared against one reference source; here we leverage two calibrated references $P$ (LM) and $Q$ (human) and decide by $\Delta \text{GJS}_n$. Our choice of GJS is not ad hoc. Algebraically, $\Delta \text{GJS}_n$ is the log–likelihood ratio (LLR) between the two hypotheses. 
\paragraph{$\Delta \text{GJS}_n$ as Log-Likelihood Ratio.} Proposition~\ref{prop:GJS-n-as-LLR} shows that $\Delta \text{GJS}_n$ exactly equals the normalized log-likelihood ratio $\Lambda_{n,N}$. Here, the log-likelihood ratio represents the maximized data likelihood under the two hypotheses $H_0$ and $H_1$. See Appendix~\ref{sec:proof-llr} for the proof.

\begin{proposition}\label{prop:GJS-n-as-LLR}
     Assume the setting of Section~\ref{sec:setup}. Let $\mathcal{F}_k$ be the family of stationary first-order Markov models on $\mathcal{A}:=[k]$. For sequences $a^P_{1:N}$, $a^Q_{1:N}$, and $a^T_{1:n}$, define the concatenations $(a^P_{1:N},a^T_{1:n})$ and $(a^Q_{1:N},a^T_{1:n})$.  Consider the generalized log-likelihood ratio $ \Lambda_{n,N}$
     \begin{align}
      \Lambda_{n,N} =   \frac{1}{n}\log\!
\frac{\displaystyle \sup_{M,M'\in\mathcal{F}_k}
M\!\bigl((a^P_{1:N},a^T_{1:n})\bigr)\,M'\!\bigl(a^Q_{1:N}\bigr)}
{\displaystyle \sup_{M,M'\in\mathcal{F}_k}
M\!\bigl(a^P_{1:N}\bigr)\,M'\!\bigl((a^Q_{1:N},a^T_{1:n})\bigr)}
     \end{align}
     where the suprema are attained at the empirical Markov models on the respective concatenated sequences. Then,
         $\Delta \rm{GJS}_n =  \Lambda_{n,N}$.

\end{proposition}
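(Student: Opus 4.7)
The plan is to reduce both $\Lambda_{n,N}$ and $\Delta\text{GJS}_n$ to the same linear combination of empirical conditional entropies and then match them term by term. The key ingredient is the closed form of the MLE for a stationary first-order Markov model: for a sequence $a_{1:L}$ with transition counts $N_{ij}$, the maximizer in $\mathcal{F}_k$ is the plug-in $\hat M(j|i)=N_{ij}/N_i$ and the attained log-likelihood equals $-L\,H(\hat M)$ up to an $O(1)$ initial-state contribution, where $H(\hat M):=-\sum_i \hat\pi(i)\sum_j \hat M(j|i)\log \hat M(j|i)$ denotes the conditional entropy weighted by the empirical stationary distribution $\hat\pi(i)=N_i/L$.

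First, I would instantiate this identity at the four arguments appearing in $\Lambda_{n,N}$. The two concatenations produce the \emph{pooled} empirical Markov models $\hat M_{PT}(j|i)=(N^P_{ij}+N^T_{ij})/(N^P_i+N^T_i)$ and $\hat M_{QT}$ (the single boundary transition between segments contributes $O(1)$ and is absorbed), while $a^P_{1:N}$ and $a^Q_{1:N}$ give $\hat M_P$ and $\hat M_Q$. Subtracting numerator and denominator log-likelihoods and dividing by $n$ yields
\begin{equation*}
n\,\Lambda_{n,N} \;=\; (N+n)\bigl[H(\hat M_{QT})-H(\hat M_{PT})\bigr] + N\bigl[H(\hat M_P)-H(\hat M_Q)\bigr].
\end{equation*}

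Second, I would rewrite each $\text{GJS}$ term via the Markov analogue of the Jensen--Shannon entropy identity. With $\alpha=N/n$ the global mixture weights $\alpha/(1+\alpha)$ and $1/(1+\alpha)$ coincide with the aggregate count proportions in the pooled chain, so $M_\alpha^{PT}=\hat M_{PT}$ and $M_\alpha^{QT}=\hat M_{QT}$ once the KL between transition matrices is interpreted as a conditional KL weighted by the pooled chain's empirical stationary distribution. Under this identification, $\text{GJS}(\hat M_A,\hat M_T,\alpha)=H(\hat M_{AT})-\tfrac{\alpha}{1+\alpha}H(\hat M_A)-\tfrac{1}{1+\alpha}H(\hat M_T)$ for $A\in\{P,Q\}$, and the shared $H(\hat M_T)$ term drops out when subtracting, giving
\begin{equation*}
\Delta\text{GJS}_n \;=\; \bigl[H(\hat M_{PT})-H(\hat M_{QT})\bigr] + \tfrac{\alpha}{1+\alpha}\bigl[H(\hat M_Q)-H(\hat M_P)\bigr].
\end{equation*}
Multiplying through by $n(1+\alpha)=N+n$ and comparing with the expression for $n\,\Lambda_{n,N}$ above matches the two sides term by term, completing the identification.

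The main obstacle is the Markov-specific step in the second paragraph: $\hat M_{PT}$ is a \emph{row-dependent} mixture with weights $N^P_i/(N^P_i+N^T_i)$ rather than the global scalar $\alpha/(1+\alpha)$, so the identification $M_\alpha^{PT}=\hat M_{PT}$ is not immediate from the formulas. It requires choosing the pooled chain's empirical stationary distribution as the weighting measure in the KL divergence between transition matrices; only under that choice do the per-row count ratios integrate to the global mixture weights demanded by the JSD--entropy identity. The ergodicity and $\pi_{\min}\gtrsim 1/k$ assumptions of Section~\ref{sec:setup} justify treating the initial-state and boundary-transition contributions as negligible $O(1)$ terms that drop out in the per-token normalization.
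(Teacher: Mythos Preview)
Your plan is exactly the paper's argument: invoke the Gutman identity $\sup_{M\in\mathcal F_k}M(a_{1:L})=2^{-L\,H(a_{1:L})}$ to write $\Lambda_{n,N}$ as a linear combination of empirical conditional entropies, write each $\mathrm{GJS}$ term via the JSD--entropy decomposition, and then observe that the two $H(\hat M_T)$ contributions cancel in the difference. The paper's proof in Appendix~\ref{sec:proof-llr} is essentially these three lines, citing Gutman for the first step and stating the entropy form of $\Delta\mathrm{GJS}_n$ directly.

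One concrete issue in your final comparison: after multiplying $\Delta\mathrm{GJS}_n$ by $N+n$ you compare with $n\,\Lambda_{n,N}$, but equality of $\Delta\mathrm{GJS}_n$ and $\Lambda_{n,N}$ would require comparing $n\,\Delta\mathrm{GJS}_n$ with $n\,\Lambda_{n,N}$; as written, your two displayed expressions are off by a factor $(1+\alpha)$ (and a sign). The source of the discrepancy is that you use the normalized $\mathrm{GJS}=\tfrac{\alpha}{1+\alpha}D_{\mathrm{KL}}+\tfrac{1}{1+\alpha}D_{\mathrm{KL}}$ from Section~\ref{sec:metho}, whereas the entropy identity that matches $\Lambda_{n,N}$ termwise (and that the paper uses tacitly in Eq.~\eqref{eq:gjs-entropy} and explicitly in Eq.~\eqref{GJS-info-dens}) is the \emph{unnormalized} Gutman form $\mathrm{GJS}=\alpha D_{\mathrm{KL}}+D_{\mathrm{KL}}$. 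Switching to that convention removes the $(1+\alpha)$ factor and makes the two displays align. Your diagnosis of the ``main obstacle''---that the pooled chain $\hat M_{PT}$ is a row-dependent mixture, so the identity $M_\alpha=\hat M_{PT}$ only holds if both the mixture and the KL weighting are taken with respect to the pooled empirical stationary law---is correct and is precisely what the paper sweeps under the citation to Gutman; it is worth stating explicitly, since it is the one place where the Markov case differs from the i.i.d.\ type calculation.
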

In Appendix~\ref{sec:proof-asymp-gaussian}, we further prove the asymptotic normality of our statistics $\Delta \text{GJS}_n$, and empirically verify it through experiments in Appendix~\ref{sec:shapiro-wilk}.

\begin{table*}[h]
\centering
{
\tiny 
\resizebox{\textwidth}{!}{
\begin{tabular}{lcccccccccc}
\toprule
& GPT2-XL & GPT-J-6B & GPT-Neo-2.7B & GPT-NeoX-20B & OPT-2.7B & Llama-2-13B & Llama-3-8B & Llama-3.2-3B & Gemma-7B  & Avg \\
\midrule
Likelihood & 85.02 & 74.82 & 73.32 & 72.03 & 77.22 & 94.39 & 93.93 & 65.22 & 65.8 &77.97 \\

LogRank & 88.2 & 79.25 & 78.29 & 75.37 & 81.99 & 95.9 & 95.05 & 71.04 & 69.18 & 81.59\\

Entropy & 51.1 & 47.15 & 50.94 & 45.94 & 48.88 & 29.03 & 29.31 & 53 & 46.85& 44.69 \\

DetectLRR & 91.07 & 85.81 & 87.12 & 80.27 & 88.48 & 96.43 & 94.85 & 81.54 & 75.5 & 86.79 \\

Lastde & 95.97 & 85.88 & 89.09 & 80.16 & 88.89 & 93.29 & 94.29 & 72.99 & 69.48  & 85.56\\

Lastde++ & \textbf{99.46} & 91.54 & 94.29 & 85.13 & 94.15 & 95.5 & 95.9 & 77.47 & \underline{76.9} & 90.04\\

DNA-GPT & 81.98 & 70.68 & 72.69 & 70.42 & 73.86 & 95.91 & 96.54 & 64.79 & 65.32  &76.91\\

Fast-DetectGPT & 97.94 & 86.83 & 89.15 & 83.17 & 90.55 & \textbf{98.21}  & \textbf{97.98} & 74.32 & 73.95 & 88.01 \\

DetectGPT &94.45 & 79.55 & 84.71 & 75.71 & 82.88 & 86.51 & 86.28 & 64.23 & 69.05  & 80.37\\

DetectNPR & 94.93 & 81.91 & 86.4 & 77.93 & 84.06 & 95.19 & 93.67 & 69.45 & 71.49 & 83.89 \\

R-Detect & 74.38 & 63.58 & 67.7 & 63.35 & 65.83 & 79.98 & 81.64 & 62.22 &55.46 &  68.24\\

Binoculars & \underline{99.19} &85.76  & 87.5 & 82.02 &  86.9 & 96.93 & 96.41 & 68.36 & 73.57 &  86.19\\

FourierGPT & 54.72 & 54.28 & 56.5 & 56.51 &   52.47 & 72.43 & 72.06 & 54.83 & 55.84 & 59.07 \\

\rowcolor{ours}$\text{SurpMark}_{k=6}$ & 98.07 & \underline{92.96} & \underline{95.19} & \textbf{86.78} & \underline{94.49} & 97.41 & 97.06 & \textbf{81.74} & \textbf{77.40} &\textbf{91.23}\\

\rowcolor{ours}$\text{SurpMark}_{k=7}$ & {98.35} & \textbf{93.1} & \textbf{95.42} & \underline{86.40} & \textbf{94.88} & \underline{97.58} & \underline{97.17} & \underline{80.74} & 76.89   &\underline{91.17}\\
\bottomrule
\end{tabular}}}
\caption{Detection results for text generated by 9 open-source models under the black-box setting. The AUROC reported for each model are averaged across three datasets: Xsum, WritingPrompts, and SQuAD. See Table~\ref{tab:opensource-xsum},~\ref{tab:opensource-writing},~\ref{tab:opensource-squad} in Appendix for details.}
\label{tab:opensource}
\end{table*}

\section{Experiments} \label{sec:exp}

\paragraph{Datasets, Configurations and Models.} We evaluate our method on XSum \cite{narayan-etal-2018-dont}, WritingPrompts \cite{fan-etal-2018-hierarchical}, SQuAD \cite{rajpurkar-etal-2016-squad}, WMT19 \cite{barrault-etal-2019-findings}, HC3 \cite{guo2023closechatgpthumanexperts}, DetectRL \cite{wu2024detectrl}, Kaggle DAIGT V2 Train Dataset, and Dolly \cite{DatabricksBlog2023DollyV2}. For Xsum, WritingPrompts, SQuAD, and WMT19, we construct the reference corpora and test set as follows. For each dataset, we randomly sample 300 human-written texts to form the human reference, then generate paired machine outputs by prompting the source model with the first 30 tokens of each human text. For the test set, we sample another 150 human-written texts and create their machine-generated counterparts using the same procedure. For datasets that already provide machine-generated texts, we use the same sampling configuration, except that no additional generation step is required. We select 9 open-source models and 3 closed-source models as our source model. More details are in Appendix~\ref{sec:exp-configuration}. Unless otherwise specified, we use GPT2-Large as our proxy model.

\paragraph{Baselines.}
We benchmark against 13 detectors, including Likelihood \cite{solaiman2019releasestrategiessocialimpacts}, LogRank \cite{solaiman2019releasestrategiessocialimpacts}, Entropy \cite{gehrmann-etal-2019-gltr,ippolito2020automaticdetectiongeneratedtext}, DetectLRR \cite{su2023detectllm}, and Lastde \cite{xu2025trainingfree},  DetectGPT \cite{mitchell2023detectgptzeroshotmachinegeneratedtext}, Fast-DetectGPT \cite{bao2024fastdetectgpt}, DNA-GPT \cite{yang2023dnagptdivergentngramanalysis}, DetectNPR \cite{su2023detectllm}, Lastde++ \cite{xu2025trainingfree}, R-Detect \cite{song2025deep}, Binoculars \cite{hans2024spotting}, and FourierGPT \cite{xu2024detectingsubtledifferenceshuman}.

\subsection{Main Results}

Table~\ref{tab:opensource} and ~\ref{tab:closesource} present the detection results under black-box scenario. Table~\ref{tab:closesource} shows that SurpMark achieves the best performance on 3 commercial, closed-source LLM. Performance is especially strong on GPT-5-Chat. Table~\ref{tab:opensource} shows that SurpMark ranks first on 6 of 9 open-source models and within the top two on 7 of 9. These results highlight SurpMark’s robustness on proprietary systems and its suitability for real-world commercial deployments. Please note that compared with distribution-based detectors that generate a neighborhood per input at test time, for each dataset, SurpMark builds reference corpora once and reuses them for all test passages. Under a reference-per-test budget $B=\frac{\#\rm{references}}{\# \rm{tests}}$, in Table~\ref{tab:closesource} and \ref{tab:opensource}, SurpMark operates at $B=2$, whereas DNA-GPT uses $B=10$, DetectGPT, DetectNPR, Lastde++ require $B=100$. Thus SurpMark’s reference cost is 5$\times$–50$\times$ lower, while avoiding any per-input contrastive generation at test time, enabling real-time detection as discussed later. 

We attribute SurpMark's superior performance on closed-source models (e.g., GPT-5-chat) to its ability to capture transitional dynamics. As detailed in Appendix~\ref{marginal-transition}, stronger models exhibit a vanishingly small gap in marginal surprisal distributions compared to humans, rendering marginal statistics ineffective. However, the 'transition gap' remains significant, which SurpMark effectively exploits.

\begin{table}
{\scriptsize
\centering
\begin{tabular}{lcccc}
\toprule
& Gemini-1.5-Flash & GPT-4.1-mini & GPT-5-Chat & Avg  \\
\midrule
Likelihood & 56.49 & 66.77 & 49.62 &  57.63 \\

LogRank & 53.87 & 66.8 & 49.83 & 46.53 \\

Entropy & 58.36 & 38.72 & 46.99 & 48.02 \\

DetectLRR & 44.51 & 63.29 & 49.83 & 62.11  \\

Lastde & 48.13 & 57.28& 41.96 & 49.12 \\

Lastde++ & 71.72 & 68.23 & 43.51 & 61.15 \\

DNA-GPT &62.06 & 56.71 & 49.82&  56.2\\

Fast-DetectGPT & 72.49 & 68.32 & 43.39 &  61.4\\

DetectGPT &69.19 & 70.08 & 54.6 & 64.75\\

DetectNPR & 64.96 & 70.83 & 54.99 & 63.59 \\

R-Detect &69.25 & 71.64 & 67.75 & 69.55 \\

Binoculars &74.51 & 71.12 & 49.65 & 65.09 \\

FourierGPT &61.25 & 63.05 & 64.82 &  63.04\\

\rowcolor{ours}$\text{SurpMark}_{k=6}$ & \underline{74.57} & \textbf{80.25} & \underline{78.33} & \underline{77.72} \\

\rowcolor{ours}$\text{SurpMark}_{k=7}$ & \textbf{75.14} & \underline{78.48} & \textbf{81.33} & \textbf{78.32}\\
\bottomrule
\end{tabular}}
\caption{Detection results for text generated by 3 closed-source models under the black-box setting. The AUROC reported for each model are averaged across three datasets: Xsum, WritingPrompts, and SQuAD. See Table~\ref{tab:closesource-xsum},~\ref{tab:closesource-writing},~\ref{tab:closesource-squad} in Appendix for details.}
\label{tab:closesource}
\end{table}

\subsection{Ablation and Sensitivity Analysis}

\begin{table}[htbp]
\centering
{\tiny
\begin{tabular}{l l c c c}
\toprule
Dataset@Source model      & Method              & $k=7$ & $k=8$ & Best \\
\midrule
         & k-means            & 80.42 & 79.32 & \textbf{80.42} \\
XSum@GPT-4.1-mini          & equal-width  & 80.41 & 76.20  & 80.41 \\
        & equal-mass   & 71.40 & 74.22 & 74.22 \\
\midrule
  & k-means             & 82.05 & 84.86 & \textbf{84.86} \\
WritingPrompts@GPT5-chat  & equal-width & 62.90 & 72.06  & 72.06 \\
  & equal-mass    & 84.42 & 83.59 & 84.42 \\
\bottomrule
\end{tabular}
\caption{AUROC of different discretization schemes under varying number of states $k$.}
\label{tab:discretization-k}}
\end{table}
\vspace{-2mm}
\paragraph{Necessity of $k$-means.}

In Table~\ref{tab:discretization-k}, we evaluate the effect of different discretization schemes, including $k$-means, equal-width, and equal-mass binning. Across all datasets and $k$ values, $k$-means is the most robust quantization scheme: it consistently reaches or matches the best AUROC, while equal-mass can degrade sharply on XSum@GPT-4.1-mini and equal-width is unstable and often much worse.
\begin{figure}
\centering
  \includegraphics[width=0.38\linewidth]{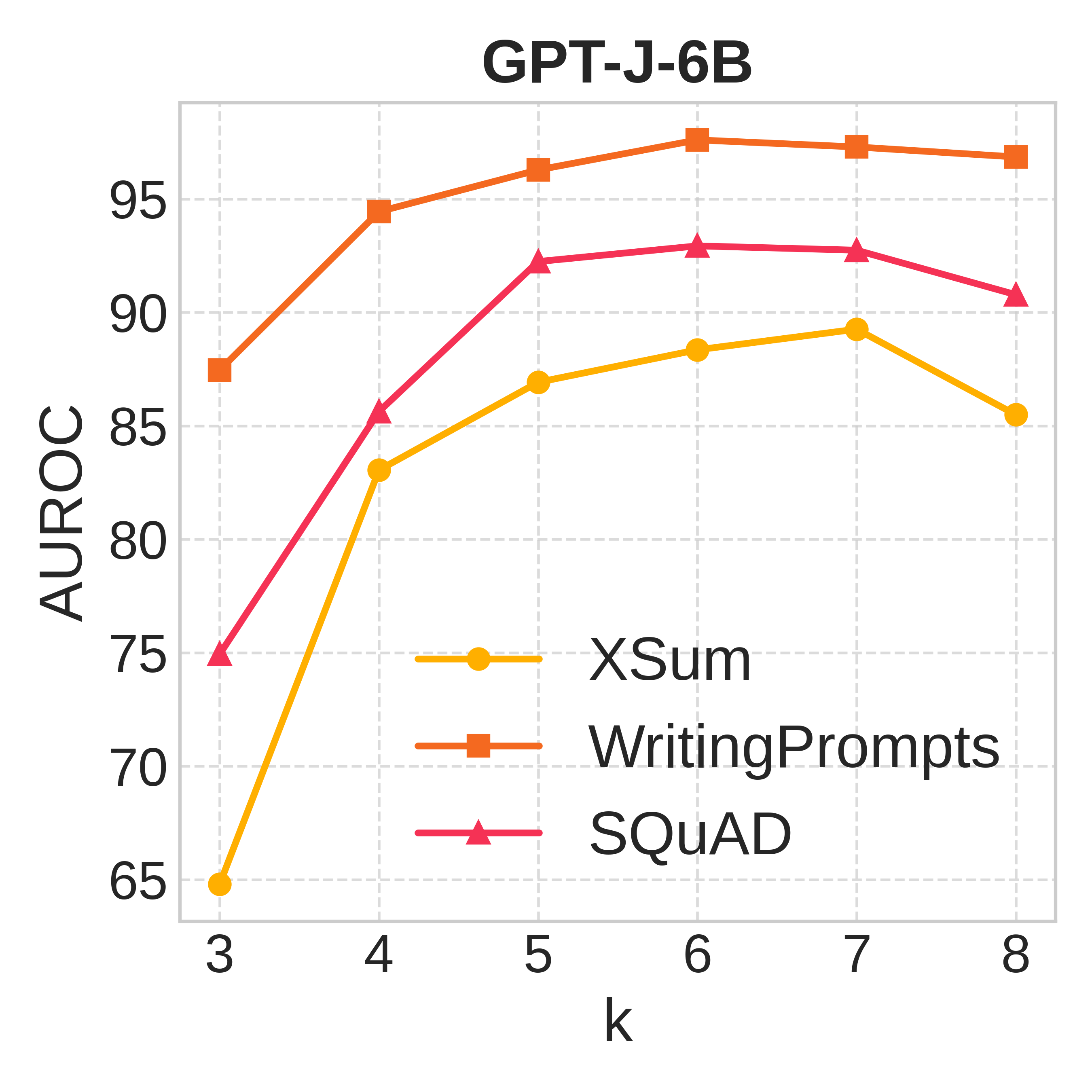}
  \label{fig:sub1}
  \includegraphics[width=0.38\linewidth]{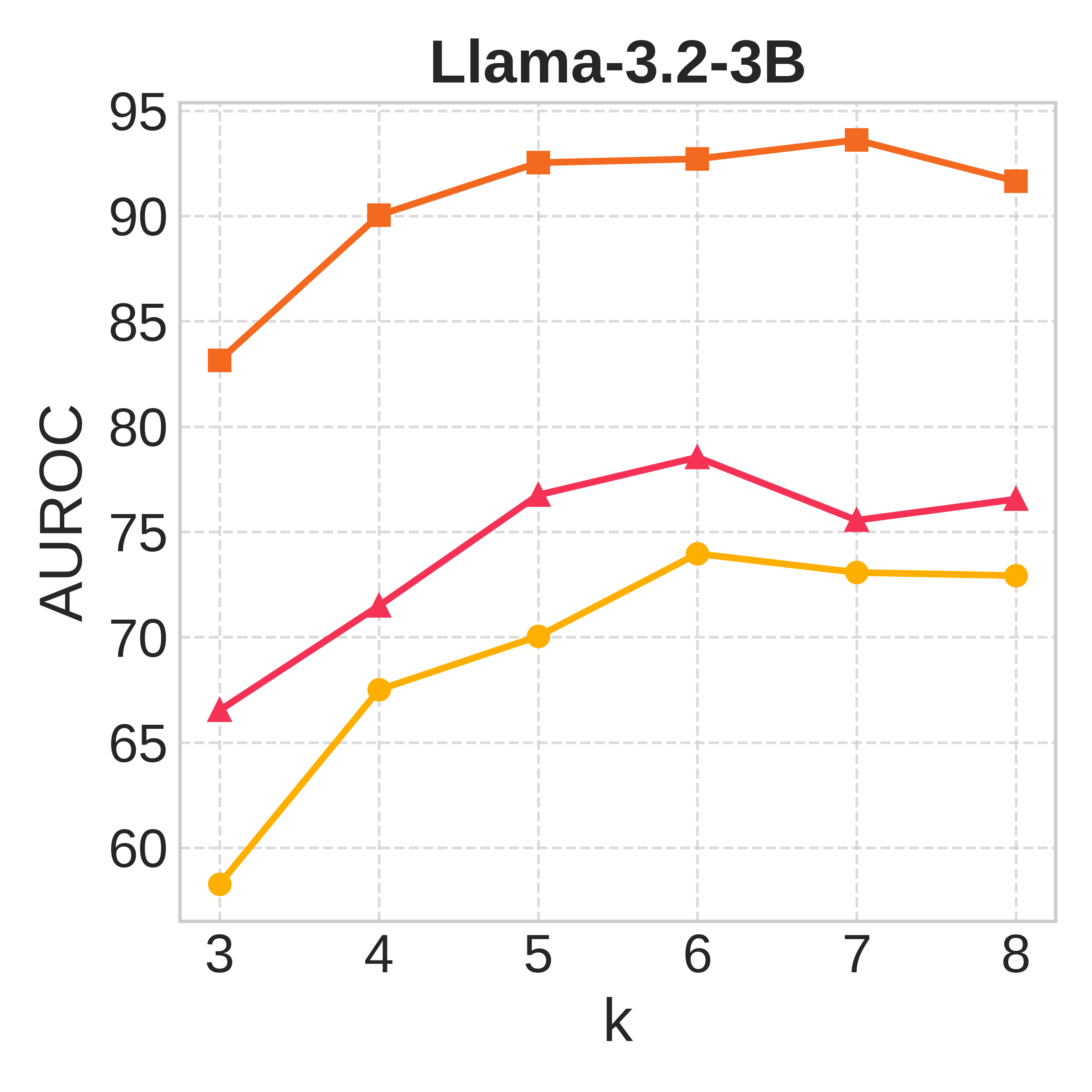}
  \label{fig:sub2}
 \caption{Effect of the number of bins 
$k$ on detection performance for source models including GPT-J-6B (left) and Llama-3.2-3B (right).}
 \label{fig:impact-of-k}

 \end{figure}
\begin{figure*}[!t]
\centering
\includegraphics[width=0.9\textwidth]{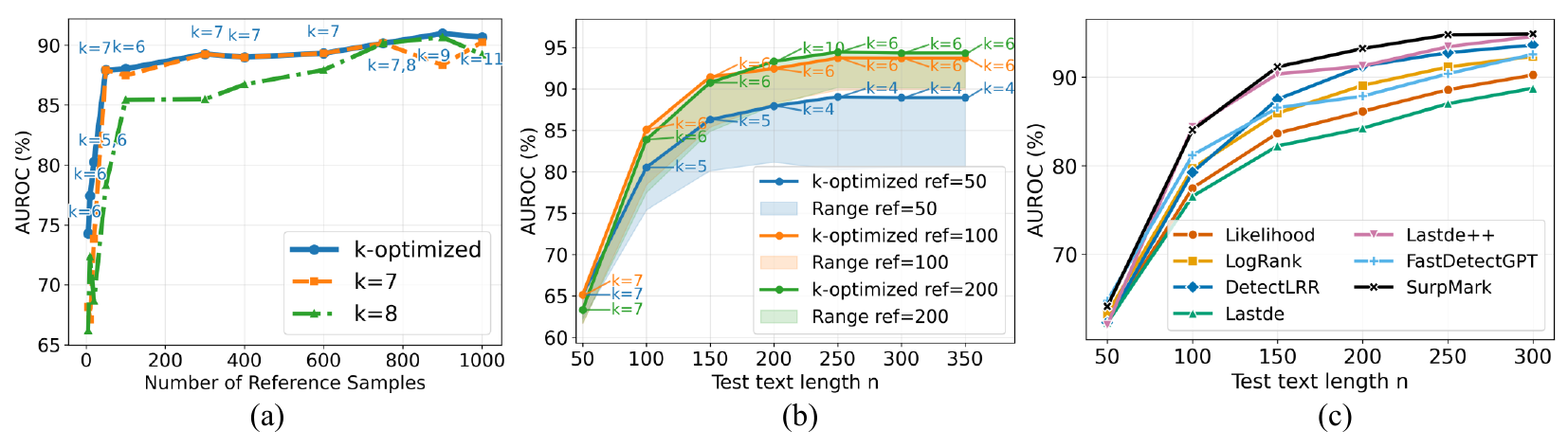}
\caption{(a) AUROC vs. number of reference samples. The blue curve (“$k$-optimized”) picks the best $k$ at each number of reference. orange/green curves fix $k \in \{7,8\}$. (b) AUROC vs. test length $n$ under different reference lengths. Solid lines are k-optimized for each reference sample truncated to 50/100/200 tokens; shaded bands show the attainable range across $k$ at each $n$. (c) Detection results of 7 detection methods on 6 test lengths.}
\label{fig:k-optmized-k7-k8}
\end{figure*}
\begin{figure*}[!t]
\centering
\includegraphics[width=1 \textwidth]{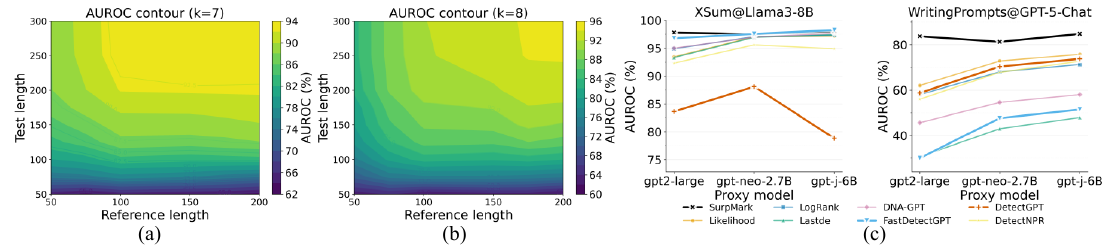}
\caption{(a-b) AUROC contour maps (WritingPrompts@Gemma-7B). Left: $k=7$; right: $k=8$. The x-axis is reference length (tokens) and the y-axis is test length (tokens). Colors encode AUROC. In both panels, contours tilt up-right, indicating a trade-off: larger reference length allows smaller test length at similar performance. (c) AUROC vs. proxy model.}
\label{fig:contour-proxy}
\end{figure*}
\vspace{-2mm}
\paragraph{Effect of bins $k$.} Figure~\ref{fig:impact-of-k} shows the effect of the number of bins $k$. Across both models, increasing the number of bins $k$ leads to clear improvements in AUROC up to a moderate range, after which the gains saturate or slightly decline. The best results across datasets are generally observed at $k=6-7$. Our theory yields an optimal bin count of the form $k^* = C N^{1/5}$ for some constant $C$, where $N$ is the total length of the reference samples. We empirically calibrate the constant $C$ in Appendix~\ref{app:choice-of-k}. Next, we further investigate how varying $N$ shift the empirical optimum $k$.

\vspace{-3mm}
\paragraph{Effect of Number of Reference Samples.}
Figure~\ref{fig:k-optmized-k7-k8} (a) shows that AUROC improves sharply as the reference grows from very small number of reference samples to 100 reference samples; beyond 100 reference samples the gains are minor. The $k$-optimized curve picks the best $k \in \{4,\dots,12\}$ at each number of reference. The annotated $k$ values grow mildly with the number of reference samples, and using large $k$ for small number of reference hurts performance. This trend aligns with our theoretical intuition: a larger number of reference samples reduces reference-side estimation error and thus allows for a slightly larger $k$.
\vspace{-3mm}
\paragraph{Effect of Length of Test Sample.} In Figure~\ref{fig:k-optmized-k7-k8} (b), we fix the number of reference samples and study the effect of sample length. AUROC climbs rapidly as test length $n$ grows from 50 to about 150–200. Longer reference lift the curves and make the bands across $k \in \{4, \dots,12\}$ tighter, indicating greater stability. The $k$-optimized curves show that the optimal $k$ is driven more by reference length than by test length. In Figure~\ref{fig:k-optmized-k7-k8} (c), we evaluated detection performance of baselines across varying test length (tokens), focusing on WritingPrompts generated by Gemma-7B. All methods improve with longer texts. SurpMark is competitive at short lengths and becomes the top method for test length larger than 150. Comparison on more source models are presented in Figure~\ref{fig:appendix-test-length} in Appendix.
\vspace{-3mm}
\paragraph{Reference–Test Length Trade-Offs.} Figure~\ref{fig:contour-proxy} (a) and (b) show AUROC contours over reference length and test length $n$ at fixed bins $k$. Performance improves toward the upper-right, and the up-right tilt shows a reference-test length trade-off: larger reference length can compensate for smaller test length at similar accuracy. 

\vspace{-3mm}
\paragraph{Effect of Proxy Model.} In Figure~\ref{fig:contour-proxy} (c), x-axis lists the proxy LM used to compute scores. Across both datasets, most baselines improve with stronger proxy models, especially on WritingPrompts with GPT-5-Chat as the source model. SurpMark is consistently top and stable across proxy models. It already performs strongly with the smallest proxy and improves only modestly with larger ones, whereas several baselines are highly sensitive to the proxy choice, some even degrade when the proxy changes. In short, SurpMark achieves strong and reliable performance without expensive proxy models, making it a better default in low-resource deployments.

\vspace{-3mm}
\paragraph{Throughput.} Figure~\ref{fig:throughput} plots throughput (items/s) against the number of test texts. Baseline methods appear as horizontal lines because their per-item latency is constant. SurpMark improves monotonically as the one-time preprocessing cost is amortized. The curve crosses the Fast-DetectGPT line at roughly $298$, after which SurpMark maintains higher throughput.
\begin{figure}[htbp]
\centering
\includegraphics[width=0.36 \textwidth]{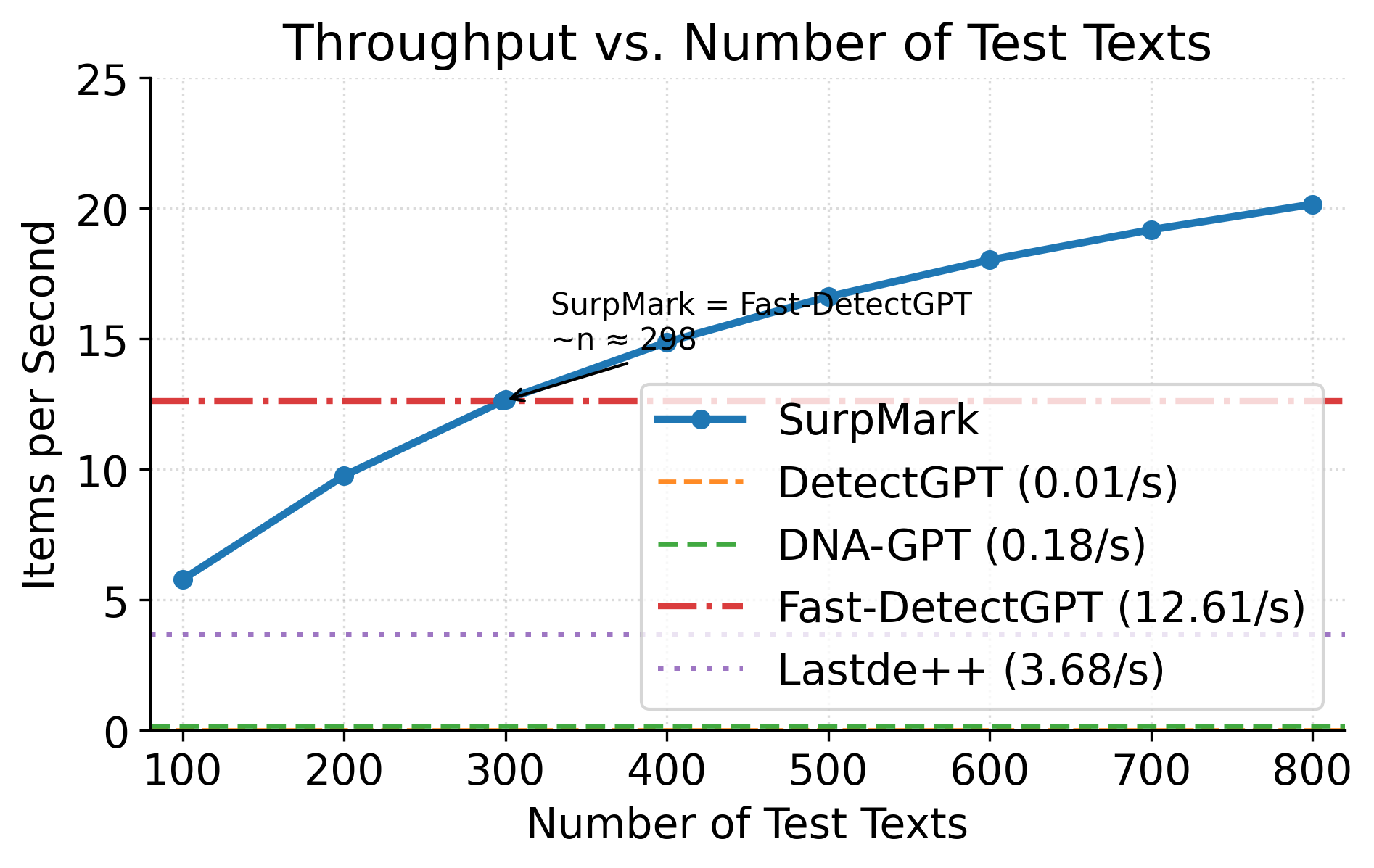}
\caption{Throughput (items per second) versus the number of test texts for SurpMark compared to baseline methods (proxy LM: GPT-2 Large; GPU: NVIDIA RTX 4090).}
\label{fig:throughput}
\end{figure}
\begin{figure}[htbp]
\centering
\includegraphics[width=0.36\textwidth]{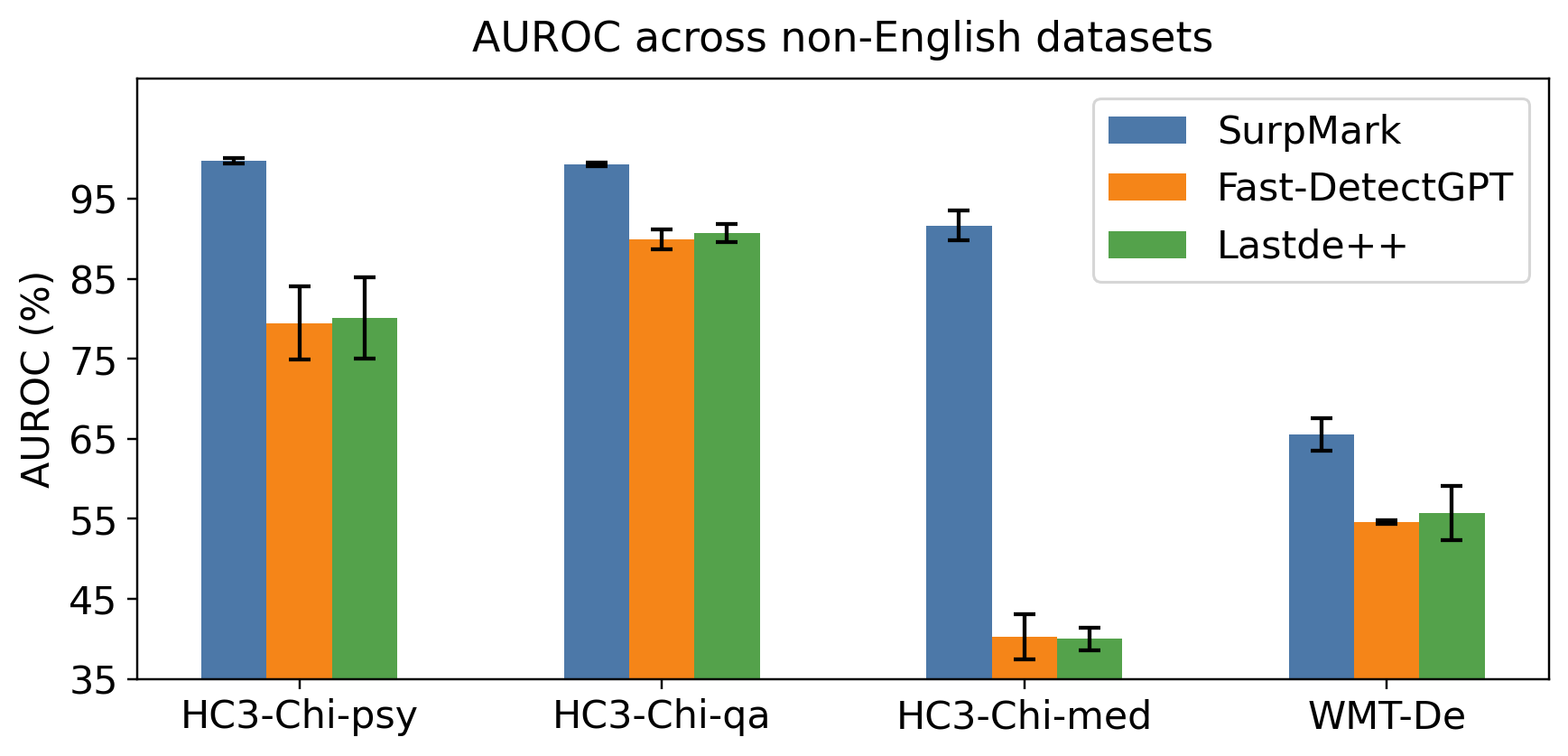}
\caption{AUROC on non-English datasets (HC3-Chi-psy/qa/med and WMT-De). Error bars denote standard deviation. Higher is better.}
\label{fig:non-english}
\end{figure}
\subsection{More Scenarios}
\vspace{-2mm}
\paragraph{Non-English Scenarios.} In Figure~\ref{fig:non-english}, we evaluate on German and Chinese corpora. For German, we use WMT19 with GPT-4o-mini as the source model and Llama-3.2-1B as the proxy model. For Chinese, we use HC3 across multiple domains (psychology, medicine, openqa), which provides paired human and ChatGPT answers to the same questions, and adopt Qwen-2.0-0.5B as the proxy model. SurpMark ranks first on all four datasets, with large margins on HC3-Chi-med.
\vspace{-2mm}
\begin{table}[t]
\centering
\caption{Mixed-domain results on the DetectRL benchmark. Results are grouped by source model. For each source model, the test set contains samples from multiple domains, including arXiv, WritingPrompts, Yelp\_review, and XSum. We report AUROC and TPR@FPR=5\% with standard deviation.}
\label{tab:mix_domain}
{\scriptsize
\begin{tabular}{llcc}
\toprule
Source model & Method & AUROC & TPR@FPR=5\% \\
\midrule
\multirow{3}{*}{Llama-2-70B}
& Fast-DetectGPT & 79.85 \com 1.3 & 55.33 \com 0.32 \\
& Lastde++ & 65.2 \com 0.6 & 44.67 \com 0.83 \\
& \cellcolor{ours} SurpMark & \cellcolor{ours} 94.7 \com 0.6 & \cellcolor{ours} 62.67 \com 0.62 \\
\midrule
\multirow{3}{*}{Claude-instant}
& Fast-DetectGPT & 37.16 \com 0.22 & 4.67 \com 0.2 \\
& Lastde++ & 39.11 \com 0.14 & 2 \com 0.12 \\
& \cellcolor{ours} SurpMark & \cellcolor{ours} 80.92 \com 0.57 & \cellcolor{ours} 34 \com 0.88 \\
\midrule
\multirow{3}{*}{GPT-3.5-turbo}
& Fast-DetectGPT & 74.77 \com 0.68 & 42.67 \com 0.14 \\
& Lastde++ & 63.94 \com 1.06 & 25.33 \com 1.6 \\
& \cellcolor{ours} SurpMark & \cellcolor{ours} 80.57 \com 0.36 & \cellcolor{ours} 47 \com 0.63 \\
\bottomrule
\end{tabular}
}
\end{table}


\vspace{-2mm}
\begin{table}[t]
\centering
\caption{Mixed-generator results on the DetectRL benchmark. Results are grouped by domain. For each domain, the test set contains samples generated by multiple source models, including Llama-2-70B, Claude-instant, GPT-3.5-turbo, and PaLM-2-Bison. We report AUROC and TPR@FPR=5\% with standard deviation.}
\label{tab:mix_generator}
{\scriptsize
\begin{tabular}{llcc}
\toprule
Domain & Method & AUROC & TPR@FPR=5\% \\
\midrule
\multirow{3}{*}{arXiv}
& Fast-DetectGPT & 71.24 \com 1.03 & 44 \com 1.27 \\
& Lastde++ & 71.29 \com 0.59 & 48.67 \com 1.12 \\

& \cellcolor{ours}SurpMark & \cellcolor{ours} 86.69 \com 0.42 & \cellcolor{ours} 50 \com 0.2 \\
\midrule
\multirow{3}{*}{WritingPrompts}
& Fast-DetectGPT & 70.2 \com 0.84 & 44 \com 0.11 \\
& Lastde++ & 55.99 \com 0.44 & 34.67 \com 0.38 \\

& \cellcolor{ours}SurpMark & \cellcolor{ours}81.32 \com 0.68 & \cellcolor{ours}56 \com 0.47 \\
\midrule
\multirow{3}{*}{Yelp\_review}
& Fast-DetectGPT & 71.43 \com 0.62 & 56.67 \com 1.16 \\
& Lastde++ & 56.29 \com 0.56 & 40.67 \com 0.44 \\

& \cellcolor{ours}SurpMark & \cellcolor{ours}90.18 \com 0.67 & \cellcolor{ours}62.67 \com 0.87 \\
\bottomrule
\end{tabular}}
\end{table}

\paragraph{Mixed Domains.}
We further evaluate the robustness of different detectors under two mixed evaluation settings. 
Table~\ref{tab:mix_domain} presents the mixed-domain evaluation, where results are grouped by source model and each test set contains samples from multiple domains. 
This setting measures whether a detector can maintain stable performance when the textual domain varies while the source generator is fixed. 
SurpMark achieves the highest AUROC across all source models, with particularly strong gains on Llama-2-70B and Claude-instant. 
Notably, for Claude-instant, Fast-DetectGPT and Lastde++ obtain AUROC scores below 50, suggesting that their detection signals are unreliable under mixed-domain inputs, whereas SurpMark remains substantially more effective.
\vspace{-2mm}
\paragraph{Mixed Generators.}
Table~\ref{tab:mix_generator} reports the mixed-generator evaluation, where test samples are grouped by domain and each group contains outputs from multiple source generators. 
This setting evaluates whether a detector can maintain stable performance when the domain is fixed but the generator source varies. 
SurpMark achieves the strongest overall performance, obtaining the highest AUROC on multiple domains. 
These results indicate that SurpMark is more robust to heterogeneous generation sources under mixed-generator inputs.

\vspace{-2mm}
\subsection{Out-of-Domain (OOD) Generalization}
\vspace{-2mm}
\paragraph{Domain Shift OOD.} We further evaluate SurpMark under a reference-domain shift setting on the DetectRL benchmark, where the reference texts are drawn from a different domain from the target test set. 
The machine-generated texts are sampled from multiple frontier generators, including Llama-2-70B, Claude-Instant, and GPT-3.5-Turbo, resulting in a mixed-generator and cross-reference evaluation. 
As shown in Table~\ref{table:ref_transfer}, SurpMark remains robust under out-of-domain references and consistently outperforms Fast-DetectGPT and Lastde++. 
On average, SurpMark improves over the strongest baseline by 13.26 AUROC and 11.06 TPR@5\%FPR in genuine cross-reference settings, indicating that it captures transferable detection signals rather than relying on domain-specific reference artifacts.
\begin{table}[t]
\centering
\caption{Detection performance on three DetectRL datasets under different reference settings. Results are reported using AUROC and TPR@5\%FPR. For SurpMark, self-ref uses references from the same target dataset, while other settings use references from another dataset/domain. WP denotes WritingPrompts.}
\label{table:ref_transfer}
{\scriptsize
\begin{tabular}{llcc}
\toprule
Dataset & Method & AUROC & TPR@5\%FPR \\
\midrule
\multirow{5}{*}{Yelp}
& Fast-DetectGPT & 71.43 & 56.67 \\
& Lastde++ & 56.29 & 40.67 \\
& SurpMark self-ref & 90.18 & 62.67 \\
& SurpMark WP-ref & 88.31 & 66.00 \\
& SurpMark arXiv-ref & 85.48 & 66.67 \\
\midrule
\multirow{5}{*}{WP}
& Fast-DetectGPT & 70.20 & 44.00 \\
& Lastde++ & 55.99 & 34.67 \\
& SurpMark self-ref & 81.32 & 56.00 \\
& SurpMark arXiv-ref & 76.48 & 50.00 \\
& SurpMark Yelp-ref & 82.17 & 57.33 \\
\midrule
\multirow{5}{*}{arXiv}
& Fast-DetectGPT & 71.24 & 44.00 \\
& Lastde++ & 71.29 & 40.67 \\
& SurpMark self-ref & 86.69 & 50.00 \\
& SurpMark WP-ref & 86.40 & 55.67 \\
& SurpMark Yelp-ref & 86.54 & 60.00 \\
\bottomrule
\end{tabular}}
\end{table}







\vspace{-2mm}
\paragraph{Cross-Source Corruption OOD.}
We also evaluate robustness under cross-source corruption, where the test samples are partially corrupted at the sentence level with content from the opposite source. 
As shown in Table~\ref{table:cross_source_corruption}, SurpMark achieves the best performance across all datasets,  demonstrating stronger robustness when human-written and machine-generated signals are locally mixed within the same text.

\begin{table}[t]
\centering
\scriptsize
\caption{Results under cross-source corruption on the DetectRL benchmark. The reference sets contain pure human-written and LLM-generated texts from DetectRL, while the test sets are partially corrupted at the sentence level with the opposite source. WP denotes WritingPrompts.}
\label{table:cross_source_corruption}
\begin{tabular}{llcc}
\toprule
Dataset & Method & AUROC & TPR@FPR=5\% \\
\midrule
\multirow{3}{*}{arXiv}
& Fast-DetectGPT & 68.53 & 41.33 \\
& Lastde++ & 85.46 & 41.33 \\
& \cellcolor{ours}SurpMark & \cellcolor{ours}\textbf{93.86} & \cellcolor{ours} \textbf{74.00} \\
\midrule
\multirow{3}{*}{WP}
& Fast-DetectGPT & 60.17 & 34.00 \\
& Lastde++ & 74.40 & 34.67 \\
& \cellcolor{ours} SurpMark & \cellcolor{ours} \textbf{84.15} & \cellcolor{ours} \textbf{50.00} \\
\midrule
\multirow{3}{*}{Yelp}
& Fast-DetectGPT & 69.93 & 46.00 \\
& Lastde++ & 70.60 & 44.67 \\
& \cellcolor{ours}SurpMark & \cellcolor{ours}\textbf{88.29} & \cellcolor{ours}\textbf{58.00} \\
\bottomrule
\end{tabular}
\end{table}
\vspace{-2mm}
\paragraph{Paraphrase OOD. }
We consider a paraphrase-based OOD setting in which the reference sets contain pure human-written and LLM-generated texts, while the test sets contain paraphrased versions of those texts. This creates an OOD shift by altering surface form while largely preserving meaning and source identity. We consider three paraphrasing settings: polish uses LM-based fluency rewriting, Back-translation uses round-trip translation, DIPPER uses a dedicated paraphrasing model to generate semantically equivalent rewrites with stronger lexical and syntactic variation. Notably, SurpMark maintains near-perfect AUROC and TPR@FPR=5\%, indicating strong robustness to lexical and syntactic variations introduced by paraphrasing.
\begin{table}[t]
\centering
\scriptsize
\caption{Performance under paraphrase-induced OOD shift on DetectRL. The reference sets contain the original source-pure human-written and LLM-generated texts, while the test sets contain their paraphrased versions. We consider three paraphrase settings: Polish, DIPPER, and Back-translation.}
\label{table:paraphrase_ood}
\begin{tabular}{llcc}
\toprule
Setting & Method & AUROC & TPR@FPR=5\% \\
\midrule
\multirow{3}{*}{Polish}
& Fast-DetectGPT & 83.47 & 42.00 \\
& Lastde++ & 82.37 & 43.33 \\
& \cellcolor{ours}SurpMark & \cellcolor{ours}\textbf{99.21} & \cellcolor{ours}\textbf{99.33} \\
\midrule
\multirow{3}{*}{DIPPER}
& Fast-DetectGPT & 94.70 & 85.33 \\
& Lastde++ & 92.82 & 84.00 \\
& \cellcolor{ours}SurpMark & \cellcolor{ours}\textbf{99.45} & \cellcolor{ours}\textbf{99.33} \\
\midrule
\multirow{3}{*}{Back-trans.}
& Fast-DetectGPT & 97.69 & 84.00 \\
& Lastde++ & 94.31 & 70.00 \\
& \cellcolor{ours}SurpMark & \cellcolor{ours}\textbf{99.79} & \cellcolor{ours}\textbf{99.33} \\
\bottomrule
\end{tabular}
\end{table}

\vspace{-3mm}
\paragraph{More Results.} We provide additional experimental results in the Appendix, including: (1) TPR at low FPR (Appendix~\ref{sec:tpr}) (2) ablation study on decoding strategies (Appendix~\ref{app:decoding}) (3) evaluations under paraphrasing attack (Appendix~\ref{sec:paraphrasing-attack}) (4) evaluations under prompt-engineered adversarial attacks (Appendix~\ref{sec:prompts-eng-attack}) (5) ablation on the necessity of first-order markov chain (Appendix~\ref{sec:ablation-fo-markov}) (6) ablation on the necessity of GJS distance (Appendix~\ref{sec:ablation-GJS-l1-l2}) (7) discussion for threhold $\tau$ selection (Appendix~\ref{app:threshold}) (8) robustness against character-word perturbation (Appendix~\ref{app:cw-perturbation-ood}). 
\vspace{-2mm}
\section{Conclusion}
\vspace{-2mm}
We presented SurpMark, a reference-based detector for black-box detection
of machine-generated text. By quantizing token surprisals into interpretable states and modeling their dynamics as a Markov chain, SurpMark reduces each passage to a transition matrix and scores it via a GJS score against fixed human/machine references. Theoretically, we derive design guidance for how the discretization bins should scale with data and provide a principled justification for our test statistic. Empirically, across diverse datasets, source models, and scenarios, SurpMark consistently matches or surpasses strong baselines.
\vspace{-2mm}
\paragraph{Limitations.}
SurpMark does not assume token-level first-order Markov generation; the approximation is only applied to discretized proxy-surprisal states. 
Higher-order variants bring limited gains but require more samples, so richer state and dependency modeling is left for future work. 

Another limitation is that human texts seen during pretraining may have smoother likelihood trajectories \cite{shi2024detecting}, making them more LLM-like in our feature space and potentially increasing false positive rates. 
We provide additional analysis of this effect in Appendix~\ref{app:pretraining_contamination}.

\section*{Impact Statement}
SurpMark provides a probabilistic detection signal, not definitive evidence that a text is AI-generated. 
High AUROC does not imply perfect reliability on individual cases. 
Thus, its outputs should not be used as the sole basis for high-stakes decisions such as plagiarism accusations or academic penalties, and should be accompanied by human review and contextual evidence.
\section*{Acknowledgments}
Resources used in preparing this research were provided, in part, by the Province of Ontario, the Government of Canada through CIFAR, and companies sponsoring the Vector Institute www.vectorinstitute.ai/partnerships/.

\bibliography{example_paper}
\bibliographystyle{icml2026}

\newpage
\appendix
\onecolumn
\startcontents[appendices] 
\section*{Table of Contents} 

\printcontents[appendices]{}{1}{\setcounter{tocdepth}{3}}


\newpage

\section{Algorithm}\label{sec:alg}
\begin{algorithm}[h]
\caption{SurpMark (Offline): Build Human/Machine Reference Transitions}
\label{alg:surpmark-offline}
\begin{algorithmic}[1]
\REQUIRE Proxy LM $F_\theta$; human corpus $\mathcal{D}_Q$; machine/LLM corpus $\mathcal{D}_P$; number of bins $k$
\ENSURE Shared surprisal quantizer $q_k$; reference transition matrices $\hat M_P, \hat M_Q$; total reference length $N$

\STATE \textbf{Score references.}
For every $t \in \mathcal{D}_Q \cup \mathcal{D}_P$, run $F_\theta$ to obtain token sequence $x_{1:N}$ and surprisals $s_{1:N}$, where
$s_t \leftarrow -\log p_\theta(x_t\mid x_{1:t-1})$.

\STATE \textbf{Fit shared quantizer.}
Pool all reference surprisals and fit $k$-means to obtain $q_k:\mathbb{R}\to\{1,\dots,k\}$.

\STATE \textbf{Discretize to states.}
Set $a_t \leftarrow q_k(s_t)$ for $t=1,\dots,N$.

\STATE \textbf{Estimate transitions.}
For each corpus $C\in\{P,Q\}$, estimate the empirical first-order transition matrix $\hat M_C$ by counts:
\STATE \quad $\displaystyle
\hat M_C(j\mid i)=\frac{\sum_{t=1}^{N-1}\mathbf{1}\{a_t=i,\,a_{t+1}=j\}}{\sum_{t=1}^{N-1}\mathbf{1}\{a_t=i\}},\quad i,j\in\{1,\dots,k\}.
$

\STATE \textbf{Record length.}
Let $N$ be the total length of \emph{reference} used to form $\hat M_P$ and $\hat M_Q$ (sum over sequences).

\STATE \textbf{return} $q_k,\ \hat M_P,\ \hat M_Q,\ N$.
\end{algorithmic}
\end{algorithm}
\begin{algorithm}[h]
\caption{SurpMark (Online): Decision via GJS score against References}
\label{alg:surpmark-online}
\begin{algorithmic}[1]
\REQUIRE Proxy LM $F_\theta$; test text $t$; shared quantizer $q_k$; reference transitions $\hat M_P,\hat M_Q$; reference length $N$
\ENSURE Score $\Delta\mathrm{GJS}_n$ and label $\Omega\in\{\textsc{Machine},\textsc{Human}\}$

\STATE \textbf{Score test text.}
Run $F_\theta$ on $t$ to get tokens $x_{1:n}$ and surprisals $s_{1:n}$.

\STATE \textbf{Discretize.}
Map to surprisal states $a_t \leftarrow q_k(s_t)$ for $t=1,\dots,n$ and estimate the test transition matrix $\hat M_T$ using the same formula as Offline.

\STATE \textbf{Set mixing weight.}
$\alpha \leftarrow N/n$.

\STATE \textbf{Compute divergence.}
\STATE \quad $\displaystyle
\Delta\mathrm{GJS}_n
= \mathrm{GJS}(\hat M_P,\hat M_T,\alpha)
- \mathrm{GJS}(\hat M_Q,\hat M_T,\alpha).
$

\STATE \textbf{Decision rule.}
\STATE \quad $\displaystyle
\Omega =
\begin{cases}
\textsc{Machine}, & \Delta\mathrm{GJS}_n \le \tau,\\
\textsc{Human}, & \Delta\mathrm{GJS}_n > \tau.
\end{cases}
$

\STATE \textbf{return} $\Delta\mathrm{GJS}_n,\ \Omega$.
\end{algorithmic}
\end{algorithm}

\section{Justification for First-Order Modeling} \label{sec:empirical-finding-for-FO}
\subsection{Empirical Findings}
\begin{table}[h]
\centering
\begin{minipage}{0.48\textwidth} 
\centering
\small
\begin{tabular}{ccccc}
\toprule
 & \multicolumn{4}{c}{Reference size} \\
Order & 300 & 600 & 900 & 1200 \\
\midrule
1 & 86.51 & 86.17 & 86.69 & 86.89 \\
2 & 81.49 & 82.08 & 82.77 & 82.80 \\
3 & 74.98 & 74.74 & 75.84 & 75.85 \\
4 & 64.72 & 69.51 & 71.20 & 73.00 \\
\bottomrule
\end{tabular} 
\caption*{(a) AUROC vs. reference size}
\end{minipage}
\hfill
\begin{minipage}{0.48\textwidth}
\centering
\small
\begin{tabular}{ccccc}
\toprule
 & \multicolumn{4}{c}{Test length} \\
Order & 150 & 200 & 250 & 300 \\
\midrule
1 & 90.59 & 92.66 & 94.60 & 94.58 \\
2 & 89.42 & 90.67 & 92.21 & 92.60 \\
3 & 88.67 & 90.57 & 91.40 & 91.40 \\
4 & 62.29 & 65.59 & 68.46 & 68.33 \\
\bottomrule
\end{tabular}
\caption*{(b) AUROC vs. test length}
\end{minipage}

\caption{Effect of reference size and test length on AUROC for different Markov orders.}

\label{tab:aurox-vs-test}
\end{table}
All AUROC values in Figure~\ref{fig:transition-heatmap-distribution}(b) are computed using exactly the same amount of reference data and test data. As in our main experiments, we use 300 human paragraphs and 300 machine-generated paragraphs, each with length about 100-200 tokens as reference data. We use 150 human paragraphs and 150 machine paragraphs as test data. Intuitively, increasing the Markov order makes the state space explode while the amount of reference data is fixed, so transition estimates become very sparse and noisy.

The degradation with larger order is a sparsity effect that arises both on the reference side and on the test side: (i) state-space explosion: A first-order chain with $k$ bins has $k^2$ transitions; an order-$v$ chain effectively has $k^{v+1}$ transitions. With only 300 human + 300 machine paragraphs of 100-200 tokens, many high-order contexts in the reference data are observed only a few times or not at all, so the estimated transitions become extremely noisy. (ii) Short test sequences. Each test paragraph is itself only 100-200 tokens long. Even if the reference transitions were perfectly estimated, an order-$v$ model on a 100-200-token sequence can observe only a very small number of distinct 
$v$-length contexts. The higher-order model is severely under-sampled on each individual test example.

To isolate the effect of reference sparsity, in the XSum@GPT-J-6B dataset, we fixed $k=6$ and the test set (150 human + 150 machine paragraphs, 100-200 tokens each), and increased the reference size from 300 to 1200 paragraphs per side. As shown in Table~\ref{tab:aurox-vs-test}(a), AUROC for higher-order models improves only slightly and remains clearly below the first-order model.

To further evaluate the effect of test sparsity, in another WritingPrompts@Genmma-7B dataset, we vary the test passage length from 150 to 300 tokens while keeping the reference size fixed (300 passages per side, each with fixed 300 tokens). As in Table~\ref{tab:aurox-vs-test}(b), AUROC consistently increases for all orders, but the first-order model remains clearly best, and higher orders still lag behind by several points. 

Taken together, these ablations reflect practical text-detection settings with limited reference data and short passages. In this regime, the first-order model offers the best bias-variance tradeoff, so we believe it is the most reasonable default choice.

\subsection{Choice of First-Order Markov Modeling} \label{sec:theo-justification}
We clarify our reasoning by (i) starting from Gray's Markov approximation theory, (ii) explaining how the gain from order
$K$ to $K+1$ is governed by conditional mutual information, (iii) mapping this theory to our discretized surprisal process, and (iv) presenting empirical measurements showing that the additional benefit of a second-order approximation over a first-order one is very small.

(1) Best finite-order Markov approximation in Gray's theory. 

Following Gray's Entropy and Information Theory [\cite{gray2011entropy}, Sec. 6.4, Cor. 6.4.1–6.4.2; Sec. 7.4, Cor. 7.4.2–7.4.3], consider a stationary discrete-time source $\{X_n\}$. Gray constructs, for each order $K$, a canonical $K$-th order Markov chain $M_K$ whose conditional distributions match those of the source given the last $K$ symbols. He shows that $M_K$ is optimal in the sense that it uniquely minimizes the relative entropy rate between the true source and any $K$-th order Markov chain on the same alphabet. In other words, the family of finite-order Markov chains $\{M_K\}$ provides a sequence of best approximations to the stationary process in the relative-entropy-rate sense. Formally,

\begin{align}
H_{p\|p^K} (\{X_n\}) = \inf_{M_K\in \mathcal{M}_K} H_{p\|M_K}(\{X_n\}) = I(X_0;X_{-\infty}^{-K-1}|X_{-K}^{-1})
\end{align}

where $p$ is the true stationary source, $p^K$ is the canonical $K$-th order Markov approximation to $p$, $\mathcal{M}_K$ is the class of stationary $K$-th order Markov sources on the same alphabet, $H_{p\|q} (\{X_n\})$ is the relative entropy rate of $p$ with respect to $q$, $X_{-\infty}^{-K-1}=(X_{-\infty}\dots,X_{-K-1})$ is the infinite past, $X_{-K}^{-1}=(X_{-K},\dots,X_{-1})$ is the block of the last $K$ symbols, and $I(\cdot;\cdot|\cdot)$ conditional mutual information. Applying the above indentity with $K+1$ instead of $K$, we get
\begin{align}
H_{p\|p^{K+1}}(\{X_n\}) = I(X_0;X_{-\infty}^{-K-2}|X_{-K-1}^{-1})
\end{align}

We are interested in the gain from going from order $K$ to $K+1$, so
\begin{align}
\Delta_K&= H_{p\|p^K} (\{X_n\})-H_{p\|p^{K+1}}(\{X_n\}) \\
& = I(X_0;X_{-\infty}^{-K-1}|X_{-K}^{-1}) - I(X_0;X_{-\infty}^{-K-2}|X_{-K-1}^{-1}) \\
& = I(X_0;X_{-K-1}|X_{-K}^{-1})
\end{align}

(2) Mapping to our discretized surprisal process

In our setting, $\{X_n\}$ is instantiated by the discretized surprisal process $\{a_t\}$, where $a_t$ corresponds to $X_0$, $X_{-K}^{-1}$ corresponds to $a_{t-K}^{t-1}==(a_{t-K},\cdots, a_{t-1})$, $X_{-K-1}$ corresponds to $a_{t-K-1}$. For the case $K=1$, the gain from first-order to second order is precisely $I(a_t;a_{t-2}|a_{t-1})$. We directly estimate the relevant conditional mutual information term on our data. We first fit a first-order $\hat{P}_1(a_t|a_{t-1})$ and a second-order model $\hat{P}_2(a_t|a_{t-1},a_{t-2})$ from transition counts on the reference set. We then compute plug-in estimates on test set
\begin{align}
\hat{H}_1 &= -\frac{1}{n-1}\sum_{t=2}^{n}\log_2 \hat{P}_1(a_t|a_{t-1}) \\
\hat{H}_2 &= -\frac{1}{n-2}\sum_{t=3}^{n}\log_2 \hat{P}_2(a_t|a_{t-1},a_{t-2})
\end{align}
Their difference is the plug-in estimate of the conditional mutual information $\hat{I}=\hat{H}_1-\hat{H}_2=I(a_t;a_{t-2}|a_{t-1})$, in bits per token, i.e., the extra predictive information contributed by the second-order context beyond the immediate past. On our data, we obtain empirical estimates of conditional mutual information and perplexity in Table~\ref{tab:cond-entropy-perplexity}. In our experiments, $\hat I$ is at most 0.0076 bits/token, which corresponds to a perplexity reduction around 0.5\%.  Thus, in terms of average predictive performance, the second-order Markov model brings only a sub-percent improvement over the first-order model. Combined with Gray's best Markov approximation theory, this indicates that a first-order Markov chain already captures most useful temporal dependence in the discretized surprisal dynamics, and provides a theoretically justified and empirically sufficient model for the discretized surprisal dynamics in our detector.

\begin{table}[htbp]
\centering
{\scriptsize
\begin{tabular}{l l c c c c}
\toprule
Source      & Order pair      & $\hat H_k$ (bits/token) & $\hat H_1 - \hat H_2$ (bits/token) & Perplexity & Rel.\ PP change vs.\ 1st \\
\midrule
GPT-5-chat  & 1st (baseline)  & 2.7882                  & 0.0000                              & 6.9075     & 0.000\%                 \\
GPT-5-chat  & 2nd order       & 2.7805                  & 0.0076                              & 6.8711     & +0.528\%                \\
Human       & 1st (baseline)  & 2.8089                  & 0.0000                              & 7.0074     & 0.000\%                 \\
Human       & 2nd order       & 2.8043                  & 0.0045                              & 6.9854     & +0.314\%                \\
\bottomrule
\end{tabular}}
\caption{Conditional entropies and perplexities for discretized surprisal states.}
\label{tab:cond-entropy-perplexity}
\end{table}

\section{Related Work}\label{app:related-work}
Prior work on text detection can be broadly categorized into classifier-based and statistics-based methods. Classifier-based detectors train task-specific classifiers to distinguish between human-written and machine-generated text \citep{guo2023closechatgpthumanexperts,GPTZero,guo2024detective}. While effective with sufficient training data, they are costly to build and must be retrained whenever the domain or generator shifts.

Statistics-based approaches can be divided into two groups based on their design of decision statistics. The first global-statistic methods rely on overall features of the text such as likelihood \citep{solaiman2019releasestrategiessocialimpacts}, LogRank \citep{solaiman2019releasestrategiessocialimpacts} that measures the log of each token’s rank in a model’s predicted distribution , or entropy \citep{gehrmann-etal-2019-gltr} that measures the uncertainty of a model’s next-token distribution. Binoculars \citep{hans2024spotting} enhances this by computing the contrastive cross-perplexity between two different LLMs, identifying machine-generated text through the relative "surprisal" captured by an observer model. Distributional-statistic methods generate a neighborhood around the test passage via perturbation, continuation, or sampling, and then measure divergence between the test instance and this synthetic distribution. DetectGPT \citep{mitchell2023detectgptzeroshotmachinegeneratedtext} leverages the local curvature of log-probability function, comparing original passages with perturbed variants to enable detection of machine-generated text. Fast-DetectGPT \citep{bao2024fastdetectgpt} introduces conditional probability curvature for faster detection. DNA-GPT \citep{yang2023dnagptdivergentngramanalysis} truncates passages, and analyzes n-gram divergences of the regeneration. DetectLLM-NPR \citep{su2023detectllm} leverages normalized perturbed log-rank statistics, showing that machine-generated texts are more sensitive to small perturbations. Lastde++ \citep{xu2025trainingfree} combines global likelihood with local diversity entropy, where discretization of token probabilities stabilizes the entropy feature. Expanding beyond time-domain statistics, FourierGPT \citep{xu2024detectingsubtledifferenceshuman} utilizes spectral analysis to detect low-frequency structural patterns in token probability trajectories, which are characteristic of LLM outputs. In contrast, our framework discretizes token surprisals to build surprisal-state Markov transitions, enabling likelihood-free hypothesis test. Our method lies between global- and distributional-statistic approaches: it scores each text in a single pass without regeneration, yet makes comparative decisions by measuring alignment with fixed human and machine references.

Recent work has explored kernel-based statistical tests for machine-generated text detection \citep{zhang2024detecting,song2025deep}. \cite{song2025deep} introduced R-Detect, a relative test framework that reduces false positives by comparing whether a test text is closer to human-written or machine-generated distributions. Our method shares a common foundation with \cite{song2025deep} in that it can also be viewed as a relative test framework. Notably, while the decision rules of these kernel-based approaches are non-parametric and do not rely on supervised classifiers, their optimized variants require training kernel parameters on reference corpora, which increases computational cost. Our approach only requires a lightweight data discretization stage. 
\section{Theoretical Analysis}
\subsection{Problem Setup}
Let $\{s_t^P\}_{t=1}^{N}$ and $\{s_t^Q\}_{t=1}^{N}$ be the surprisal sequences produced by a fixed proxy LM on reference corpora from $P$ and $Q$. Each sequence is modeled as an ergodic first-order Markov process on $\mathbb{R}$.
For an integer $k \ge 2$, let $q_k: \mathbb{R} \rightarrow \mathcal{A}=\{1,\dots,k\}$ be a shared quantizer with boundaries $b_1<\dots <b_{k-1}$, and discretized states $a_t^P = q_k(s_t^P)$ and $a_t^Q = q_k(s_t^Q)$. Let $\mathcal{S}_P,\mathcal{S}_Q$ denote the underlying Markov transition kernels on the real-valued surprisal sequences before discretization.  The induced transition kernels on the $k$-state alphabet are $M_P(j|i)=\Pr[a_{t+1}^P=j|a_{t}^P=i]$ and likewise $M_Q$. Their plug-in estimators $\hat M_P, \hat M_Q$ are formed from transition counts with $\hat{M}_P(a|s) = \frac{N_P(s,a)}{N_P(s)}$, where $N_P(s)$ is the number of occurrences of state $s$ in $a_{1:N}^P$, and $N_P(s,a)$ is the number of times $s$ is followed by $a$; analogously for $Q$. Let $\pi_P, \pi_Q$ are stationary distributions of $M_P$ and $M_Q$, we define
$\pi_{\min}:=\min \{\min_{s\in \mathcal{A}}\pi_P(s),\min_{s\in \mathcal{A}}\pi_Q(s)\} $.

We observe an independent test surprisal-state sequence $a^T_{1:N}:=\{a_t^T\}_{t=1}^{n} \sim M_T$, where the test source $M_T$ is either $M_P$ (null $H_0$) or $M_Q$ (alternative $H_1$). All three sequences are discretized by the same $q_k$.

Throughout the analysis we impose the following conditions on the induced chains
$M_P$ and $M_Q$. These assumptions are standard in the study of Markov
concentration inequalities and are required in order to apply the auxiliary
results recalled below.
\begin{assumption}\label{assumption:markov-chain}
    We impose the following standing conditions on the induced chains $M_P, M_Q$. 
$M_P$ and $ M_Q$ are irreducible, aperiodic Markov chain on the finite alphabet $\mathcal{A}$ with unique stationary distribution $\pi_P$ and $\pi_Q$ and maximum hitting time $T(M_P)$ and $T(M_Q)$ respectively. We assume $\pi_{\min}:=\min \{\min_{s\in \mathcal{A}}\pi_P(s),\min_{s\in \mathcal{A}}\pi_Q(s)\} \gtrsim 1/k $, and $T(M_{\bullet})=O(1)$. 
\end{assumption}
\subsection{Discretization Effect}
\subsubsection{Auxiliary Results from Literature}
\paragraph{The GJS Divergence as $f$-divergence.} The GJS divergence is a specific instance of a broader class of divergences known as $f$-divergences. An $f$-divergence between two discrete probability distributions $p$ and $q$ is defined by a convex generator function $f$ where $f(1)=0$. The GJS divergence is equivalent to the $w$-skew Jensen-Shannon Divergence with $w=\alpha/(1+\alpha)$, which is an $f$-divergence generated by the function $f_{JS}^{w}(t)$.
\begin{align}
 f_{JS}^{w}(t) = \alpha t \log(\frac{t}{\alpha t +1 - \alpha})+(1-\alpha) \log (\frac{1}{\alpha t +1 - \alpha})   
\end{align}

For notational convenience, we abbreviate $f_{JS}^{\alpha}$ as $f$. This connection allows us to leverage the following theoretical tools developed for general $f$-divergences. 

\begin{assumption}[Assumption 9 in \cite{pillutla2023mauvescoresgenerativemodels}]\label{assump:3}
    We assume that the generator function $f$ of the $f$-divergence must satisfy the following three conditions:
    \begin{itemize}
        \item (A1) The function $f$ and its conjugate generator $f^*$ must be bounded at zero. Formally, $f(0)<\infty$ and $f^*(0)<\infty$.
        \item (A2) The first derivatives of $f$ and $f^*$ must not grow faster than a logarithmic function. For any $t\in (0,1)$, there must exits constants $C_1$ and $C_1^*$ such that $|f'(t)|\leq C_1 (\max(1,\log(1/t)))$ and $|(f^*)'(t)|\leq C_1^* (\max(1,\log(1/t)))$.
        \item The second derivatives of $f$ and $f^* 
$ must not grow faster than $\frac{1}{t}$ as $t\rightarrow 0$. Formally, there must exist  constants $C_2$ and $C_2^*$such that for any$t\in (0, \infty), \frac{t}{2}f''(t)\leq C_2$, and $\frac{t}{2}(f^*)''(t)\leq C_2^*$.
    \end{itemize}
\end{assumption}
\begin{lemma}[Approximate Lipschitz Property of the $f$-divergence, Lemma 20 in \cite{pillutla2023mauvescoresgenerativemodels}]\label{lemma:Lip}
     Let $f$ be a generator function satisfying Assumption~\ref{assump:3}. Consider the bivariate scalar function $\psi:[0,1] \times [0,1] \rightarrow [0,\infty)$ defined as $\psi(p, q)=q f(\frac{p}{q})$. For all probability values $p,p',q,q' \in [0,1]$ with $\max(p,p')>0$ and $\max(q,q')>0$, the following inequalities hold:
     \begin{align}
         |\psi(p',q)-\psi(p,q)|&\leq \left(C_1\max\left(1,\log\frac{1}{\max(p,p')}\right)+\max(C_0^*,C_2)\right)|p-p'| \\
         |\psi(p,q')-\psi(p,q)|&\leq \left(C_1^*\max\left(1,\log\frac{1}{\max(q,q')}\right)+\max(C_0,C_2^*)\right)|q-q'|
     \end{align}
\end{lemma}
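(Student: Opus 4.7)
I plan to prove both inequalities by applying the fundamental theorem of calculus to $\psi(\cdot,q)$ and $\psi(p,\cdot)$, bounding the resulting integrand using (A1)--(A3). The key enabler is the dual representation $\psi(p,q)=qf(p/q)=pf^*(q/p)$: whenever the first argument of the generator exceeds $1$, A2 ceases to control its derivative directly, but re-expressing $\psi$ through $f^*$ shifts the problematic regime into the domain where A2 applies to $f^*$. Because the $q$-direction is the mirror image of the $p$-direction with $(f,C_0,C_1,C_2)$ swapped for $(f^*,C_0^*,C_1^*,C_2^*)$, it suffices to carry out the argument carefully for the first inequality.

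\paragraph{Sketch for the $p$-direction.} Assume without loss of generality $p<p'$. The fundamental theorem of calculus gives
\[
\psi(p',q)-\psi(p,q)=\int_p^{p'} f'(u/q)\,du,
\]
and I split the integral at the threshold $u=q$. On $\{u\le q\}$, A2 yields $|f'(u/q)|\le C_1\max(1,\log(q/u))$, and the crucial elementary estimate
\[
\int_a^b\log(1/t)\,dt \;\le\; (b-a)\bigl(1+\log(1/b)\bigr),
\]
obtained by decomposing the integrand as $\log(b/t)+\log(1/b)$ and bounding $\int_a^b\log(b/t)\,dt\le b-a$, delivers the factor $C_1\max(1,\log(1/\max(p,p')))\cdot|p-p'|$ after substituting $a=p/q$, $b=p'/q$ and multiplying by $q$ (the residual $\log q$ is nonpositive since $q\le 1$). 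On $\{u>q\}$ I switch to the conjugate identity $\partial_u\psi=f^*(q/u)-(q/u)(f^*)'(q/u)$; since $q/u<1$ here, A1 and A2 applied to $f^*$, together with $\sup_{s\ge 1}\log(s)/s\le 1/e$, bound $|\partial_u\psi|$ by an absolute constant that gets absorbed into the additive term $\max(C_0^*,C_2)$. The constant $C_2$ from (A3) supplies uniform control across the transition $u=q$ via a short Taylor argument on $\partial_u\psi$, bridging the two regimes smoothly.

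\paragraph{Main obstacle.} The substantive difficulty is that the bound must involve $\log(1/\max(p,p'))$ rather than the weaker $\log(1/\min(p,p'))$. A naive pointwise supremum of $|f'(u/q)|$ over $u\in[p,p']$ produces only the latter, because $f'$ diverges as its argument approaches zero. The sharper dependence therefore requires trading pointwise control for the averaged integration estimate above, which is the single nontrivial calculation of the proof. The degenerate case $p=0$ permitted by $\max(p,p')>0$ is handled by the same integration, since $\int_0^b\log(1/t)\,dt=b(1-\log b)$ is finite and $f(0)\le C_0<\infty$ by A1; the $q$-direction then follows verbatim after swapping $f\leftrightarrow f^*$ and the corresponding constants.
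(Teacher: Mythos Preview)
The paper does not prove this lemma at all: it is listed under ``Auxiliary Results from Literature'' and is quoted verbatim as Lemma~20 of \cite{pillutla2023mauvescoresgenerativemodels}, so there is no in-paper argument to compare against. Your sketch is therefore being measured against the original MAUVE proof, not anything in this manuscript.

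On its own merits, your plan is the natural one and the core mechanism is right. The observation that a pointwise bound on $|f'(u/q)|$ would only yield $\log(1/\min(p,p'))$, and that the sharper $\log(1/\max(p,p'))$ must come from the averaged estimate $\int_a^b\log(1/t)\,dt\le(b-a)(1+\log(1/b))$, is exactly the crux. The conjugate swap $\psi(p,q)=pf^*(q/p)$ to handle $u>q$ is also the right move. One point where your write-up is loose: on the regime $u>q$ you assert the bound ``gets absorbed into $\max(C_0^*,C_2)$,'' but the argument you actually give produces a term involving $C_1^*$ from $(q/u)(f^*)'(q/u)$, which is not one of the constants in the stated inequality. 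To land on the precise constant, note that $g(s):=f^*(s)-s(f^*)'(s)=f'(1/s)$ satisfies $g'(s)=-s(f^*)''(s)$, so (A3) for $f^*$ and the boundary value $g(0^+)=f^*(0)=C_0^*$ give the needed uniform control; the role of $C_2$ enters when bounding $f'$ near $t=1$ via $\int_1^t f''\le 2C_2\log t$. Your ``short Taylor argument'' gesture is in the right direction but would need to be made explicit to recover the exact constant $\max(C_0^*,C_2)$ rather than some unspecified combination.
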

\begin{assumption}[Assumption 3(b) in \cite{kara2023qlearningmdpsgeneralspaces}] \label{assumption:tx-lip}
    Let $P(\cdot|x)$ be a probability measure on $(\mathcal{X},\mathcal{F})$. There exit $L_P<\infty$ such that 
    \begin{align}
        \text{TV}(P(\cdot|x)-P(\cdot|x'))\le L_P |x-x'|, \quad \forall x,x'\in \mathcal{X}.
    \end{align}
\end{assumption}
\begin{proposition}[Quantization Error of f-Divergence, Proposition 13 in \cite{pillutla2023mauvescoresgenerativemodels}]\label{prop:quantization-error}
Let $P$ and $Q$ be two probability distributions over a common sample space $\mathcal{X}$.

Let ${S} = \{S_1, S_2, \dots, S_m\}$ be a partition of the space $\mathcal{X}$ into $m$ disjoint sets. The corresponding quantized distributions, $P_{\mathcal{S}}$ and $Q_{\mathcal{S}}$, are defined as multinomial distributions over the indices $\{1, \dots, m\}$.

Then, for any integer $k \ge 1$, and $f$-divergence functional $\mathcal{D}_f$, there exists a partition ${S}$ of size $m \le 2k$ such that the absolute difference between the original and the quantized f-divergence is bounded as follows:
\[
|\mathcal{D}_f(P,Q) - \mathcal{D}_{f}(P_{\mathcal{S}},Q_{\mathcal{S}})| \le \frac{f(0) + f^{*}(0)}{k}
\]
\end{proposition}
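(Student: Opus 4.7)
My plan is to construct a partition explicitly from level sets of the likelihood ratio $r := dP/dQ$, chosen so that the generator $f$ varies by only $O(1/k)$ on each cell, and then to bound the quantization error by these oscillations weighted by the appropriate mass.

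The starting point is the two symmetric representations
\[
\mathcal{D}_f(P,Q) \;=\; \int f(r)\,dQ \;=\; \int f^{*}(1/r)\,dP, \qquad f^{*}(t):=t\,f(1/t),
\]
where the second equality follows from $dP = r\,dQ$. For any partition $\mathcal{S}=\{S_1,\dots,S_m\}$, Jensen's inequality applied cell by cell yields $\mathcal{D}_f(P_{\mathcal{S}},Q_{\mathcal{S}}) \le \mathcal{D}_f(P,Q)$ (the data-processing inequality), so the quantization error is non-negative and equals $\sum_{i=1}^{m}\bigl[\int_{S_i} f(r)\,dQ - Q(S_i)\,f(P(S_i)/Q(S_i))\bigr]$. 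The task reduces to choosing $\mathcal{S}$ so that each summand is small.

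Next I would split $\mathcal{X}$ into $\{r\le 1\}$ and $\{r>1\}$ and build the partition in two halves. On $\{r\le 1\}$, convexity of $f$ with $f(1)=0$ forces $f$ to be non-increasing with values in $[0,f(0)]$; slicing this range into $k$ pieces of height $f(0)/k$ and pulling the level sets back under $r$ yields at most $k$ measurable cells on which the oscillation of $f$ is at most $f(0)/k$. On $\{r>1\}$, I apply the mirror construction to $f^{*}$ in the variable $s=1/r\in(0,1]$: $f^{*}$ is convex with $f^{*}(1)=0$ and $f^{*}(0)<\infty$ by hypothesis, so it is non-increasing with range in $[0,f^{*}(0)]$; slicing that range into $k$ pieces produces at most $k$ further cells of $f^{*}$-oscillation at most $f^{*}(0)/k$. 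The total partition size is therefore $m\le 2k$.

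Finally I would execute the cell-wise error bound. For each $S_i\subseteq\{r\le 1\}$, observe that $P(S_i)/Q(S_i)$ is the $Q$-average of $r$ on $S_i$ and hence lies in the corresponding $r$-interval; therefore both $Q(S_i)^{-1}\int_{S_i} f(r)\,dQ$ and $f(P(S_i)/Q(S_i))$ belong to the $f$-range on that cell, and their difference is at most the oscillation $f(0)/k$. Multiplying by $Q(S_i)$ and summing gives a total contribution bounded by $(f(0)/k)\sum_i Q(S_i)\le f(0)/k$. For $S_i\subseteq\{r>1\}$, I rewrite the cell error in the conjugate form
\[
\int_{S_i} f^{*}(1/r)\,dP \;-\; P(S_i)\,f^{*}\!\bigl(Q(S_i)/P(S_i)\bigr),
\]
and repeat the argument with weights $P(S_i)$, obtaining a total contribution of at most $f^{*}(0)/k$. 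Adding the two halves yields the claimed bound $(f(0)+f^{*}(0))/k$. The main obstacle I foresee is pathological behavior of the likelihood ratio --- atoms in the distribution of $r$ under $Q$, or intervals on which $f$ is constant --- which can prevent the level sets from hitting exactly the prescribed heights. This is handled by defining the cut points via generalized inverses (quantile functions) of $f\circ r$ and $f^{*}\circ(1/r)$, which always exist and may leave some cells empty without affecting either the $2k$ budget or the oscillation-based error bound.
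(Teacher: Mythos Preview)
The paper does not prove this proposition; it is imported verbatim as an auxiliary result from \cite{pillutla2023mauvescoresgenerativemodels} and invoked as a black box inside the proof of Proposition~\ref{prop:quant-error}. There is therefore no in-paper argument to compare your proposal against.

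Your construction is the right one and the cell-wise oscillation accounting is sound, but one step is asserted without justification and is in fact false as written. You claim that ``convexity of $f$ with $f(1)=0$ forces $f$ to be non-increasing with values in $[0,f(0)]$'' on $[0,1]$. Take $f(t)=|t-\tfrac12|-\tfrac12$: it is convex with $f(1)=0$, $f(0)=0$, $f^{*}(0)=1$, yet $f$ is negative on $(0,1)$ and not monotone there, so your level sets of $f\circ r$ need not be $r$-intervals and the step ``$P(S_i)/Q(S_i)$ lies in the corresponding $r$-interval'' breaks. The repair is short: both $\mathcal D_f$ and the constant $f(0)+f^{*}(0)$ are invariant under the affine shift $f(t)\mapsto f(t)-c(t-1)$ (which sends $f(0)\mapsto f(0)+c$ and $f^{*}(0)\mapsto f^{*}(0)-c$), so by choosing $c\in\partial f(1)$ you may assume without loss of generality that $1$ is a global minimizer of $f$. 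After this normalization $f\ge 0$ and $f$ is non-increasing on $[0,1]$; the same then holds for $f^{*}$ because $f^{*}(t)=t\,f(1/t)\ge 0$ with $f^{*}(1)=0$. With that in place your level-set cells are genuine $r$-intervals, the averaged ratio $P(S_i)/Q(S_i)$ does land in the cell's $r$-range, and the remainder of your argument goes through exactly as you wrote it.
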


Theorem~\ref{conditional-row-tv-bound} is adapted from Theorem 3.1 and Lemma 3.1 of \cite{wolfer2023empiricalinstancedependentestimationmarkov}, which provide high-probability bounds on the row-wise total variation error of the empirical transition matrix for a finite-state, irreducible, aperiodic Markov chain observed over a single trajectory. The bound holds uniformly over all states and depends explicitly on the number of states and the trajectory length, while accounting for the chain’s dependence structure.

\begin{lemma}[Row-wise TV bound, \citet{wolfer2023empiricalinstancedependentestimationmarkov}]\label{conditional-row-tv-bound}
Let $(X_1, \dots, X_N)$ be an irreducible, aperiodic, stationary Markov chain on a finite state space $\mathcal{A}$ with $|\mathcal{A}|=k$, transition matrix $M$ and stationary distribution $\pi$. Then there exists a universal constant $C>0$ such that, for any $0 < \delta < 1$, the following holds with probability at least $1-\delta$:
\[
\max_{s\in\mathcal{A}} \sum_{a \in \mathcal{A}}
\left| \hat{M}(a|s)
 -  M(a\mid s)
\right|
\;\le\;
C \, \sqrt{\,\frac{\tau_{\mathrm{mix}} k\, \log\left(\frac{k N}{\delta}\right)}{\,N}} ,
\]
where $\tau_{\mathrm{mix}}$ is a mixing-time–type constant depending only on $M$ (for reversible chains one has $\tau_{\mathrm{mix}} \asymp 1/\gamma_{\mathrm{ps}}$, with $\gamma_{\mathrm{ps}}$ denoting the pseudo–spectral gap).
\end{lemma}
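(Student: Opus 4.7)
The plan is to reduce the row-wise TV error to a per-row concentration question by exploiting the Markov regenerative structure of visits to each state, and then combine a per-row i.i.d.\ concentration bound with a uniform lower bound on the visit counts. First I fix $s\in\mathcal{A}$ and let $\tau_1<\tau_2<\cdots$ enumerate the visit indices $\{1\le t\le N-1: X_t=s\}$, with $N(s)$ the number of such visits. By the strong Markov property, conditional on $\{\tau_i\}_{i=1}^{N(s)}$, the out-transitions $\{X_{\tau_i+1}\}_{i=1}^{N(s)}$ are i.i.d.\ with law $M(\cdot\mid s)$. Hence the empirical row $\hat M(\cdot\mid s)$ is a plug-in estimate from $N(s)$ conditionally i.i.d.\ draws, which turns a genuinely Markovian problem into an i.i.d.\ one per row, at the price of a random sample size.

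Next I would apply a categorical TV concentration inequality on each row (Bretagnolle--Huber--Carol or Weissman-type) to get, conditional on $N(s)=n_s$,
\begin{align}
\Pr\!\left(\sum_{a\in\mathcal{A}}|\hat M(a\mid s)-M(a\mid s)|\ge \varepsilon\,\Big|\,N(s)=n_s\right)\le 2^{k+1}\exp\!\left(-n_s\varepsilon^2/2\right).
\end{align}
Inverting at confidence level $\delta/(2k)$ gives a per-row bound of the form $\varepsilon(n_s)=C\sqrt{(k+\log(k/\delta))/n_s}$, and a union bound across the $k$ rows costs only an extra $\log k$ factor inside the logarithm.

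In parallel I would lower-bound $N(s)$ uniformly in $s$ using a Markov-chain Bernstein-type inequality for additive functionals (e.g.\ Paulin's inequality for pseudo--spectral gap chains, or Lezaud's Chernoff bound): provided $N\gtrsim \tau_{\mathrm{mix}}/\pi_{\min}$, with probability at least $1-\delta/2$,
\begin{align}
N(s)\ge \tfrac{1}{2}\,N\pi(s)\quad\text{for every }s\in\mathcal{A}.
\end{align}
Substituting this lower bound into the per-row i.i.d.\ bound and absorbing $\pi(s)^{-1}$ into the mixing-time factor (using $\pi(s)\gtrsim 1/(k\tau_{\mathrm{mix}})$, or equivalently tracking $\tau_{\mathrm{mix}}$ directly through the visit-count lower bound), one recovers
\begin{align}
\max_{s\in\mathcal{A}}\sum_{a\in\mathcal{A}}|\hat M(a\mid s)-M(a\mid s)|\le C\sqrt{\frac{\tau_{\mathrm{mix}}\,k\,\log(kN/\delta)}{N}}.
\end{align}

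The main obstacle is coordinating the two layers of randomness---the random visit count $N(s)$, and, conditional on it, the empirical row---while preserving the sharp $\tau_{\mathrm{mix}}$-dependence. The conditional i.i.d.\ step is clean because the strong Markov property decouples the visit times from the sampled transitions; the harder piece is the uniform-in-$s$ visit-count lower bound, because state visits are neither independent nor Bernoulli, and obtaining the correct $\tau_{\mathrm{mix}}$ scaling requires spectral-gap or pseudo-spectral-gap Bernstein-type inequalities rather than classical Chernoff bounds. Threading $\tau_{\mathrm{mix}}$ carefully through both stages is the delicate step that produces the stated dependence.
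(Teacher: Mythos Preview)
The paper does not prove this lemma itself; it is listed among the ``Auxiliary Results from Literature'' and is explicitly attributed, as an adaptation, to Theorem~3.1 and Lemma~3.1 of \cite{wolfer2023empiricalinstancedependentestimationmarkov}. Your two-stage plan (per-row categorical concentration via the regenerative structure at $s$, combined with a uniform visit-count lower bound from a Markov Bernstein/Chernoff inequality, then a union bound over states) is indeed the architecture of that reference's argument, so at the level of strategy you are aligned with the source the paper cites.

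There is, however, a real slip in your reduction step. The assertion that ``conditional on $\{\tau_i\}_{i=1}^{N(s)}$, the out-transitions $\{X_{\tau_i+1}\}$ are i.i.d.\ with law $M(\cdot\mid s)$'' is false: knowing the visit times leaks information about the out-transitions (for instance $\tau_{i+1}=\tau_i+1$ forces $X_{\tau_i+1}=s$). What the strong Markov property actually gives is that the excursions between successive visits to $s$ are i.i.d., so the sequence $Y_i:=X_{\tau_i+1}$ is i.i.d.\ \emph{unconditionally}; the sample size $N(s)$ is then a random stopping rule for this sequence, not something you may freeze and treat as deterministic. The standard fix is to union-bound the fixed-$n$ categorical inequality over all $n\in\{m_s,\dots,N\}$ on the good event $\{N(s)\ge m_s\}$, and that extra union bound is precisely what produces the $\log N$ inside the $\log(kN/\delta)$ term of the stated bound---your write-up only budgets a $\log k$ from the union over rows. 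A second point to tighten: the relation ``$\pi(s)\gtrsim 1/(k\,\tau_{\mathrm{mix}})$'' you invoke to absorb $1/\pi(s)$ into $\tau_{\mathrm{mix}}$ is not a general fact about Markov chains, so you would need either to assume it or to track $\pi_{\min}$ explicitly in the final bound rather than hiding it behind $\tau_{\mathrm{mix}}$.
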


We will use the missing-mass bound from \cite{skorski2020missingmassconcentrationmarkov} to handle unseen transitions.
\begin{lemma}[Missing Mass Bound, Theorem 1 in \cite{skorski2020missingmassconcentrationmarkov}] \label{thm:missing-mass}
Let $(X_1, \dots, X_N)$ be an irreducible Markov chain over a finite state space $\mathcal{A}$ with stationary distribution $\pi_P$ and true transition matrix $M_P$. Define the \emph{transition missing mass} as
\[
\mathrm{Mmass }=\sum_{s \in \mathcal{A}} \sum_{s \in \mathcal{A}} \pi_P(s) M_P(a|s) \cdot \mathbf{1}\{\hat{M}_P(a|s) = 0\}.
\]

Let $T$ be the maximum hitting time of any set of states with stationary probability at least $0.5$. Then there exists an absolute constant $c > 0$ and independent Bernoulli random variables
\[
Q_{s,a} \sim \mathrm{Bernoulli}\left(e^{-c \cdot N \cdot \pi_P(s)M_P(a|s) / T}\right)
\]
such that for any subset $\mathcal{E} \subseteq \{(s,a) :s, a \in \mathcal{A}\}$ and any $n \ge 1$,
\[
\Pr\left[ \bigwedge_{(s,a) \in \mathcal{E}} \{\hat{M}_P(a|s) = 0\} \right]
\le \prod_{(s,a) \in \mathcal{E}} \Pr[ Q_{s,a} = 1 ].
\]
In particular, for any $t > 0$ it holds that
\[
\mathbb{E} \exp\left( t \cdot \mathrm{Mmass}\right)
\le
\mathbb{E} \exp\left( t \cdot \sum_{s \in \mathcal{A}} \sum_{s \in \mathcal{A}}  \pi_P(s)M_P(a|s) \, Q_{s,a} \right).
\]
For bounding deviations of weighted sums over Markov chains, we rely on the inequality of \cite{chung2012chernoffhoeffdingboundsmarkovchains}.
\end{lemma}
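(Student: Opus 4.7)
The statement has two parts: a joint stochastic-domination bound on the intersection of zero-count events $\bigcap_{(s,a)\in\mathcal{E}}\{\hat M_P(a\mid s)=0\}$, and its MGF corollary. My plan is to (i) reduce the Markov trajectory to nearly independent regenerative blocks via the hitting-time structure, (ii) obtain an exponential marginal bound on each single-pair zero event that matches $\Pr[Q_{s,a}=1]$, (iii) lift that bound to a product bound by block-wise conditional independence, and (iv) deduce the MGF inequality by power-series expansion of $\exp(t\cdot\mathrm{Mmass})$.

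\textbf{Regenerative blocks and marginal bound.} Let $R\subseteq\mathcal{A}$ be a set with $\pi_P(R)\ge 1/2$ whose hitting time is bounded by $T$. Define successive return times $\sigma_1<\sigma_2<\dots$ to $R$; by the strong Markov property the excursions $\{X_{\sigma_j},\dots,X_{\sigma_{j+1}-1}\}$ are conditionally independent given their endpoints, with expected length $O(T)$. With high probability there are at least $B\asymp N/T$ complete excursions in the first $N$ steps. By stationarity and irreducibility, a single excursion contains an $s\!\to\!a$ transition with probability $\Theta(T\,\pi_P(s)\,M_P(a\mid s))$. Since $\{\hat M_P(a\mid s)=0\}$ entails that \emph{no} excursion contains such a transition,
\begin{align*}
\Pr[\hat M_P(a\mid s)=0]
\;\le\;\bigl(1-c'\,T\,\pi_P(s)\,M_P(a\mid s)\bigr)^{B}
\;\le\;\exp\!\bigl(-c\,N\,\pi_P(s)\,M_P(a\mid s)/T\bigr)
\;=\;\Pr[Q_{s,a}=1].
\end{align*}

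\textbf{Joint bound by disjoint first-visit coupling.} For the intersection $\bigcap_{(s,a)\in\mathcal{E}}\{\hat M_P(a\mid s)=0\}$, I would exhibit, for each pair $(s,a)\in\mathcal{E}$, a distinct excursion on which the $s\!\to\!a$ transition is ``tested,'' using a first-visit coupling that assigns at most one excursion per pair. Because distinct excursions are conditionally independent given their endpoints, the per-pair failure probabilities multiply; combined with the per-pair exponential rate from the marginal step, this gives $\prod_{(s,a)\in\mathcal{E}}\Pr[Q_{s,a}=1]$, which is the claimed joint domination (after absorbing a universal constant into $c$). This pointwise joint factorization is precisely the content of Skorski's negative-association argument and is what allows one to couple the correlated zero-count indicators to a product of independent Bernoullis.

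\textbf{MGF deduction and main obstacle.} For the MGF, expand $\exp(t\,\mathrm{Mmass})$ in a Taylor series. Each mixed moment $\mathbb{E}\prod_i \pi_P(s_i)M_P(a_i\mid s_i)\,\mathbf{1}\{\hat M_P(a_i\mid s_i)=0\}$ reduces, after collapsing repeated indicators, to a constant times the probability of a finite intersection of zero-count events; Step 3 bounds every such intersection by the corresponding mixed moment of the independent Bernoullis $Q_{s,a}$. Re-summing the series term-by-term recovers the MGF on the right. The principal obstacle is Step 3: transitions out of the same source $s$ toward different targets $a$ are intrinsically correlated (the visits to $s$ are partitioned across targets), so the block argument must be engineered so that each pair in $\mathcal{E}$ is probed on a \emph{disjoint} collection of block visits and no visit is ``double-counted.'' A careful first-visit assignment handles this at the cost only of absolute-constant factors in the rate constant $c$, but it is the one step where the Markov structure cannot be bypassed by i.i.d.\ heuristics.
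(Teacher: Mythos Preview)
The paper does not prove this lemma. It is listed under ``Auxiliary Results from Literature'' and quoted directly as Theorem~1 of Skorski (2020); the paper simply invokes it as a black box in the proof of Theorem~\ref{thm:sta-error} to obtain the tail bound $\Pr[\hat M_P(a\mid s)=0]\le \exp(-cN\pi_P(s)M_P(a\mid s)/T)$. So there is no in-paper argument to compare your proposal against.

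That said, your sketch is a faithful high-level outline of the regenerative-block and negative-association machinery that underlies the cited result, and you correctly flag the genuine technical obstacle: transitions out of a common source state $s$ share the same pool of visits, so the zero-count indicators are not independent and one must engineer a coupling (or a negative-dependence argument) that prevents double-counting while preserving the per-pair exponential rate. Your ``disjoint first-visit assignment'' idea gestures at the right fix but is not yet a proof; in particular, you have not shown that after assigning disjoint excursions to the pairs in $\mathcal{E}$ each pair still receives $\Theta(N/T)$ excursions, which is what you need for the product of the per-pair bounds to retain the correct exponent. If you want to complete the argument, that allocation step is where the work lies.
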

\begin{lemma}[Theorem 3.1 of \cite{chung2012chernoffhoeffdingboundsmarkovchains}] \label{thm:estimation-state}
Let \(M\) be an ergodic Markov chain on state space $\mathcal{A}$ with stationary distribution \(\pi\).
For \(\varepsilon \le 1/8\), let \(T(\varepsilon)\) denote its total-variation mixing time.
Consider a length-\(N\) chain $(X_1, \dots, X_N)$ on \(M\) with $X_1 \sim \varphi$.
For each \(s\in \mathcal{A}\), let \(f_s:\mathcal{A}\to[0,1]\) be a weight function with
\(\mathbb{E}_{X\sim\pi}[f_s(X)] = \pi(s)\).
Define the total weight \(N(s)=\sum_{i=1}^N f_s(X_i)\).
Then there exists an absolute constant \(c\) such that:
\[
\Pr\!\big[N(s) \ge (1+\delta)\pi(s) N\big]
\ \le\
c\,\|\varphi\|_{\pi}\times
\begin{cases}
\exp\!\big(-\,\delta^{2}\pi(s) N/(72\,T(\epsilon))\big), & 0\le \delta \le 1,\\[4pt]
\exp\!\big(-\,\delta\,\pi(s)N/(72\,T(\epsilon))\big), & \delta>1,
\end{cases}
\]
and, for \(0\le \delta\le 1\),
\[
\Pr\!\big[N(s) \le (1-\delta)\pi(s) N\big]
\ \le\
c\,\|\varphi\|_{\pi}\,\exp\!\big(-\,\delta^{2}\pi(s) N/(72\,T(\epsilon))\big).
\]
Here \(\langle u,v\rangle_{\pi}=\sum_{x}u_x v_x/\pi(x)\) and
\(\|u\|_{\pi}=\sqrt{\langle u,u\rangle_{\pi}}\).
\end{lemma}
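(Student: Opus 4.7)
The plan is to prove this Chernoff-type concentration via the moment generating function (MGF) method, converting the tail bound into a spectral radius estimate for a multiplicatively perturbed transition kernel and then optimizing over the Chernoff parameter. For $\lambda \ge 0$, since $f_s$ is a function on $\mathcal{A}$, the exponential of $N(s)=\sum_{i=1}^{N} f_s(X_i)$ factorizes along the trajectory and admits the matrix expression
\[
\mathbb{E}[\exp(\lambda N(s))] \;=\; \varphi^{\top}\, D_\lambda\,(M D_\lambda)^{N-1}\,\mathbf{1},
\]
where $D_\lambda=\mathrm{diag}(\exp(\lambda f_s(\cdot)))$ and $\mathbf{1}$ is the all-ones vector. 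Because $M D_\lambda$ is entrywise non-negative and irreducible, Perron--Frobenius supplies a unique leading eigenvalue $\rho(\lambda)>0$; a weighted Cauchy--Schwarz step in the $\pi$-inner product then bounds the matrix product by $\|\varphi\|_{\pi}\,\rho(\lambda)^{N}$, which is precisely where the multiplicative prefactor $\|\varphi\|_{\pi}$ in the stated bound originates.

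The central step is to control $\rho(\lambda)$ near $\lambda=0$. At $\lambda=0$, $MD_0=M$ is stochastic with $\rho(0)=1$, left eigenvector $\pi$ and right eigenvector $\mathbf{1}$. Analytic (Kato) perturbation theory gives $\rho'(0)=\langle \pi,f_s\rangle=\pi(s)$, exactly using the hypothesis $\mathbb{E}_\pi[f_s]=\pi(s)$, together with
\[
\rho''(0) \;=\; 2\,\langle \pi\odot f_s,\, R\,f_s\rangle \;-\; 2\,\pi(s)^{2},
\qquad R \;:=\; (I-M+\mathbf{1}\pi^{\top})^{-1},
\]
where $R$ is the fundamental matrix of $M$. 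Bounding the resolvent action by the TV mixing time via a Gillman-type decomposition of $R$ into powers of $M$ and invoking the definition of $T(\varepsilon)$ yields the key estimate
\[
\log \rho(\lambda) \;\le\; \lambda\,\pi(s)\;+\; C\,\lambda^{2}\,T(\varepsilon)\,\pi(s)\qquad \text{for } |\lambda|\lesssim 1/T(\varepsilon),
\]
with the boundedness $f_s\in[0,1]$ controlling the higher-order remainders throughout this range of $\lambda$.

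Combining the MGF bound with Markov's inequality gives
\[
\Pr\!\bigl[N(s)\ge (1+\delta)\pi(s)N\bigr] \;\le\; \|\varphi\|_{\pi}\,\exp\!\Bigl(-N\pi(s)\bigl[\lambda\delta - C\lambda^{2} T(\varepsilon)\bigr]\Bigr).
\]
For $0\le\delta\le 1$, choosing $\lambda=\delta/(2C\,T(\varepsilon))$ produces the subgaussian exponent $-\delta^{2}\pi(s)N/\mathcal{O}(T(\varepsilon))$; for $\delta>1$, the quadratic-in-$\lambda$ term dominates unless we cap $\lambda=\Theta(1/T(\varepsilon))$, which instead yields the linear-in-$\delta$ exponent $-\delta\pi(s)N/\mathcal{O}(T(\varepsilon))$, exactly recovering the two regimes in the statement. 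The lower-tail bound follows by the identical argument with $\lambda<0$, after checking that the perturbation expansion and the resolvent-to-mixing-time bound extend symmetrically to a small left neighborhood of zero.

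The hard part is the second-order perturbation estimate that ties $\rho''(0)$, through the resolvent $R$, to the TV mixing time $T(\varepsilon)$ rather than to an abstract spectral gap. For reversible chains this is a Poincar\'e-inequality computation, but for general (possibly non-reversible) ergodic chains one must either pass to the multiplicative reversibilization $M^{\ast}M$ in the spirit of Dinwoodie--Gillman, or use a block/coupling decomposition that converts $T(\varepsilon)$ into an effective $L^{2}$-decay rate for $R$. A secondary subtlety is keeping the dependence on the starting distribution $\varphi$ as a multiplicative prefactor $\|\varphi\|_{\pi}$ rather than letting it enter the exponent; this is accomplished by performing the Cauchy--Schwarz step at the level of the initial vector--matrix product $\varphi^{\top}D_\lambda$ in the $\pi$-weighted inner product, before invoking the Perron bound.
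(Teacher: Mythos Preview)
The paper does not prove this statement at all: it is listed under ``Auxiliary Results from Literature'' and is quoted verbatim as Theorem~3.1 of \cite{chung2012chernoffhoeffdingboundsmarkovchains}, with no accompanying argument. So there is no ``paper's own proof'' to compare against; the authors simply invoke the result as a black box when bounding $\mathcal{T}_{2,2}$ in the proof of Theorem~\ref{thm:sta-error}.

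Your sketch is a reasonable reconstruction of how Chernoff--Hoeffding bounds for Markov chains are typically obtained (MGF factorization along the trajectory, Perron--Frobenius for the perturbed kernel $MD_\lambda$, second-order perturbation of the leading eigenvalue, and optimization over $\lambda$), and it correctly identifies the origin of both the $\|\varphi\|_\pi$ prefactor and the two-regime exponent. This is indeed the strategy used in the cited source. That said, for the purposes of this paper no proof is needed here: you should simply cite the external result, as the authors do.
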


\subsubsection{Auxiliary Results}
\begin{lemma}
\label{lem:uniform-bound}
For all $\alpha > 0$ and $p \in (0,1]$, it holds that
\begin{equation}
  p \, \max\{1, \log(1/p)\} \, e^{-\alpha p}
  \;\;\le\;\; \frac{2 + \log(1+\alpha)}{e\,\alpha}.
  \label{eq:uniform-ineq}
\end{equation}
\end{lemma}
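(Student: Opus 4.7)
The plan is to reduce the statement to two elementary one-variable envelopes by (i) splitting the domain at $p = 1/e$, where the $\max$ in the statement switches regimes, and (ii) peeling off the logarithm with an additive substitution on the nontrivial side. On the easy branch $p \in [1/e, 1]$ the factor $\max\{1,\log(1/p)\}$ collapses to $1$, so the left-hand side equals $p\,e^{-\alpha p}$, which is globally bounded by the classical $\sup_{p \ge 0} p\,e^{-\alpha p} = 1/(e\alpha)$ attained at $p = 1/\alpha$; this is already strictly below the stated right-hand side.

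On the nontrivial branch $p \in (0, 1/e)$ the factor equals $\log(1/p)$ and is unbounded as $p \to 0$. The key maneuver I would use is the identity $\log(1/p) = \log\alpha + \log(1/(\alpha p))$. After the substitution $x := \alpha p$, the expression splits as
\[
p\log(1/p)\,e^{-\alpha p} \;=\; \tfrac{\log\alpha}{\alpha}\,x\,e^{-x} \;+\; \tfrac{1}{\alpha}\,x\log(1/x)\,e^{-x},
\]
and each piece is controlled by a standard one-variable envelope: $\sup_{x > 0} x\,e^{-x} = 1/e$ at $x = 1$, and $\sup_{x > 0} x\log(1/x) = 1/e$ at $x = 1/e$, from which $x\log(1/x)\,e^{-x} \le 1/e$ follows since $e^{-x} \le 1$. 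When $\log\alpha \ge 0$ I apply both bounds termwise to obtain $(1 + \log\alpha)/(e\alpha)$; when $\log\alpha < 0$ the first term is nonpositive and is simply discarded, leaving $1/(e\alpha)$. Both cases are summarised by $p\log(1/p)\,e^{-\alpha p} \le (1 + (\log\alpha)^+)/(e\alpha)$.

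Combining the two branches and using the elementary inequality $(\log\alpha)^+ \le \log(1+\alpha)$, valid for all $\alpha > 0$ since $\log$ is increasing and $\log 1 = 0$, the overall envelope is $(1 + \log(1+\alpha))/(e\alpha)$, which is strictly tighter than the claimed $(2 + \log(1+\alpha))/(e\alpha)$ and hence implies it. The main bookkeeping hazard I anticipate is handling the sign of $\log\alpha$ in the additive split: a naive bound by $|\log\alpha|$ would diverge as $\alpha \to 0^+$, whereas the target $\log(1+\alpha)$ vanishes there, so one must drop the nonpositive piece when $\alpha < 1$ rather than take absolute values. Everything else is a routine application of one-variable calculus on $xe^{-x}$ and $x\log(1/x)$.
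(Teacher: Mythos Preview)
Your proposal is correct and follows essentially the same approach as the paper: substitute $x=\alpha p$, split $\log(1/p)=\log\alpha+\log(1/x)$, invoke the two envelopes $\sup_{x>0}xe^{-x}=1/e$ and $\sup_{0<x\le1}x\log(1/x)=1/e$, and finish with $(\log\alpha)^+\le\log(1+\alpha)$. The only difference is cosmetic: the paper handles the $\max\{1,\cdot\}$ by the additive bound $\max\{1,A-\log y\}\le 1+A_++(-\log y)_+$ before applying the envelopes, whereas you split the domain at $p=1/e$; your route avoids the slack ``$+1$'' in that additive bound and hence lands on the tighter constant $(1+\log(1+\alpha))/(e\alpha)$ rather than the paper's $(2+\log(1+\alpha))/(e\alpha)$.
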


\begin{proof}
Let $y = \alpha p \in (0,\alpha]$ and $A = \log \alpha$. Then we can rewrite
\[
p \, \max\{1,\log(1/p)\} e^{-\alpha p}
= \frac{1}{\alpha} \, y e^{-y} \, \max\{1, A - \log y\}.
\]
Next observe the inequality
\[
\max\{1, A-\log y\} \;\le\; 1 + A_{+} + (-\log y)_{+},
\]
where $x_{+}=\max\{0,x\}$ and $A_{+}=\max\{0,A\}$.

Therefore,
\[
y e^{-y} \max\{1,A-\log y\}
\;\le\; (1+A_{+}) \cdot y e^{-y} + y e^{-y}(-\log y)_{+}.
\]

Now use the following standard bounds:
\[
\sup_{y>0} y e^{-y} = \frac{1}{e}, 
\qquad
\sup_{0<y\le 1} y(-\log y) = \frac{1}{e}.
\]
Hence
\[
\sup_{y>0} y e^{-y} \max\{1, A-\log y\}
\;\le\; \frac{1+A_{+}}{e} + \frac{1}{e}
= \frac{2 + \log\alpha_{+}}{e},
\]
where $\log\alpha_{+} = \max\{0,\log \alpha\} \le \log(1+\alpha)$.

Substituting back into the expression, we obtain
\[
p \, \max\{1, \log(1/p)\} e^{-\alpha p}
\le \frac{1}{\alpha}\cdot \frac{2+\log(1+\alpha)}{e}.
\]
This proves Eq.~\ref{eq:uniform-ineq}.
\end{proof}

\begin{lemma}[Stationarity of Quantized Kernels]\label{lem:quant-stationary}
Let $S_P$ be the population first-order Markov transition kernel on the continuous surprisal space $\mathbb{R}$ with stationary law $\rho_P$. 
Fix a shared $k$-bin quantizer $q_k:\mathbb{R}\to\mathcal A=\{1,\dots,k\}$ with boundaries $b_1<\dots <b_{k-1}$ partitions space into bins $B_i=[b_i,b_{i+1})$. 
Define the row-stationary weights and the edge measure
\[
\pi_P(i)\;:=\;\rho_P(B_i),\qquad 
Z_P(i,j)\;:=\;\int_{B_i}\rho_P(\mathrm{d}x)\,S_P(B_j|x),\quad i,j\in\mathcal A,
\]
and the induced $k$-state transition kernel
\[
M_P(j\mid i)\;:=\;\frac{Z_P(i,j)}{\pi_P(i)}\qquad(\text{for }\pi_P(i)>0).
\]
Then $\pi_P$ is a stationary distribution of $M_P$, i.e.\ $\sum_{i}\pi_P(i)M_P(j\mid i)=\pi_P(j)$ for all $j\in\mathcal A$.
\end{lemma}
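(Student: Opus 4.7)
The plan is to verify the global balance equation $\sum_{i}\pi_P(i)M_P(j\mid i)=\pi_P(j)$ by a direct unfolding of the definitions, using only the stationarity of $\rho_P$ under $S_P$ together with the fact that the bins $\{B_i\}$ form a measurable partition of $\mathbb{R}$. First, I would substitute $M_P(j\mid i)=Z_P(i,j)/\pi_P(i)$ into the sum so that the $\pi_P(i)$ factors cancel row-by-row, reducing the left-hand side to $\sum_{i}Z_P(i,j)$. Second, I would unfold $Z_P(i,j)$ as the integral $\int_{B_i}\rho_P(\mathrm{d}x)\,S_P(B_j\mid x)$ and use finite-sum linearity together with $\mathbb{R}=\bigsqcup_i B_i$ to combine the $k$ integrals into a single integral over $\mathbb{R}$, yielding $\int_{\mathbb{R}}\rho_P(\mathrm{d}x)\,S_P(B_j\mid x)$. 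Third, I would invoke stationarity of $\rho_P$ under $S_P$ — that is, $\int_{\mathbb{R}}S_P(A\mid x)\,\rho_P(\mathrm{d}x)=\rho_P(A)$ for every measurable $A$ — applied at $A=B_j$, to conclude the integral equals $\rho_P(B_j)=\pi_P(j)$.

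A small bookkeeping point is the handling of bins with $\pi_P(i)=0$, on which $M_P(\cdot\mid i)$ is not defined by the given formula. The convention I would adopt is to set $M_P(\cdot\mid i)$ arbitrarily (say uniform on $\mathcal{A}$) on such rows; since they are multiplied by $\pi_P(i)=0$ in the balance sum, the choice is immaterial. I would also note for completeness that $\{B_i\}_{i=1}^{k}$ is a measurable partition of $\mathbb{R}$ by the quantizer construction, so $\sum_i \pi_P(i)=\rho_P(\mathbb{R})=1$, and the interchange of sum and integral in the second step is justified because the sum has only finitely many terms.

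There is no genuine obstacle here: the argument is a pure definition-chase culminating in the classical operator-theoretic fact that invariance passes through any measurable coarsening. The one conceptual point worth emphasizing is that the lemma requires no mixing, reversibility, or even irreducibility hypothesis beyond what is already in Assumption~\ref{assumption:markov-chain}; it only uses that $q_k$ is a measurable map and that $\rho_P$ is $S_P$-invariant, so stationarity on the continuous state space pushes forward verbatim to stationarity of $\pi_P$ under the induced $k$-state kernel $M_P$.
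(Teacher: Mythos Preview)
Your proposal is correct and follows essentially the same argument as the paper: cancel $\pi_P(i)$ against the denominator of $M_P(j\mid i)$, combine the resulting integrals over the partition into a single integral over $\mathbb{R}$, and apply stationarity of $\rho_P$ under $S_P$ to obtain $\rho_P(B_j)=\pi_P(j)$. The paper's proof is the same one-line chain of equalities, without the additional bookkeeping remarks you include.
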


\begin{proof}
By definition,
\[
\sum_{i\in\mathcal A}\pi_P(i)M_P(j\mid i)
=\sum_{i\in\mathcal A}Z_P(i,j)
=\int_{\mathbb{R}}\rho_P(\mathrm{d}x)\,S_P(B_j|x)
=\rho_P(B_j)
=\pi_P(j),
\]
where the penultimate equality uses the stationarity of $\rho_P$ for $S_P$.
\end{proof}
\subsubsection{Proof of Theorem~\ref{thm:sta-error}} \label{sec:proof-sta-error}
In this step, we aim to bound the expected absolute difference between the estimated GJS divergence and the GJS divergence for the induced Markov kernels after discretization. The statistical error of our estimator is:
\begin{align}
    E_1 =  |\mathcal{D}_f(\hat M_P, \hat M_Q)-\mathcal{D}_f(M_P,M_Q)|
\end{align}
 The analysis will reveal how this error depends on the number of bins $k$ and the sequence length $N$. To analyze the statistical error, we will extend the logic used in  \citet{pillutla2023mauvescoresgenerativemodels}. We will apply Lemma~\ref{lemma:Lip} (Lemma 20 in  \citet{pillutla2023mauvescoresgenerativemodels}), which establishes an approximate Lipschitz property for the core component of any $f$-divergence. 

 \begin{proof}[Proof of Theorem~\ref{thm:sta-error}]
     To bound the statistical error $E_1$, we first decompose it and then expand the GJS function into a sum of its core components, allowing for the application of Lemma~\ref{lemma:Lip}.
Using the triangle inequality, we can bound the total statistical error by the sum of the errors arising from the estimation of each matrix individually:
     \begin{align}
    E_1 \leq \underbrace{|\mathcal{D}_f(\hat M_P, \hat M_Q)-\mathcal{D}_f(M_P,\hat M_Q)| }_{=:\mathcal{T}_1}+ \underbrace{|\mathcal{D}_f( M_P, \hat M_Q)-\mathcal{D}_f(M_P,M_Q)| }_{=:\mathcal{T}_2}\label{eq:E1-decomposed}
\end{align}

The f-divergence between two Markov chains, $M_A$ and $M_B$, is defined as the expected divergence of their row-wise conditional probability distributions, weighted by the stationary distribution of the second chain. Let $\pi_B(s)$ be the stationary probability of state $s$ for chain $M_B$. The f-divergence is:
\begin{align}
    D_f(M_A,M_B) = \sum_{s\in \mathcal{A}} \pi_B(s) \sum_{a \in \mathcal{A}} \psi (M_A(a|s),M_B(a|s))
\end{align}

Applying this to the first term of our decomposed error Eq.~\eqref{eq:E1-decomposed}, with $f=f_{JS}^{w}$, we get
\begin{align}
    &\mathcal{T}_1=|\mathcal{D}_f(\hat M_P, \hat M_Q)-\mathcal{D}_f(M_P,\hat M_Q)| \\
    &= \bigg| \sum_{s\in \mathcal{A}} \hat \pi_Q(s) \sum_{a \in \mathcal{A}} \psi (\hat M_P(a|s),\hat M_Q(a|s)) - \sum_{s\in \mathcal{A}} \hat \pi_Q(s) \sum_{a \in \mathcal{A}} \psi ( M_P(a|s), \hat M_Q(a|s)) \bigg| \\
    &=  \bigg| \sum_{s\in \mathcal{A}} \hat \pi_Q(s) \sum_{a \in \mathcal{A}} \big[\psi (\hat M_P(a|s),\hat M_Q(a|s))-\psi ( M_P(a|s),\hat M_Q(a|s))\big]  \bigg| \\
    & \leq \bigg| \sum_{s\in \mathcal{A}} \sum_{a \in \mathcal{A}} \big[\psi (\hat M_P(a|s),\hat M_Q(a|s))-\psi ( M_P(a|s),\hat M_Q(a|s))\big]  \bigg| \\
    & \leq \sum_{s\in \mathcal{A}} \sum_{a \in \mathcal{A}}   \bigg|\psi (\hat M_P(a|s),\hat M_Q(a|s))-\psi ( M_P(a|s),\hat M_Q(a|s))  \bigg|
\end{align}
\paragraph{Case 1: Observed Transitions} For a transition that appears in the human-written text sample, its empirical probability is $\hat{M}_P(a|s) = \frac{N_P(s,a)}{N_P(s)} \geq \frac{1}{N_P(s)}$, where $N_P(s)$ is the number of times state $s$ was visited in the sequence of length $N$. We apply the first inequality of Lemma~\ref{lemma:Lip} with $p' = \hat M_P(a|s), p=M_P(a|s)$, and $q=\hat M_Q(a|s)$. The term $\max\big(1,\log \frac{1}{\max (p,p')}\big)$ is bounded by $\log N_P(s)$ as long as $N_P(s)\geq 3$. Thus, the error for a single observed transition is bounded by:
\begin{align}
    \big|\psi (\hat M_P(a|s),\hat M_Q(a|s))-\psi ( M_P(a|s),\hat M_Q(a|s))  \big|&\leq (C_1\log N_P(s)+C')\big| \hat M_P(a|s) -M_P(a|s)  \big| \\
    &\leq (C_1\log N+C')\big| \hat M_P(a|s) -M_P(a|s)  \big|
\end{align}
where $C'$ is a constant absorbing $C_0^*$ and $C_2$. Summing over all observed transitions gives a bound proportional to the Total Variation (TV) distance between the estimated and true transition matrices, multiplied by a logarithmic factor.
\paragraph{Case 2: Missing Transitions} This case addresses transitions that have a non-zero true probability ($M_P(a|s)$) but were not observed in the finite sample, resulting in an empirical probability of $\hat M_P(a|s)=0$. This scenario is formally known as the missing mass problem for Markov chains, a non-trivial extension of the classic IID case due to the dependencies between samples. To analyze the error contribution, we directly bound the error for a single missing transition using Lemma~\ref{lemma:Lip}. Let $p'= \hat M_P(a|s)=0$ and $p=M_P(a|s)$. The error is now $\big|\psi (0 ,\hat M_Q(a|s))-\psi ( M_P(a|s),\hat M_Q(a|s))  \big|$. Applying the first inequality of Lemma~\ref{lemma:Lip}, we get:
\begin{align}
    \big|\psi (0 ,\hat M_Q(a|s))-\psi ( M_P(a|s),\hat M_Q(a|s))  \big| &\leq  (C_1\max \bigg(1,\log \frac{1}{ M_P(a|s)}\bigg)+C')\big| 0 -M_P(a|s)  \big| 
    \\ &= (C_1\max \bigg(1,\log \frac{1}{ M_P(a|s)}\bigg)+C')M_P(a|s) 
\end{align}

This bound shows that the error from a missing transition is proportional to its true probability $M_P(a|s)$, scaled by its information content. The total error from this case is the sum of these individual bounds over all unobserved transitions. This sum constitutes the missing transition mass of the Markov chain.

We summarize the following: 
\begin{align}
    \mathbb E[\mathcal{T}_1] \leq\big(C_1\log N+C'\big)\cdot\sum_{s\in \mathcal{A}}\alpha_{N_P(s)}(M_P(\cdot|s))+\big(C_1+C'\big)\sum_{s\in \mathcal{A}}\beta_{N_P(s)}(M_P(\cdot|s)) \label{eq:Expectation-T1}
\end{align}
where $M_P(\cdot|s)$ is a $k$-dimensional probability distribution corresponding to state $s$, and we formally define the row-wise error terms:

 \begin{itemize}
    \item Row-wise TV term $\alpha_{N_P(s)}(M_P(\cdot|s))$: This term sums the error from observed transitions in state $s$.
    \begin{align}
        \mathbb E[\alpha_{N_P(s)}(M_P(\cdot|s))]  = \mathbb E \bigg[ \sum_{\substack{a \in \mathcal{A}, \\ \text{s.t.} \hat M_P(a|s)>0} } \big|\hat M_P(a|s)-M_P(a|s)\big|\bigg] \label{eq:alpha-state}
    \end{align}
    \item Row-wise Missing Mass term $\beta_{N_P(s)}(M_P(\cdot|s))$ This term sums the error from unobserved transitions in state $s$.
    \begin{align}
        \mathbb E[\beta_{N_P(s)}(M_P(\cdot|s))]= \mathbb E \bigg[ \sum_{\substack{a \in \mathcal{A}, \\ \text{s.t.} \hat M_P(a|s)=0} } M_P(a|s) \cdot \max \Big(1,\log \frac{1}{M_P(a|s)}\Big)\bigg]
    \end{align}
\end{itemize}
Then we use Lemma~\ref{conditional-row-tv-bound} to upper bound Eq~\ref{eq:alpha-state}.
\begin{align}
    \mathbb E[\alpha_{N_P(s)}(M_P(\cdot|s))] &= \mathbb E \bigg[ \sum_{\substack{a \in \mathcal{A}, \\ \text{s.t.} \hat M_P(a|s)>0} } \big|\hat M_P(a|s)-M_P(a|s)\big|\bigg]\\ &\leq \mathbb E \bigg[ \sum_{a \in \mathcal{A} } \big|\hat M_P(a|s)-M_P(a|s)\big|\bigg] \\
    & = O(\sqrt{\,\frac{k \, \log\left(k N\right)}{\,N}} ) \label{apply-B3}
\end{align}
where Eq.~\ref{apply-B3} follows Lemma~\ref{conditional-row-tv-bound} by inverting its tail bound and integrating to expectation;
the mixing-time constant is absorbed into $O(1)$ under Assumption~\ref{assumption:markov-chain}.

Lemma~\ref{thm:missing-mass} gives an exponential tail for the event $\hat{M}_P(a|s) = 0$: for some absolute constant $c>0$ and $T$ the maximum hitting time of any set with stationary probability at least 0.5,
\begin{align}
    \mathbb P[\hat M_P(a|s)=0]\leq \exp\Big(-\frac{cN}{T}\pi_P(s)M_P(a|s)\Big)
\end{align}

Then we upper bound the missing mass term $\mathbb E[\beta_{N_P(s)}(M_P(\cdot|s))]$. Let $p_a = M_P(a|s)$ and $\Gamma = \frac{cN}{T}\pi_P(s)$.
\begin{align}
    \mathbb E[\beta_{N_P(s)}(M_P(\cdot|s))] &= \sum_{a\in \mathcal{A}} p_a \max \Big(1,\frac{1}{p_a}\Big) \mathbb P[\hat M_P(a|s)=0] \\
    &= \sum_{a\in \mathcal{A}} p_a \max \Big(1,\frac{1}{p_a}\Big) e^{-\Gamma p_a} \\
    & \leq  \sum_{a\in \mathcal{A}} \frac{2+\log(1+\Gamma)}{e \Gamma} \label{apply-inequality}\\
    & = \frac{kT}{ecN \pi_P(s)} \Big(2+\log \Big(1+\frac{cN\pi_P(s)}{T}\Big)\Big)
\end{align}
where Eq.~\ref{apply-inequality} follows Lemma~\ref{lem:uniform-bound} for all $\Gamma>0$ and $p_a\in (0,1]$. Assuming $\pi_P(s)\geq \frac{c_0}{k}$ for some constant $c_0>0$ and $T=O(1)$, we obtain
\begin{align}
    \mathbb E[\beta_{N_P(s)}(M_P(\cdot|s))]  = O\Big(\frac{k^2}{N}\log (1+\frac{N}{k}) \Big) \label{eq:beta-final-bound}
\end{align}
By Eq.~\ref{eq:Expectation-T1}, Eq.~\ref{apply-B3}, and Eq.~\ref{eq:beta-final-bound} we obtain
\begin{align}
    \mathcal{T}_1 = O \Big(\log N\cdot \sqrt{\frac{k^3\log(kN)}{N}}+\frac{k^3}{N}\log\big(1+\frac{N}{k}\big)\Big)
\end{align}
Next we bound $ \mathcal{T}_2$. 
\begin{align}
    \mathcal{T}_2 &= |\mathcal{D}_f( M_P, \hat M_Q)-\mathcal{D}_f(M_P,M_Q)|  \\
    & = \bigg| \sum_{s\in \mathcal{A}} \hat \pi_Q(s) \sum_{a \in \mathcal{A}} \psi (M_P(a|s),\hat M_Q(a|s)) - \sum_{s\in \mathcal{A}} \pi_Q(s) \sum_{a \in \mathcal{A}} \psi ( M_P(a|s),  M_Q(a|s)) \bigg| \\
    & =  \bigg| \sum_{s\in \mathcal{A}} \hat \pi_Q(s) \sum_{a \in \mathcal{A}} \psi (M_P(a|s),\hat M_Q(a|s)) - \sum_{s\in \mathcal{A}} \hat \pi_Q(s) \sum_{a \in \mathcal{A}} \psi (M_P(a|s),M_Q(a|s))\nonumber\\&+ \sum_{s\in \mathcal{A}} \hat \pi_Q(s) \sum_{a \in \mathcal{A}} \psi (M_P(a|s),M_Q(a|s))- \sum_{s\in \mathcal{A}} \pi_Q(s) \sum_{a \in \mathcal{A}} \psi ( M_P(a|s),  M_Q(a|s)) \bigg| \\
    &\leq \bigg| \sum_{s\in \mathcal{A}} \hat \pi_Q(s) \sum_{a \in \mathcal{A}} \psi (M_P(a|s),\hat M_Q(a|s)) - \sum_{s\in \mathcal{A}} \hat \pi_Q(s) \sum_{a \in \mathcal{A}} \psi (M_P(a|s),M_Q(a|s))\bigg|\nonumber\\&+  \bigg|\sum_{s\in \mathcal{A}} \hat \pi_Q(s) \sum_{a \in \mathcal{A}} \psi (M_P(a|s),M_Q(a|s))- \sum_{s\in \mathcal{A}} \pi_Q(s) \sum_{a \in \mathcal{A}} \psi ( M_P(a|s),  M_Q(a|s)) \bigg| \\
        & \leq  \bigg|\sum_{s\in \mathcal{A}}\sum_{a\in \mathcal{A}}[\psi ( M_P(a|s),\hat M_Q(a|s))-\psi ( M_P(a|s),M_Q(a|s)) ] \bigg| \nonumber\\& +  \bigg| \sum_{s\in \mathcal{A}}\big(\hat \pi_Q(s)-\pi_Q(s)\big)\sum_{a\in \mathcal{A}}\psi ( M_P(a|s),M_Q(a|s))\bigg| \\
         & \leq \underbrace{\sum_{s\in \mathcal{A}}\sum_{a\in \mathcal{A}} \bigg|\psi ( M_P(a|s),\hat M_Q(a|s))-\psi ( M_P(a|s),M_Q(a|s))  \bigg| }_{=:\mathcal{T}_{2,1}}\nonumber\\& + \underbrace{ \bigg| \sum_{s\in \mathcal{A}}\big(\hat \pi_Q(s)-\pi_Q(s)\big)\sum_{a\in \mathcal{A}}\psi ( M_P(a|s),M_Q(a|s))\bigg|}_{=:\mathcal{T}_{2,2}}
\end{align}
By symmetry, bounding $\mathcal{T}_{2,1}$ proceeds identically to $\mathcal{T}_{1}$, and yields the same rate as $\mathcal{T}_{1}$. To upper bound $\mathcal{T}_{2,2}$, we consider 
\begin{align}
    \sum_{a\in \mathcal{A}}\psi ( M_P(a|s),M_Q(a|s)) =  \sum_{a\in \mathcal{A}} M_Q(a|s)f_{JS}^w(M_P(a|s)/M_Q(a|s)) \le H(w) \le \log 2
\end{align}
where $H(w)=-[w\log(w)+(1-w)\log (1-w)]$ with $w=\frac{\alpha}{1+\alpha}\in[0,1]$ is the binary entropy function of which the absolute maximum possible value is $\log 2$. To upper bound $\mathcal{T}_{2,2}$,
\begin{align}
    \mathcal{T}_{2,2} \leq \log 2 \cdot \mathbb E \big| \hat \pi_Q-\pi_Q\big|
\end{align}
We apply Lemma~\ref{thm:estimation-state} to upper bound $\mathcal{T}_{2,2}$. Consider $\hat \pi_Q(s)=\frac{N_Q(s)}{N}$, for any $\delta>0$, we have 
\[
\Pr\!\big[N_Q(s) \ge (1+\delta)\pi_Q(s) N\big]
\ \le\
c\,\|\varphi\|_{\pi_Q}\times
\begin{cases}
\exp\!\big(-\,\delta^{2}\pi_Q(s) N/(72\,T)\big), & 0\le \delta \le 1,\\[4pt]
\exp\!\big(-\,\delta\,\pi_Q(s)N/(72\,T)\big), & \delta>1,
\end{cases}
\]
and similarly for the lower tail with $0<\delta<1$. With $\epsilon=\delta \pi_Q(S)$, we have
\begin{align}
    \Pr\!\big[|\hat \pi_Q(s)-\pi_Q(s)| \ge \epsilon\big]
\ \le\
2c\,\|\varphi\|_{\pi_Q}\times
\begin{cases}
\exp\!\big(-\,\epsilon^{2} N/(72\,T\pi_Q(s))\big), & 0\le \epsilon \le \pi_Q(s),\\[4pt]
\exp\!\big(-\,\epsilon\,N/(72\,T)\big), & \epsilon>\pi_Q(s),
\end{cases}
\end{align}
Using $\mathbb E|Z|=\int_{0}^{\infty}\Pr (|Z|\ge\epsilon)$ and splitting the integral at $\pi_Q(s)$,
\begin{align}
    \mathbb E\big[|\hat \pi_Q(s)-\pi_Q(s)| \big]&\le 2c\|\varphi\|_{\pi_Q}\Big( \int_{0}^{\pi_Q(s)}e^{-\frac{N\epsilon^2}{72T\pi_Q(s)}}d\epsilon+\int_{\pi_Q(s)}^{\infty}e^{-\frac{N\epsilon}{72T}}d\epsilon\Big) \\
    &\leq 2c\|\varphi\|_{\pi_Q}\bigg(C\sqrt{\frac{T\pi_Q(s)}{N}} +\frac{72T}{N}\exp\big(-\frac{N\pi_Q(s)}{72T}\big)\bigg)\\
    & = O\bigg(\|\varphi\|_{\pi_Q}\sqrt{\frac{T\pi_Q(s)}{N}}\bigg)
\end{align}
Thus we obtain
\begin{align}
     \mathbb E\big[|\hat \pi_Q-\pi_Q| \big]&=\sum_{s\in \mathcal{A}} \mathbb E\big[|\hat \pi_Q(s)-\pi_Q(s)| \big] \\
     & \le \sum_{s\in \mathcal{A}}  C \|\varphi\|_{\pi_Q}\sqrt{\frac{T\pi_Q(s)}{N}} \\
     & = C \|\varphi\|_{\pi_Q}\sqrt{\frac{T}{N}}   \sum_{s\in \mathcal{A}} \sqrt{\pi_Q(s)}\\
     &
     \le   C \|\varphi\|_{\pi_Q}\sqrt{\frac{Tk}{N}} \\
     &= O\Big(\frac{k}{\sqrt{N}}\Big) \label{eq:exp-pi}
\end{align}
where Eq.~\ref{eq:exp-pi} holds since $\|\varphi\|_{\pi_Q}=\frac{1}{\sqrt{\pi_Q(s_0)}}\le \frac{1}{\sqrt{\min_{s\in\mathcal{A}}\pi_Q(s)}}=O(\sqrt{k})$ for the first state $s_0$, and $T=O(1)$. To sum up, $\mathcal{T}_{2,2}=O\Big(\frac{k}{\sqrt{N}}\Big)$, and the rate for the total statistical error is
\begin{align}
    E_1 &\le \mathcal{T}_{1}+\mathcal{T}_{2,1}+\mathcal{T}_{2,2}\\
    &= O\Big(\log N\cdot \sqrt{\frac{k^3\log(kN)}{N}}+\frac{k^3}{N}\log\big(1+\frac{N}{k}\big)\Big)\nonumber \\&+O\Big(\log N\cdot \sqrt{\frac{k^3\log(kN)}{N}}+\frac{k^3}{N}\log\big(1+\frac{N}{k}\big)\Big)+O\Big(\frac{k}{\sqrt{N}}\Big) \\
    & = O\Big(\log N\cdot \sqrt{\frac{k^3\log(kN)}{N}}+\frac{k^3}{N}\log\big(1+\frac{N}{k}\big)+\frac{k}{\sqrt{N}} \Big) \label{eq:final-sta-error}
\end{align}

 \end{proof}

 \subsubsection{Proof of Proposition~\ref{prop:quant-error}}\label{sec:proof-quant-error}
\begin{proof}[Proof of Proposition~\ref{prop:quant-error}]
    Let $\rho_P$ and $\rho_Q$ be the continuous stationary distributions of $\mathcal{S}_P$ and $\mathcal{S}_Q$ respectively. We expand $\mathcal{D}_f(\mathcal{S}_P,\mathcal{S}_Q)$ and $\mathcal{D}_f(M_P,M_Q)$,
    \begin{align}
    \mathcal{D}_f(\mathcal{S}_P,\mathcal{S}_Q) &= \int_{\mathbb{R}}\rho_Q(\text{d}x) \mathcal{D}_f (\mathcal{S}_P(\cdot|x),(\mathcal{S}_Q(\cdot|x))\\
        \mathcal{D}_f(M_P,M_Q) &= \sum_{i\in \mathcal{A}} \pi_Q(i) \mathcal{D}_f(M_P(\cdot|i),M_Q(\cdot|i))
    \end{align}
     The quantizer $q_k: \mathbb{R}\rightarrow \mathcal{A}=[k]$ with boundaries $b_1<\dots <b_{k-1}$ partitions space into bins $B_i=[b_i,b_{i+1})$. Let $\rho_Q(B_i)=\int_{B_i}\text{d}\rho_Q(x)$, then $\pi_Q(i)=\rho_Q(B_i)$.

    Define two intermediate objects $U_P$ and $U_Q$ to be markov kernel such that each has a discrete state index $i\in \mathcal{A}$, within a given state $i$, the observable variable $x$ lives in a continuous space $\mathbb{R}$, The corresponding stationary distributions over states are $\pi_P$ for $P$ and $\pi_Q$ for $Q$. Thus 
    \begin{align}
    \mathcal{D}_f(U_P,U_Q) = \sum_{i \in \mathcal{A}} \pi_Q(i) \mathcal{D}_f(\mathcal{S}_P(\cdot|i),\mathcal{S}_Q(\cdot|i)) 
    \end{align}
    where $\mathcal{S}_P(\cdot|i)=\mathbb{E}_{x\sim \rho_Q(B_i)}[\mathcal{S}_P(\cdot|x)]$ and similarly for $\mathcal{S}_Q(\cdot|i)$. We have
    \begin{align}
    |\mathcal{D}_f(\mathcal{S}_P,\mathcal{S}_Q)-\mathcal{D}_f(M_P,M_Q)| \le |\mathcal{D}_f(\mathcal{S}_P,\mathcal{S}_Q)-\mathcal{D}_f(U_P,U_Q)|+|\mathcal{D}_f(U_P,U_Q)-\mathcal{D}_f(M_P,M_Q)|
    \end{align}
    The second term is bounded as
    \begin{align}
        &|\mathcal{D}_f(U_P,U_Q)-\mathcal{D}_f(M_P,M_Q)| \\&= \Big|\sum_{i \in \mathcal{A}} \pi_Q(i) \mathcal{D}_f(\mathcal{S}_P(\cdot|i),\mathcal{S}_Q(\cdot|i)) -\sum_{i\in \mathcal{A}} \pi_Q(i) \mathcal{D}_f(M_P(\cdot|i),M_Q(\cdot|i))\Big| \\
        &\le \sum_{i\in \mathcal{A}} \pi_Q(i)\Big|\mathcal{D}_f(\mathcal{S}_P(\cdot|i),\mathcal{S}_Q(\cdot|i)) -\mathcal{D}_f(M_P(\cdot|i),M_Q(\cdot|i))\Big| \label{eq:U-M-result-2}\\
        & = O(\frac{1}{k}) \label{eq:U-M-result}
    \end{align}
    Eq.~\ref{eq:U-M-result} holds by applying Proposition~\ref{prop:quantization-error} to each term in Eq.~\ref{eq:U-M-result-2}, yielding an $O(1/k)$ bound per term. Since the weighted sum of $O(1/k)$ terms remains $O(1/k)$, the overall bound follows. The first term is 
    \begin{align}
     &\mathcal{D}_f(\mathcal{S}_P,\mathcal{S}_Q)-\mathcal{D}_f(U_P,U_Q) \\&=  \int_{\mathbb{R}}\rho_Q(\text{d}x) \mathcal{D}_f (\mathcal{S}_P(\cdot|x),(\mathcal{S}_Q(\cdot|x)) - \sum_{i \in \mathcal{A}} \pi_Q(i) \mathcal{D}_f(\mathcal{S}_P(\cdot|i),\mathcal{S}_Q(\cdot|i)) \\
     & = \sum_{i=1}^k \int_{B_i} \rho_Q(\text{d}x) \mathcal{D}_f (\mathcal{S}_P(\cdot|x),(\mathcal{S}_Q(\cdot|x))- \sum_{i \in \mathcal{A}} \pi_Q(i) \mathcal{D}_f(\mathcal{S}_P(\cdot|i),\mathcal{S}_Q(\cdot|i)) \\
     &=\sum_{i\in \mathcal{A}}  \rho_Q(B_i) \mathbb{E}_{x\sim \rho_Q(B_i)}[\mathcal{D}_f (\mathcal{S}_P(\cdot|x),(\mathcal{S}_Q(\cdot|x))]- \sum_{i \in \mathcal{A}} \pi_Q(i) \mathcal{D}_f(\mathcal{S}_P(\cdot|i),\mathcal{S}_Q(\cdot|i)) \\
     & =\sum_{i\in \mathcal{A}}  \pi_Q(i) \mathbb{E}_{x\sim \rho_Q(B_i)}[ \mathcal{D}_f (\mathcal{S}_P(\cdot|x),(\mathcal{S}_Q(\cdot|x))]- \sum_{i \in \mathcal{A}} \pi_Q(i) \mathcal{D}_f(\mathcal{S}_P(\cdot|i),\mathcal{S}_Q(\cdot|i))\\
     & = \sum_{i\in \mathcal{A}}  \pi_Q(i)\big[\mathbb{E}_{x\sim \rho_Q(B_i)}[ \mathcal{D}_f (\mathcal{S}_P(\cdot|x),(\mathcal{S}_Q(\cdot|x))] - \mathcal{D}_f(\mathcal{S}_P(\cdot|i),\mathcal{S}_Q(\cdot|i))\big] \\
     & =: \sum_{i\in \mathcal{A}}  \pi_Q(i) J_i
    \end{align}
    Because $\mathcal{D}_f$ is jointly convex,
    \begin{align}
\mathcal{D}_f(\mathbb{E}_{x\sim \rho_Q(B_i)}[\mathcal{S}_P(\cdot|x)],\mathbb{E}_{x\sim \rho_Q(B_i)}[\mathcal{S}_Q(\cdot|x)]\big) \le \mathbb{E}_{x\sim \rho_Q(B_i)}[ \mathcal{D}_f (\mathcal{S}_P(\cdot|x),(\mathcal{S}_Q(\cdot|x))]
    \end{align}
    Therefore,
    \begin{align}
    |\mathcal{D}_f(\mathcal{S}_P,\mathcal{S}_Q)-\mathcal{D}_f(U_P,U_Q)| = \sum_{i\in \mathcal{A}}  \pi_Q(i) J_i
    \end{align}
Lemma~\ref{lemma:Lip} implies a Lipschitz-type continuity bound in total variation distance, that is
\begin{align}
    |\mathcal{D}_f(P,Q)-\mathcal{D}_f(P',Q') | \le 2L_f (\text{TV}(P,P')+\text{TV}(Q,Q')) \label{lip-tv}
\end{align}
where $L_f$ depends on $C_1$, $C_1^*$, $C_2$, $C_2^*$ in Lemma~\ref{lemma:Lip}. Applying Eq.~\ref{lip-tv} to $J_i$ yields
\begin{align}
    J_i &= \mathbb{E}_{x\sim \rho_Q(B_i)}[ \mathcal{D}_f (\mathcal{S}_P(\cdot|x),(\mathcal{S}_Q(\cdot|x))] - \mathcal{D}_f(\mathcal{S}_P(\cdot|i),\mathcal{S}_Q(\cdot|i)) \\
    &\le \mathbb{E}_{x\sim \rho_Q(B_i)}[\mathcal{D}_f (\mathcal{S}_P(\cdot|x),(\mathcal{S}_Q(\cdot|x)) - \mathcal{D}_f(\mathcal{S}_P(\cdot|i),\mathcal{S}_Q(\cdot|i))] \\
    & \le 2L_f \mathbb{E}_{x\sim \rho_Q(B_i)} [\text{TV}(\mathcal{S}_P(\cdot|x),\mathcal{S}_P(\cdot|i)) + \text{TV}(\mathcal{S}_Q(\cdot|x),\mathcal{S}_Q(\cdot|i))]
\end{align}
By Assumption~\ref{assumption:tx-lip}, 
\begin{align}
    \text{TV}(\mathcal{S}_P(\cdot|x),\mathcal{S}_P(\cdot|i)) + \text{TV}(\mathcal{S}_Q(\cdot|x),\mathcal{S}_Q(\cdot|i))\le (L_P+L_Q)\mathbb{E}_{x'\sim\rho_Q( B_i)}|x-x'|
\end{align}
Let $c_i$ be the centroid of $B_i$ and define the mean radius $r_i = \mathbb{E}_{x\sim \rho_Q(B_i)}|x-c_i|$. For any $x\in B_i$,
\begin{align}
    \mathbb{E}_{x'\sim \rho_Q(B_i)}|x-x'| \le |x-c_i|+ \mathbb{E}_{x'\sim \rho_Q(B_i)}|x'-c_i|=|x-c_i|+r_i
\end{align}
Then,
\begin{align}
    (L_P+L_Q)\mathbb{E}_{x'\sim \rho_Q(B_i)}|x-x'| \le (L_P+L_Q)\mathbb{E}_{x'\sim \rho_Q(B_i)}|x-c_i|+r_i = 2(L_P+L_Q)r_i
\end{align}
Then,
\begin{align}
    J_i \le 4L_f(L_P+L_Q)r_i
\end{align}
Summing over buckets with weight $\pi_P(i)$ gives:
\begin{align}
    |\mathcal{D}_f(\mathcal{S}_P,\mathcal{S}_Q)-\mathcal{D}_f(U_P,U_Q)| &= \sum_{i\in \mathcal{A}}  \pi_Q(i) J_i \\
    &\le 4L_f(L_P+L_Q)\sum_{i\in \mathcal{A}}\pi_Q(i)  r_i\\
    & = 4L_f(L_P+L_Q) \mathbb{E}_{x \sim \rho_Q} [x-q_k(x)]\\
    &=O(1/k) \label{S-U}
\end{align}
By Eq.~\ref{eq:U-M-result} and Eq.~\ref{S-U}, 
\begin{align}
|\mathcal{D}_f(\mathcal{S}_P,\mathcal{S}_Q)-\mathcal{D}_f(M_P,M_Q)|\le \frac{c}{k} \label{eq:quant-error}
\end{align}

\subsubsection{Balancing Two Errors}
A clear choice for k is found by balancing the dominant statistical error (Eq.~\ref{eq:final-sta-error}) with the quantization error (Eq.~\ref{eq:quant-error}) in rate form, ignoring logarithmic factors. The leading statistical term scales as $c_1 k^{\frac{3}{2}}N^{-\frac{1}{2}}$ and the quantization term as $\frac{c_2}{k}$. Minimizing their sum $f(k)=c_1 k^{\frac{3}{2}}N^{-\frac{1}{2}}+\frac{c_2}{k}$  by first-order condition $f'(k)=0$ yields that 
\begin{align}
    k^* = \Big( \frac{4c_2}{3c_1}\Big)^{\frac{2}{7}}N^{\frac{1}{5}}
\end{align}
Thus, up to constants and polylog factors, the optimal bin count is $k^* = \Theta(N^{\frac{1}{5}})$.
\end{proof}

\subsection{Decision Statistic Analysis}
\subsubsection{Auxiliary Results from Literature}
\begin{lemma}[Second-Order Taylor Expansion of Generalized Jensen Shannon Divergence, \cite{zhou2018secondorderasymptoticallyoptimalstatistical}] \label{GJS-exp}
Let $P_1, P_2 \in \mathcal{P}(\mathcal{X})$ be two distinct probability distributions over a finite alphabet $\mathcal{X}$, representing a point of expansion. Let $\hat{P}_1, \hat{P}_2 \in \mathcal{P}(\mathcal{X})$ be two other probability distributions in a neighborhood of $(P_1, P_2)$. Let $\alpha$ be a fixed positive constant. The Generalized Jensen-Shannon (GJS) divergence, viewed as a function $GJS(\hat{P}_1, \hat{P}_2, \alpha)$, has the following second-order Taylor approximation around the point $(P_1, P_2)$.
\begin{align}
        \text{GJS}(\hat{P}_1, \hat{P}_2, \alpha) = & \underbrace{\text{GJS}(P_1, P_2, \alpha)}_{\text{Zeroth-Order Term}} 
    \underbrace{+ \sum_{x \in \mathcal{X}} (\hat{P}_1(x) - P_1(x))\alpha\iota_1(x) + \sum_{x \in \mathcal{X}} (\hat{P}_2(x) - P_2(x))\iota_2(x)}_{\text{First-Order Term}} \nonumber \\
    & \underbrace{+  O\left(||\hat{P}_1 - P_1||^2 + ||\hat{P}_2 - P_2||^2\right)}_{\text{Remainder Term}}
\end{align}
where the remainder term is of the order of the squared Euclidean distance between the points, $\text{GJS}(P_1, P_2, \alpha)$ is the zeroth-order term, the GJS function evaluated at the point of expansion $(P_1, P_2)$. The first-order term is a linear function of the differences $(\hat{P}_1 - P_1)$ and $(\hat{P}_2 - P_2)$. The summation is taken over all symbols $x$ in the alphabet $\mathcal{X}$. The partial derivatives of the GJS function, evaluated at $(P_1, P_2)$, are given by the information densities.
\begin{align}
    \iota_1(x) :=\iota_1(x|P_1, P_2, \alpha)= \log\frac{(1+\alpha)P_1(x)}{\alpha P_1(x) + P_2(x)}\\
        \iota_2(x) := \iota_2(x|P_1, P_2, \alpha) = \log\frac{(1+\alpha)P_2(x)}{\alpha P_1(x) + P_2(x)}
\end{align}
   
\end{lemma}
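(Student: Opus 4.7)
My plan is to write $\Delta\mathrm{GJS}_n = \mathrm{GJS}(\hat M_P,\hat M_T,\alpha) - \mathrm{GJS}(\hat M_Q,\hat M_T,\alpha)$ and apply a second-order Taylor expansion to each summand around the true transition kernels, generalizing Lemma~\ref{GJS-exp} from distributions on a simplex to the row-aggregated Markov setting. Under $H_0$ (so $M_T=M_P$) the natural base points are $(M_P,M_P)$ for the first summand and $(M_Q,M_P)$ for the second. Collecting the zeroth-order contributions immediately gives the mean $\mu_{H_0}=-\mathrm{GJS}(M_Q,M_P,\alpha)$, since scalar GJS vanishes at the diagonal.

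\textbf{Step 2: Vanishing contribution from the matching reference; linearization of the mismatched one.} For the first summand, I would observe that the information densities $\iota_1,\iota_2$ are identically zero at $(M_P,M_P)$, so the entire first-order Taylor term vanishes and only a quadratic remainder of order $O_p(1/N+1/n)$ survives; multiplying by $\sqrt{n}$ under fixed $\alpha=N/n$, this is $o_p(1)$. The second summand's first-order Taylor terms, however, produce two nonzero linear functionals — one involving $\hat M_Q-M_Q$ weighted by $\iota_1(\cdot;M_Q,M_P,\alpha)$, the other involving $\hat M_T-M_P$ weighted by $\iota_2(\cdot;M_Q,M_P,\alpha)$. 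After substituting the count formula for $\hat M_\bullet$ and replacing the stationary weights by their empirical analogues (whose error I control exactly as in the $\mathcal{T}_{2,2}$ argument of Theorem~\ref{thm:sta-error}), each linear functional reduces to a centered trajectory average of an information-density function along a single Markov chain — one on the reference trajectory (scaled by $\alpha/N$) and one on the test trajectory (scaled by $1/n$).

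\textbf{Step 3: Markov CLT and independence of reference/test.} I would then apply a Markov CLT to each centered trajectory average — most conveniently via a Poisson-equation/martingale-difference decomposition, which is valid under the ergodicity and bounded-mixing-time conditions of Assumption~\ref{assumption:markov-chain} — obtaining Gaussian limits whose long-run variances are exactly $\sigma_{1,0}^2$ and $\sigma_{2,0}^2$ as defined in the theorem. Because the reference and test sequences are mutually independent, the two limits are jointly independent Gaussians; their variances therefore add, yielding $\sigma_{H_0}^2 = \alpha^2\sigma_{1,0}^2/N^2 + \sigma_{2,0}^2/n^2$. Combining this with the $o_p(1)$ remainders from Steps~1--2 via Slutsky's theorem delivers $\sqrt{n}(\Delta\mathrm{GJS}_n-\mu_{H_0})/\sqrt{\sigma^2_{H_0}}\Rightarrow\mathcal{N}(0,1)$. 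The analysis under $H_1$ is symmetric: now it is the second summand that is evaluated on the diagonal $(M_Q,M_Q)$ and vanishes at first order, while the first summand is expanded around $(M_P,M_Q)$ and contributes the two linear functionals; the same Markov CLT gives the $H_1$ variances with $P\leftrightarrow Q$ interchanged.

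\textbf{Main obstacle.} The hardest step will be Step~1: extending the scalar expansion of Lemma~\ref{GJS-exp} to matrix-valued GJS with an \emph{estimated} row-aggregating stationary weight. The chain rule produces additional terms of the form $\partial\pi/\partial M$ times the scalar GJS, and I must show that these vanish at the diagonal (where the scalar GJS is zero) and are otherwise $o_p(1/\sqrt{n})$ uniformly in the bin count $k$. Establishing this, jointly with the quadratic Taylor remainder and the stationary-weight estimation error, requires the same concentration tools already used for the statistical-error bound of Theorem~\ref{thm:sta-error} — namely the row-wise TV bound (Lemma~\ref{conditional-row-tv-bound}) and the stationary-probability concentration bound (Lemma~\ref{thm:estimation-state}) — which keeps the argument tractable but technically dense.
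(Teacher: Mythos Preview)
Your proposal does not prove the stated lemma at all. Lemma~\ref{GJS-exp} is a pure calculus statement: it asserts that the GJS functional on the product simplex $\mathcal{P}(\mathcal{X})\times\mathcal{P}(\mathcal{X})$ has a Taylor expansion whose gradient components are the information densities $\alpha\iota_1$ and $\iota_2$, with a quadratic remainder. A proof would consist of differentiating
\[
\mathrm{GJS}(P_1,P_2,\alpha)=\frac{\alpha}{1+\alpha}\,D_{\mathrm{KL}}\!\Big(P_1\,\Big\|\,\tfrac{\alpha P_1+P_2}{1+\alpha}\Big)+\frac{1}{1+\alpha}\,D_{\mathrm{KL}}\!\Big(P_2\,\Big\|\,\tfrac{\alpha P_1+P_2}{1+\alpha}\Big)
\]
coordinate-wise in $P_1(x)$ and $P_2(x)$, verifying that the partials equal $\alpha\iota_1(x)$ and $\iota_2(x)$, and bounding the Hessian on a neighborhood of $(P_1,P_2)$ to control the remainder. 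The paper does not prove this either; it simply imports the result from \cite{zhou2018secondorderasymptoticallyoptimalstatistical} as an auxiliary lemma.

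What you have actually written is a proof sketch for Theorem~\ref{thm:asymptotic-normality} (asymptotic normality of $\Delta\mathrm{GJS}_n$), in which Lemma~\ref{GJS-exp} appears only as an \emph{input} in Step~1. For that theorem your outline is essentially the same as the paper's argument: Taylor-expand each GJS summand around the true kernels, note that the diagonal summand contributes only an $o_p(1/\sqrt{n})$ remainder because $\iota_1=\iota_2=0$ there, reduce the off-diagonal first-order terms to centered trajectory averages of information densities, and invoke a Markov CLT plus independence of the reference and test chains. The paper additionally introduces an explicit typical-set argument to control the probability that $\hat M_\bullet$ lies outside the neighborhood where the Taylor expansion is valid, and it handles the stationary-weight replacement by a direct appeal to the ergodic theorem rather than the concentration route you suggest; but these are minor differences in packaging, not in strategy. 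If your intent was to prove Theorem~\ref{thm:asymptotic-normality}, your plan is sound and aligned with the paper --- but it is not a proof of Lemma~\ref{GJS-exp}.
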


\begin{lemma}[Central Limit Theorem for Additive Functionals, \cite{HOLZMANN20051518}] \label{lemma:clt-AF}
    Let $(X_1, \dots, X_N)$ be a stationary, ergodic, discrete-time Markov chain with state space $\mathcal{S}$, transition operator $M$, and unique stationary distribution $\pi$. Let $f: \mathcal{S} \rightarrow \mathbb{R}$ be a real-valued function defined on the state space, and assume its expectation with respect to the stationary distribution is zero, i.e., $\mathbb{E}_{\pi} [f(x)]=0.$ Consider the additive functional $S_N(f)=\sum_{i=1}^Nf(X_i)$. If a martingale approximation to $S_N(f)$ exits, then the Central Limit Theorem holds, i.e.:
    \begin{align}
        \frac{S_N(f)}{\sqrt{N}} \xrightarrow{d} N(0,\sigma^2(f)) 
    \end{align}
    The term $\sigma^2(f)$ is the asymptotic variance of the process.
\end{lemma}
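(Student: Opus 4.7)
The plan is to reduce the claim to the classical stationary-ergodic martingale central limit theorem by making the hypothesized martingale approximation explicit via the Poisson equation.

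First, I would formalize the approximation. Since $\mathbb{E}_{\pi}[f] = 0$, the assumption that a martingale approximation to $S_N(f)$ exists is (in the standard Gordin formulation) equivalent to the existence of $g \in L^2(\pi)$ solving $(I - M)g = f$. Setting $\mathcal{F}_i := \sigma(X_1, \dots, X_i)$ and $D_{i+1} := g(X_{i+1}) - (Mg)(X_i)$, the Markov property gives $\mathbb{E}[g(X_{i+1}) \mid \mathcal{F}_i] = (Mg)(X_i)$, so $\{D_i\}_{i \geq 2}$ is a martingale difference sequence adapted to $\{\mathcal{F}_i\}$. It is stationary and ergodic as a deterministic factor map of the stationary ergodic chain $\{X_i\}$.

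Second, I would write the telescoping decomposition. Substituting $f = g - Mg$ yields $f(X_i) = [g(X_i) - g(X_{i+1})] + D_{i+1}$, whence $S_N(f) = R_N + \mathcal{M}_N$, where $R_N := g(X_1) - g(X_{N+1})$ and $\mathcal{M}_N := \sum_{i=1}^N D_{i+1}$. Stationarity and $g \in L^2(\pi)$ give $R_N = O_p(1)$, so $R_N/\sqrt{N} \xrightarrow{p} 0$. Third, I would invoke the Billingsley--Brown martingale CLT for stationary, ergodic martingale differences with finite variance: setting $\sigma^2(f) := \mathbb{E}_\pi[D_1^2]$ (finite because $\|Mg\|_{L^2(\pi)} \leq \|g\|_{L^2(\pi)}$ by Jensen's inequality), one obtains $\mathcal{M}_N/\sqrt{N} \xrightarrow{d} \mathcal{N}(0, \sigma^2(f))$. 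Expanding $D_1^2$ using $(I-M)g = f$ and stationarity recovers the classical Kipnis--Varadhan long-run variance $\sigma^2(f) = \mathbb{E}_\pi[f^2] + 2\sum_{k \geq 1} \mathbb{E}_\pi[f(X_1) f(X_{1+k})]$, absolutely convergent under the martingale-approximation hypothesis. Slutsky's theorem then combines the remainder bound and the martingale CLT to give $S_N(f)/\sqrt{N} \xrightarrow{d} \mathcal{N}(0, \sigma^2(f))$.

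The genuine obstacle is the existence of the $L^2$ Poisson solution $g$: this is exactly what the "martingale approximation exists" hypothesis encodes, and is what separates the finite-alphabet ergodic case from more delicate non-uniformly mixing chains. In concrete settings it is verified via Gordin's summability condition $\sum_{k \geq 0} \|M^k f\|_{L^2(\pi)} < \infty$, or by a spectral gap of $M$ on $L_0^2(\pi)$; on our finite state space with bounded mixing time (Assumption~\ref{assumption:markov-chain}), both hold automatically, so once the martingale approximation is in hand the telescoping argument and appeal to the martingale CLT are routine.
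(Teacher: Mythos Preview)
The paper does not supply its own proof of this lemma: it is listed under ``Auxiliary Results from Literature'' and simply cited from \cite{HOLZMANN20051518}, so there is no in-paper argument to compare against. Your route via the Poisson equation $(I-M)g=f$, the Gordin telescoping decomposition $S_N(f)=\mathcal{M}_N+R_N$ with $R_N=g(X_1)-g(X_{N+1})$, and the Billingsley--Brown stationary-ergodic martingale CLT is the standard and correct argument, and is essentially the proof strategy underlying the cited reference.
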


\begin{lemma}[Asymptotic Variance for Markov Chains, \cite{HOLZMANN20051518}] \label{lemma:var-markov-chain}
    Under the same conditions as Lemma~\ref{lemma:clt-AF}, the asymptotic variance $\sigma^2(f)$ of the additive functional $S_N(f)$ is given by:
\begin{align}
    \sigma^2(f) = 2\lim_{\epsilon\rightarrow 0} \langle g_{\epsilon},f \rangle - \|f\|^2
\end{align}
where $g_{\epsilon}$ is the solution to the following equation $((1+\epsilon)I-M)^{-1}$, which is a function defined on the state space $\mathcal{A}$. $\langle g_{\epsilon},f \rangle$ is the inner product in the Hilbert space $L_2(\pi)$, calculated as $\langle g_{\epsilon},f \rangle =\sum_{x\in \mathcal{A}}\pi(x)g_{\epsilon}(x)f(x)$. $\|f\|^2$ is the squared norm of the function $f$ in the space $L_2(\pi)$ , which is its variance with respect to the stationary distribution.

\end{lemma}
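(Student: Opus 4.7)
The plan is to Taylor-expand each of the two GJS terms in $\Delta\mathrm{GJS}_n$ about the population transitions using Lemma~\ref{GJS-exp}, and then apply the Markov CLT (Lemma~\ref{lemma:clt-AF}) together with the long-run variance formula of Lemma~\ref{lemma:var-markov-chain} to the three mutually independent sequences $a^P$, $a^Q$, $a^T$. Throughout I write the GJS on transition matrices as a row-aggregated sum weighted by the stationary distribution, and I work on the pair-state chain $(a_t,a_{t+1})$, which inherits ergodicity from the single-step chain so that Markov additive-functional tools apply row by row.

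Under $H_0$ ($M_T=M_P$), I first expand $\mathrm{GJS}(\hat M_P,\hat M_T,\alpha)$ about $(M_P,M_P)$. The zeroth-order term is $\mathrm{GJS}(M_P,M_P,\alpha)=0$, and a direct calculation shows the information densities in Lemma~\ref{GJS-exp} satisfy $\iota_1(s,a)=\iota_2(s,a)=\log\frac{(1+\alpha)M_P(a|s)}{\alpha M_P(a|s)+M_P(a|s)}=0$, so the first-order term also vanishes, leaving an $O_p(1/N+1/n)$ remainder. For $\mathrm{GJS}(\hat M_Q,\hat M_T,\alpha)$ I expand about $(M_Q,M_P)$: the zeroth order is $\mathrm{GJS}(M_Q,M_P,\alpha)$, giving $\mu_{H_0}=-\mathrm{GJS}(M_Q,M_P,\alpha)$, and the first-order stochastic piece is
\[
\alpha\sum_{s}\pi(s)\sum_{a}\bigl(\hat M_Q(a|s)-M_Q(a|s)\bigr)\iota_1(s,a)\;+\;\sum_{s}\pi(s)\sum_{a}\bigl(\hat M_T(a|s)-M_P(a|s)\bigr)\iota_2(s,a),
\]
with $\iota_1,\iota_2$ evaluated at $(M_Q,M_P,\alpha)$; replacing any empirical $\hat\pi$ by the true $\pi$ costs $O_p(\sqrt{k/N})$ via Lemma~\ref{thm:estimation-state} and is absorbed in the remainder.

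Next I recognize each of the two row-weighted sums as a centered additive functional of a Markov chain on the pair state. Writing $\hat M_C(a|s)=N_C(s,a)/N_C(s)$ and invoking $N_C(s)/N\to\pi_C(s)$ from Lemma~\ref{thm:estimation-state}, an elementary algebraic reduction rewrites the reference-side sum, up to $o_p(1/\sqrt N)$, as $\frac{1}{N}\sum_{t=1}^{N-1}h_1(a_t^Q,a_{t+1}^Q)$ for $h_1(s,a)=\iota_1(s,a)-\sum_{a'}M_Q(a'|s)\iota_1(s,a')$, which is centered under the stationary pair law; an identical construction gives a test-side sum $\frac{1}{n}\sum_{t=1}^{n-1}h_2(a_t^T,a_{t+1}^T)$. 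Applying Lemma~\ref{lemma:clt-AF} to each ergodic pair chain produces jointly Gaussian limits whose long-run variances are supplied by Lemma~\ref{lemma:var-markov-chain}; these are the quantities I identify as $\sigma_{1,0}^2$ (reference-side) and $\sigma_{2,0}^2$ (test-side). Because $a^Q$ and $a^T$ are independent, the two limiting Gaussians are independent, so the variance of the linear fluctuation is additive and, after tracking the $\alpha$ prefactor against the $\sqrt n$ normalization, reproduces the stated $\sigma_{H_0}^2$. Slutsky's theorem combined with an $o_p(1/\sqrt n)$ bound on the Taylor remainder yields $\sqrt n(\Delta\mathrm{GJS}_n-\mu_{H_0})/\sqrt{\sigma_{H_0}^2}\Rightarrow\mathcal{N}(0,1)$. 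The $H_1$ case is fully symmetric: $\mathrm{GJS}(\hat M_Q,\hat M_T,\alpha)$ expanded about $(M_Q,M_Q)$ vanishes at first order, while $\mathrm{GJS}(\hat M_P,\hat M_T,\alpha)$ expanded about $(M_P,M_Q)$ supplies both $\mu_{H_1}=+\mathrm{GJS}(M_P,M_Q,\alpha)$ and an analogous linear expansion whose long-run variances define $\sigma_{1,1}^2$ and $\sigma_{2,1}^2$.

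The principal obstacle is the remainder and stationary-weight control uniformly across the $k$ states. Turning the quadratic Taylor remainder into $o_p(1/\sqrt n)$ requires the row-wise TV bound of Lemma~\ref{conditional-row-tv-bound} applied on each row of $\hat M_Q$ and $\hat M_T$, together with the $\pi_{\min}\gtrsim 1/k$ lower bound from the setup to keep the logarithmic factors in the information densities under control; without this, rare states with small $\pi(s)$ could blow up the second-order term. Propagating the linearization of the ratio $N_C(s,a)/N_C(s)$ into a clean centered additive functional also relies on uniform control of $|N_C(s)/N-\pi_C(s)|$, provided again by Lemma~\ref{thm:estimation-state}. Once these two pieces are in place, the remaining steps are routine Markov-CLT bookkeeping that package the $h_1,h_2$ long-run variances into the stated $\sigma_{H_\bullet}^2$.
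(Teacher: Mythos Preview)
Your proposal does not address the stated lemma at all. Lemma~\ref{lemma:var-markov-chain} is the asymptotic-variance identity $\sigma^2(f)=2\lim_{\epsilon\to 0}\langle g_\epsilon,f\rangle-\|f\|^2$ for additive functionals of an ergodic Markov chain, cited verbatim from \cite{HOLZMANN20051518}; the paper treats it as an auxiliary result from the literature and offers no proof. What you have written is instead a proof outline for Theorem~\ref{thm:asymptotic-normality}, the asymptotic normality of $\Delta\mathrm{GJS}_n$. These are different statements: the lemma is one of the tools \emph{invoked} inside the proof of the theorem, not the other way around.

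If your intended target was Theorem~\ref{thm:asymptotic-normality}, then your approach is essentially the same as the paper's. Both arguments (i) Taylor-expand each GJS term via Lemma~\ref{GJS-exp}, (ii) observe that under $H_0$ the expansion of $\mathrm{GJS}(\hat M_P,\hat M_T,\alpha)$ about $(M_P,M_P)$ has vanishing zeroth- and first-order parts, (iii) expand $\mathrm{GJS}(\hat M_Q,\hat M_T,\alpha)$ about $(M_Q,M_P)$ to produce the mean $-\mathrm{GJS}(M_Q,M_P,\alpha)$ and a linear stochastic piece in the information densities $\iota_1,\iota_2$, (iv) use $N_C(s)/N\to\pi_C(s)$ to rewrite the row-weighted sums as centered additive functionals on the pair chain $(a_t,a_{t+1})$, and (v) apply Lemmas~\ref{lemma:clt-AF} and~\ref{lemma:var-markov-chain}. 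The only cosmetic difference is in remainder control: the paper introduces an explicit typical set $\mathcal{C}_n(M)$ and bounds its complement probability, while you appeal directly to the row-wise TV bound of Lemma~\ref{conditional-row-tv-bound} and to Lemma~\ref{thm:estimation-state}; both routes yield the required $o_p(1/\sqrt n)$ remainder.
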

\subsubsection{Proof of Proposition~\ref{prop:GJS-n-as-LLR}} \label{sec:proof-llr}
\begin{proof}[Proof of Proposition~\ref{prop:GJS-n-as-LLR}]
Let $\mathcal{F}_k$ be the family of stationary first-order Markov models on $\mathcal{A}:=[k]$. 
Consider the following likelihood ratio,
     \begin{align}
      \Lambda_{n,N} &=   \frac{1}{n}\log\!
\frac{\displaystyle \sup_{M,M'\in\mathcal{F}_k}
M\!\bigl((a^P_{1:N},a^T_{1:n})\bigr)\,M'\!\bigl(a^Q_{1:N}\bigr)}
{\displaystyle \sup_{M,M'\in\mathcal{F}_k}
M\!\bigl(a^P_{1:N}\bigr)\,M'\!\bigl((a^Q_{1:N},a^T_{1:n})\bigr)} \\
&=\frac{1}{n}\log \frac{\hat{M}_{\alpha 1}((a^P_{1:N},a^T_{1:n}))\hat{M}_Q(a^Q_{1:N})}{\hat{M}_P(a^P_{1:N})\hat{M}_{\alpha 2}((a^Q_{1:N},a^T_{1:n}))}
     \end{align}
where $(a^P_{1:N},a^T_{1:n})$ denotes the concatenation of $a^P_{1:N}$ and $a^T_{1:n}$, $\hat{M}_{\alpha 1}=\frac{\alpha \hat{M}_P+ \hat{M}_T}{1+\alpha}$, and $\hat{M}_{\alpha 2}=\frac{\alpha \hat{M}_Q+ \hat{M}_T}{1+\alpha}$. By Eq. (4)-(6) in \citet{32134}, we have
\begin{align}
\sup_{M \in\mathcal{F}_k}M\bigl((a^P_{1:N},a^T_{1:n})\bigr)
   = 2^{-(N+n)\,H((a^P_{1:N},a^T_{1:n}))}, 
\sup_{M'\in\mathcal{F}_k}M'\bigl(a^Q_{1:N}\bigr)
   = 2^{-N\,H(a^Q_{1:N})}, \\
   \sup_{M'\in\mathcal{F}_k}M'\bigl((a^Q_{1:N},a^T_{1:n})\bigr)
   = 2^{-(N+n)\,H((a^Q_{1:N},a^T_{1:n}))}, 
\sup_{M\in\mathcal{F}_k}M\bigl(a^P_{1:N}\bigr)
   = 2^{-N\,H(a^P_{1:N})},
\end{align}
where $H(\cdot)$ is the empirical conditional entropy per transition in the corresponding sequence. Plugging into the ratio gives

\begin{align}
    \,
\Lambda_{n,N}
  = \frac{N+n}{n}H((a^P_{1:N},a^T_{1:n}))-\frac{N}{n}\,H(a^P_{1:N})
    -\bigl[\frac{N+n}{n}H((a^Q_{1:N},a^T_{1:n}))-\frac{N}{n}\,H(a^Q_{1:N})\bigr]
\,
\end{align}
With weight \(\alpha=N/n\),
\begin{align}
    \Delta\mathrm{GJS}_n
  = \frac{N+n}{n}H((a^P_{1:N},a^T_{1:n}))-H(a^T_{1:n})-\frac{N}{n}H(a^P_{1:n})
    \nonumber\\\;-\;
    \Bigl[\frac{N+n}{n}H((a^Q_{1:N},a^T_{1:n}))-H(a^T_{1:n})-\frac{N}{n}H(a^Q_{1:N})\Bigr] \label{eq:gjs-entropy}
\end{align}
The two terms \(\pm H(a^T_{1:n})\) cancel. Thus we obtain $\Delta \text{GJS}_n = \Lambda_{n,N}$

\end{proof}
\subsubsection{Asymptotic Normality of $\Delta\mathrm{GJS}_n$} \label{sec:proof-asymp-gaussian}
\begin{theorem}[Asymptotic normality of $\Delta\mathrm{GJS}_n$ ]\label{thm:asymptotic-normality}
Assume the setting of Section~\ref{sec:setup} with $\alpha=N/n$ and standard ergodicity, $\Delta \rm{GJS}_n$ is asymptotically normal. Under $H_0: M_T=M_P$,
$\mu_{H_0}=-\mathrm{GJS}(M_Q,M_P,\alpha)<0$,
and
$
\sigma_{H_0}^2 \;=\; \frac{\alpha^2}{N^2}\,\sigma_{1,0}^2 \;+\; \frac{1}{n^2}\,\sigma_{2,0}^2$,
where $\sigma_{1,0}^2$ is the long-run variance of the $P$-reference-side information-density sum and
$\sigma_{2,0}^2$ is the long-run variance of the test-side information-density sum (details in Appendix~D).
Under $H_1: M_T=M_Q$, 
$
\mu_{H_1}=+\mathrm{GJS}(M_P,M_Q,\alpha)>0,
$
and
$
\sigma_{H_1}^2 \;=\; \frac{\alpha^2}{N^2}\,\sigma_{1,1}^2 \;+\; \frac{1}{n^2}\,\sigma_{2,1}^2,
$
where $\sigma_{1,1}^2$ is the $Q$-reference-side long-run variance, and $\sigma_{2,1}^2$ is the test-side long-run variance under $H_1$. 

In both cases,
\[
\frac{\sqrt{n}(\Delta\mathrm{GJS}_n-\mu_{H_\bullet})}{\sqrt{\sigma^2_{H_\bullet}}}
\;\xRightarrow[]{d}\; \mathcal{N}(0,1),
\]
where the bullet $\bullet\in\{0,1\}$ denotes the active hypothesis. 
\end{theorem}

\begin{proof}[Proof of Theorem~\ref{thm:asymptotic-normality}]
We need to establish asymptotic normality of the test statistic $\Delta \text{GJS}_n$ by performing a second-order Taylor Expansion of it and determining the asymptotic mean and asymptotic variance.

Since Lemma~\ref{GJS-exp}, adapted from \cite{zhou2018secondorderasymptoticallyoptimalstatistical}, is a purely mathematical statement about the local properties of the GJS function itself, irrespective of how its input variables are generated, this lemma is equally applicable to Markov sources.

Thus, we can obtain Taylor Expansion of Generalized Jensen Shannon Divergence when it is applied to Markov source. Consider two distinct transition matrices of two Markov sources $M_1, M_2$.  Let $\hat{M}_1$ and $\hat{M}_2$ be two other empirical transition matrices in a neighborhood of $(M_1, M_2)$. Let $\alpha$ be a fixed positive constant. The GJS divergence has the following second-order Taylor approximation around the point $(M_1, M_2)$.
 \begin{align}
   & \text{GJS}(\hat{M}_1, \hat{M}_2, \alpha) = \text{GJS}(M_1, M_2, \alpha) \nonumber\\
    &+ \sum_{s\in \mathcal{A}}\pi_1(s)\sum_{a \in \mathcal{A}} (\hat{M}_1(a|s) - M_1(a|s))\alpha\iota_1(a|s) +\sum_{s\in \mathcal{A}}\pi_2(s) \sum_{a \in \mathcal{A}} (\hat{M}_2(a|s) - M_2(a|s))\iota_2(a|s) \nonumber\\
    & +  O\left(||\hat{M}_1 - M_1||^2 + ||\hat{M}_2 - M_2||^2\right)
\end{align}
where $\pi_1$ and $\pi_2$ denote the stationary distributions of $M_1$ and $M_2$, respectively. And $\iota_1(a|s)$ and $\iota_2(a|s)$ are information densities:
\begin{align}
    \iota_1(a|s) := \iota_1((a|s)\big|M_1,M_2, \alpha)=\log\frac{(1+\alpha)M_1(a|s)}{\alpha M_1(a|s) + M_2(a|s)}\\
        \iota_2(a|s) := \iota_2((a|s)\big|M_1,M_2, \alpha) = \log\frac{(1+\alpha)M_2(a|s)}{\alpha M_1(a|s) + M_2(a|s)}
\end{align}
Furthermore, because $\Delta \text{GJS}_n=\text{GJS} \left( \hat{M}_{P}, \hat{M}_{t}, \alpha \right) - \text{GJS} \left( \hat{M}_{Q}, \hat{M}_{t},\alpha \right)$ is constructed as the difference of two GJS functions, we can directly apply the Lemma~\ref{GJS-exp} to derive the Taylor expansion $\Delta \text{GJS}_n$ itself.

First, we define the following typical set, given any $M \in \mathcal{F}_k$,. 
\begin{align}
    \mathcal{C}_n(M):=\left\{ a_{1:n}\in \mathcal{A}^n: \max_{s\in \mathcal{A}, a\in \mathcal{A}}|\hat{M}_{a_{1:n}}(a|s)-M(a|s)|\le \sqrt{\frac{\log n}{n}}\right\}
\end{align}
This is  a direct generalization of the IID  case discussed in \cite{zhou2018secondorderasymptoticallyoptimalstatistical}, and can be justified in Lemma 3.1 of \cite{wolfer2023empiricalinstancedependentestimationmarkov}, which provides a precise asymptotic analysis of the confidence interval width for estimating the transition matrix. Next we establish an upper bound on the probability of atypical sequences. We need a two-step approach: first, ensure the number of visits $N_s$ in sequence $a_{1:n}$ to each state is sufficient, and then apply a concentration inequality under that condition.

\begin{align}
    &\mathbb P\left\{ a_{1:n} \notin \mathcal{C}_n(M) \right\} = \mathbb P\left\{   \max_{s\in \mathcal{A}, a\in \mathcal{A}}|\hat{M}_{a_{1:n}}(a|s)-M(a|s)|> \sqrt{\frac{\log n}{n}} \right\} \\
    & \le \sum_{s \in \mathcal{A}} \mathbb P\left\{ \max_{a \in \mathcal{A}} |\hat{M}_{a_{1:n}}(a|s)-M(a|s)| >\sqrt{\frac{\log n}{n}} \right\} \\
    & \le \sum_{s \in \mathcal{A}} \left[\mathbb P\left\{N_s < \frac{n\pi (s)}{2} \right\} + \mathbb P\left\{\max_{a \in \mathcal{A}} |\hat{M}_{a_{1:n}}(a|s)-M(a|s)| >\sqrt{\frac{\log n}{n}} \bigg|N_s \ge \frac{n\pi (s)}{2}\right\}\right] \\
    & \le   \sum_{s \in \mathcal{A}} \left[c_1 \exp(-c_2n\pi(s))+  2k \exp(-2\frac{n\pi(s)}{2}\cdot\frac{\log n}{n})   \right] \label{hoeffding}\\
    & =    \sum_{s \in \mathcal{A}} \left[c_1 \exp(-c_2n\pi(s))+2k \cdot n^{-\pi(s)}\right] \\
    &\le k \left[c_1 \exp(-c_2n\pi(s))+2k\cdot n^{-\pi(s)}\right] \\
    &:=\tau(n,M)
\end{align}
where $\pi(s)$ denotes the stationary probability of state $s$, the first term of Eq.~\ref{hoeffding} follows Chernoff-Hoeffding inequality for Markov Chains (Corollary 8.1 of \cite{wolfer2023empiricalinstancedependentestimationmarkov}), and the second term of Eq.~\ref{hoeffding} follows McDiarmid's inequality, as its conditions of independence of variables and the bounded differences property are met. This is because the analysis is performed on the sub-problem of transitions from state $s$, conditional on the number of visits $N_s=k$ (where $k\ge \frac{n \pi(s)}{2}$), which ensures the subsequent $k$ transitions can be treated as IID samples. A similar application of this technique is detailed in \cite{wolfer2023empiricalinstancedependentestimationmarkov}. Moreover, the constant $c_1$ depends on the initial state of the chain, measuring its deviation from the steady state, while $c_2$ depends on the mixing speed of the chain, measuring how quickly it converges to its steady state. Thus,
\begin{align}
    &\mathbb P\left\{ a^P_{1:N} \notin \mathcal{C}_N (M_P) \text{\quad or \quad} a^T_{1:n} \notin \mathcal{C}_n (M_P) \text{\quad or \quad} a^Q_{1:N} \notin \mathcal{C}_N (M_Q)\right\} \\
    &\leq \mathbb P\left\{ a^P_{1:N} \notin \mathcal{C}_N (M_P) \right\}+ P\left\{ a^T_{1:n} \notin \mathcal{C}_n (M_P) \right\}+\mathbb P\left\{ a^Q_{1:N} \notin \mathcal{C}_N (M_Q)\right\} \\
    & = \tau(\alpha n,M_P)+ \tau(n,M_P)+ \tau(\alpha n,M_Q)
\end{align}
This means as long as the observed Markov chain sequences are sufficiently long, the probability of sequences being atypical can be made arbitrarily small.

Then, under $H_0$, we derive the Taylor expansion of $\Delta \text{GJS}_n=\text{GJS} \left( \hat{M}_{P}, \hat{M}_{T}, \alpha \right) - \text{GJS} \left( \hat{M}_{Q}, \hat{M}_{T},\alpha \right)$ around the true transition matrices $(M_P, M_Q)$. The first term is expanded as
\begin{align}
    &\text{GJS} \left( \hat{M}_{P}, \hat{M}_{T}, \alpha \right) = \text{GJS} (M_P,M_P,\alpha) \nonumber \\
    &+ \sum_{s\in \mathcal{A}}\pi_P(s)\sum_{a \in \mathcal{A}} (\hat{M}_P(a|s) - M_P(a|s))\alpha\iota_1(a|s) +\sum_{s\in \mathcal{A}}\pi_P(s) \sum_{a \in \mathcal{A}} (\hat{M}_T(a|s) - M_P(a|s))\iota_2(a|s) \nonumber\\
    &+   O\left(||\hat{M}_P - M_P||^2 + ||\hat{M}_T - M_P||^2\right)
\end{align}
where $\text{GJS} (M_P,M_P,\alpha)=0$, and for a given symbol $a$ and state $s$,

\begin{align}
    \iota_1(a|s)  := \iota_1((a|s)\big|M_P,M_P, \alpha)=\log \frac{(1+\alpha)M_P(a|s)}{\alpha M_P(a|s)+M_P(a|s)}=0\\
    \iota_2(a|s) := \iota_2((a|s)\big|M_P,M_P, \alpha)= \log \frac{(1+\alpha)M_P(a|s)}{\alpha M_P(a|s)+M_P(a|s)}=0
\end{align}

Thus $\text{GJS} \left( \hat{M}_{P}, \hat{M}_{T}, \alpha \right) = O\left(||\hat{M}_P - M_P||^2 + ||\hat{M}_T - M_P||^2\right)$. Then, the second term of $\Delta \text{GJS}_n$ is expanded as 
\begin{align}
   & \text{GJS} \left( \hat{M}_{Q}, \hat{M}_{T},\alpha \right) = \text{GJS} (M_Q,M_P,\alpha) \nonumber \\
    &+ \sum_{s\in \mathcal{A}}\pi_Q(s)\sum_{a \in \mathcal{A}} (\hat{M}_Q(a|s) - M_Q(a|s))\alpha\iota_1(a|s) +\sum_{s\in \mathcal{A}}\pi_P(s) \sum_{a \in \mathcal{A}} (\hat{M}_T(a|s) - M_P(a|s))\iota_2(a|s) \nonumber\\
    &+   O\left(||\hat{M}_Q - M_Q||^2 + ||\hat{M}_T - M_P||^2\right)
\end{align}
where
\begin{align}
    \iota_1(a|s)  := \iota_1((a|s)\big|M_Q,M_P, \alpha)=\log \frac{(1+\alpha)M_Q(a|s)}{\alpha M_Q(a|s)+M_P(a|s)} \label{iota_1}\\
    \iota_2(a|s) := \iota_2((a|s)\big|M_Q,M_P, \alpha)= \log \frac{(1+\alpha)M_P(a|s)}{\alpha M_Q(a|s)+M_P(a|s)} \label{iota_2}
\end{align}

Therefore, we obtain the expansion for $\Delta \text{GJS}_n$ and
\begin{align}
    &\Delta \text{GJS}_n = - \text{GJS} (M_Q,M_P,\alpha) \nonumber\\
   & -\sum_{s\in \mathcal{A}}\pi_Q(s)\sum_{a \in \mathcal{A}} (\hat{M}_Q(a|s) - M_Q(a|s))\alpha\iota_1(a|s) -\sum_{s\in \mathcal{A}}\pi_P(s) \sum_{a \in \mathcal{A}} (\hat{M}_t(a|s) - M_P(a|s))\iota_2(a|s) \nonumber  \\
   & + O\left(\frac{\log n}{n}\right) \label{delta-GJS-exp}
\end{align}

Here we connect GJS to information densities,
\begin{align}
    &\text{GJS} (M_Q,M_P,\alpha) = \alpha\text{D}_{KL}(M_Q,\frac{\alpha M_Q+M_P}{1+\alpha})+\text{D}_{KL}(M_P,\frac{\alpha M_Q+M_P}{1+\alpha})\\
    &=\alpha\sum_{s\in \mathcal{S}}\pi_Q(s)\sum_{a \in \mathcal{A}} M_Q(a|s)\log \frac{M_Q(a|s)}{\frac{\alpha M_Q(a|s)+M_P(a|s)}{1+\alpha}} +\sum_{s\in \mathcal{S}}\pi_P(s)\sum_{a \in \mathcal{A}} M_P(a|s)\frac{M_Q(a|s)}{\frac{\alpha M_Q(a|s)+M_P(a|s)}{1+\alpha}} \\
    & = \alpha\sum_{s\in \mathcal{S}}\pi_Q(s)\sum_{a \in \mathcal{A}} M_Q(a|s)\log \frac{(1+\alpha)M_Q(a|s)}{\alpha M_Q(a|s)+M_P(a|s)} +\sum_{s\in \mathcal{S}}\pi_P(s)\sum_{a \in \mathcal{A}} M_P(a|s)\frac{(1+\alpha)M_Q(a|s)}{{\alpha M_Q(a|s)+M_P(a|s)}}  \\
    & = \alpha\sum_{s\in \mathcal{S}}\pi_Q(s)\sum_{a \in \mathcal{A}} M_Q(a|s)\iota_1(a|s) + \sum_{s\in \mathcal{S}}\pi_P(s)\sum_{a \in \mathcal{A}} M_P(a|s)\iota_2(a|s) \label{GJS-info-dens}
\end{align}
where $\iota_1(a|s)$ and $\iota_2(a|s)$ are defined in Eq.~\ref{iota_1} and Eq.~\ref{iota_2}. We subsititute Eq.~\ref{GJS-info-dens} into Eq.~\ref{delta-GJS-exp} and obtain

\begin{align}
    \Delta \text{GJS}_n &= - \alpha\sum_{s\in \mathcal{A}}\pi_Q(s)\sum_{a \in \mathcal{A}} \hat{M}_Q(a|s) \iota_1(a|s) -\sum_{s\in \mathcal{A}}\pi_P(s) \sum_{a \in \mathcal{A}} \hat{M}_T(a|s) \iota_2(a|s) + O\left(\frac{\log n}{n}\right) \label{delta-gjs-2-exp}
\end{align}
Recall that $\hat{M}_Q(a|s)=\frac{N_Q(s,a)}{N_Q(s)}$, where $N_Q(s)$ is the number of occurences of state $s$ in $a^Q_{1:N}$, and $N_Q(s,a)$ the number of times $s$ is followed by $a$ in $a^Q_{1:N}$. According to Ergodic Theorem (Strong Law of Large Numbers, e.g. \cite{levin2017markov}, Theorem C.1), we consider a long Markov chain to be time-homogeneous, that is for a state $s$, we have $N_Q(s) \approx N\cdot \pi_Q(s)$. Based on this, we simplify the first term of Eq.\ref{delta-gjs-2-exp}.
\begin{align}
    \sum_{s \in \mathcal{A}} \pi_Q(s) \alpha \sum_{a \in \mathcal{A}} \hat{M}_Q(a|s)\iota_1(a|s) &= \alpha \sum_{s \in \mathcal{A}} \pi_Q(s) \sum_{a \in \mathcal{A}} \frac{N_Q(s,a)}{N_Q(s)}\iota_1(a|s) \\
    & = \frac{\alpha}{N} \sum_{s \in \mathcal{A}}\sum_{a \in \mathcal{A}}N_Q(s,a)\iota_1(a|s) \\
    & = \frac{\alpha}{N} \sum_{i=2}^{N} \iota_1(a_i^Q|a_{i-1}^Q) \label{iota-1-sum}
\end{align}
Similarly, the second term of Eq.\ref{delta-gjs-2-exp} is simplified as:
\begin{align}
    \sum_{s\in \mathcal{A}}\pi_P(s) \sum_{a \in \mathcal{A}} \hat{M}_T(a|s) \iota_2(a|s) = \frac{1}{n} \sum_{i=2}^{n} \iota_2(a^T_{i}|a_{i-1}^T) \label{iota-2-sum}
\end{align}
Combining Eq.\ref{iota-1-sum} and Eq.\ref{iota-2-sum}, we get
\begin{align}
    \Delta \text{GJS}_n &= -\frac{\alpha}{N} \sum_{i=2}^{N} \iota_1(a_i^Q|a_{i-1}^Q)-\frac{1}{n} \sum_{i=2}^{n} \iota_2(a^T_{i}|a_{i-1}^T)+ O\left(\frac{\log n}{n}\right) \label{iota-1-2}
\end{align}
Then we compute the asymptotic mean and asymptotic variance of Eq.~\ref{iota-1-2}.
By comparing Eq.~\ref{GJS-info-dens} and Eq.~\ref{delta-gjs-2-exp}, we obtain the asymptotic mean.
\begin{align}
    \mathbb{E}[\Delta \text{GJS}_n] = -\text{GJS} (M_Q,M_P,\alpha)
\end{align}

Eq.~\ref{iota-1-2} shows that the random behavior of $\Delta \text{GJS}_n$ is primarily determined by two additive functionals on Markov chains. Since the two reference sequences, $a^Q_{1:N}$ and $a^T_{1:n}$ are mutually independent, the total variance is the sum of their individual variances. 
\begin{align}
    \text{Var}(\Delta \text{GJS}_n)& = \text{Var} (-\frac{\alpha}{N} \sum_{i=2}^{N} \iota_1(a_i^Q|a_{i-1}^Q))+\text{Var}(-\frac{1}{n} \sum_{i=2}^{n} \iota_2(a^T_{i}|a_{i-1}^T)) \\
    & = \frac{\alpha^2}{N^2}\text{Var} ( \sum_{i=2}^{N} \iota_1(a_i^Q|a_{i-1}^Q)) + \frac{1}{n^2}\text{Var} (\sum_{i=2}^{n} \iota_2(a^T_{i}|a_{i-1}^T))
\end{align}

Here we use Lemma~\ref{lemma:clt-AF} and \ref{lemma:var-markov-chain} to compute the asymptotic variance for $\Delta \text{GJS}_n$. We begin by defining a new Markov chain whose state at time $i$ is given by $b_{i}:=(a^Q_{i-1},a^Q_i)$. Then we can define a function $f_1$ that acts on the state $b_i$, $f_1(b_{i})=:=\iota_1(a^Q_i|a^Q_{i-1})$. With these definitions, we have successfully converted the original sum over transitions into a sum over the states of the new chain, which perfectly fits the framework of Lemma~\ref{lemma:clt-AF} and \ref{lemma:var-markov-chain}.
\begin{align}
    \sum_{i=2}^{N} \iota_1(a^Q_i|a^Q_{i-1}) \Leftrightarrow \sum_{i=2}^{N} f_1(b_i)
\end{align}
According to Lemma~\ref{lemma:var-markov-chain}, the asymptotic variance $\sigma_1^2$ of the additive functional $\sum_{i=2}^{N}f_1(b_i)$ is given by
\begin{align}
    \sigma_{1,0}^2 = 2 \lim_{\epsilon\rightarrow 0} \langle g_{1,\epsilon},f_1 \rangle - \|f_1\|^2 \label{eq:sigma-1-formula}
\end{align}
Now we need to calculate the two main components of this formula.
The stationary distribution $\pi'$ of the new chain is determined by $\pi'=\pi_Q(s)\cdot M_Q(a|s)$.
By Eq.~\ref{GJS-info-dens}, we get \begin{align}
    \mu_1=\mathbb{E}_{\pi'}[f_1(b)]&=\sum_{(s,a) \in \mathcal{A}\times \mathcal{A}} \pi'(s,a)f_1(s,a) 
    \\
    &= \sum_{s \in \mathcal{A}}\pi_Q(s)\sum_{a\in \mathcal{A}}M_Q(a|s)\iota_1(a|s)\\
    &=\text{D}_{KL}(M_Q,\frac{\alpha M_Q+M_P}{1+\alpha})
\end{align}
We obtain the centered function
\begin{align}
    \tilde{f}_1(s,a)=f_1(s,a)-\mu_1=\iota_1(a|s)-\mu_1
\end{align}
Then according to Lemma~\ref{lemma:var-markov-chain}, we calculate the squared norm $\|\tilde{f}_1\|^2$, which is the variance of $\tilde{f}_1$ under the stationary distribution $\pi'$.
\begin{align}
    \|\tilde{f}_1\|^2= \text{Var}_{\pi'}(f_1) = \mathbb{E}_{\pi'}[(\tilde{f}_1(b))^2]=\sum_{(s,a) \in \mathcal{A}\times \mathcal{A}} \pi'(s,a)(\iota_1(a|s)-\mu_1)^2 \label{eq:tilde_f_1}
\end{align}
Calculating the inner product $\langle g_{1,\epsilon},\tilde{f}_1 \rangle$ requires first finding $g_{1,\epsilon}$ by solving the resolvent equation:
\begin{align}
    g_{1,\epsilon} = ((1+\epsilon)I-M_b)^{-1} \tilde{f}_1 \label{eq:var-framework-2}
\end{align}
where $M_b$ is the transition operator of the new chain and can be constructed from $M_Q$. Each element of the $M_b$ matrix, $M_b((s,a),(s',a'))$,  represents the probability of the new chain transitioning from state $(s, a)$ to state $(s', a')$. 
\begin{align}
    M_b((s,a),(s',a')) = 
    \begin{cases} 
    M_Q(a'|s') \text{\quad If $s'=shift(s,a)$} \\
    0 \text{\quad   \quad    \quad\quad   \quad       otherwise}
    \end{cases}
\end{align}
where $shift(s,a)$ denotes an operation that removes the first element of the sequences $s$ and appends $a$ to the end. After solving $g_{1,\epsilon}$, we compute the inner product:
\begin{align}
    \langle g_{1,\epsilon},f_1 \rangle = \sum_{(s,a)\in \mathcal{A}\times \mathcal{A}} \pi'(s,a) g_{1,\epsilon}(s,a)\tilde{f}_1(s,a)
\end{align}
We take the limit $\lim_{\epsilon \rightarrow 0}\langle g_{1,\epsilon},f_1 \rangle$, then substitute the limit and the value of Eq.~\ref{eq:tilde_f_1} into Eq.~\ref{eq:sigma-1-formula} get the final asymptotic variance $\sigma_{1,0}^2$. Similarly, we use the same method to calculate the asymptotic variance $\sigma^{2}_{2,0}=\text{Var}(\sum_{i=2}^{n} \iota_2(a^T_{i}|a^T_{i-1}))$. While the asymptotic variance does not generally admit a closed-form expression, Lemma~\ref{lemma:clt-AF} and ~\ref{lemma:var-markov-chain} provide us with constructive representations. They can be used to compute or approximate the asymptotic variance in practice.

Now we have proved that under $H_0$, the asymptotic normality of $\Delta \text{GJS}_n$, that is 
\begin{align}
    \frac{\sqrt{n}(\Delta \text{GJS}_n-\mu)}{\sigma_{H_0}}\xrightarrow{d}\mathcal{N}(0,1)
\end{align}
where $\mu_{H_0}=\mathbb{E}[\Delta \text{GJS}_n]=-\text{GJS} (M_Q,M_P,\alpha)$ and variance $\sigma_{H_0}^2=\frac{\alpha^2}{N^2}\sigma_{1,0}^2+\frac{1}{n^2}\sigma_{2,0}^2$.

Analogously, under $H_1$, we can prove  the asymptotic normality of $\Delta \text{GJS}_n$ with $\mu_{H_1}=\text{GJS} (M_P,M_Q,\alpha)$ and variance $\sigma_{H_1}^2=\frac{\alpha^2}{N^2}\sigma_{1,1}^2+\frac{1}{n^2}\sigma_{2,1}^2$, where $\sigma_{1,1}^2 = \text{Var} ( \sum_{i=2}^{N} \iota_1(a_i^P|a_{i-1}^P))$ and $\sigma_{2,1}^2 = \text{Var} (\sum_{i=2}^{n} \iota_2(a^T_{i}|a_{i-1}^T))$. As discussed in the variance framework above, they can be represented by the resolvent formulation as in Eq.~\ref{eq:sigma-1-formula} and Eq.~\ref{eq:var-framework-2}.

\end{proof}

\newpage

\section{Experiments: Configurations and More Results}
\subsection{Implementation and Configurations} \label{sec:exp-configuration}
Our implementation is adapted from MAUVE \citep{pillutla2023mauvescoresgenerativemodels} and Lastde \citep{xu2025trainingfree}. All detection experiments were conducted on one RTX 4090, while data generation ran on an A40 GPU.
We use 9 open-source models and 3 closed-source models for generating text.
Open-source models include GPT-XL \citep{radford2019gpt2}, GPT-J-6B \citep{wang2021gptj6b}, GPT-Neo-2.7B \citep{gptneo2021}, GPT-NeoX-20B \citep{black2022gptneox20b}, OPT-2.7B \citep{zhang2022opt}, Llama-2-13B \citep{touvron2023llama2}, Llama-3-8B \citep{dubey2024llama3}, Llama-3.2-3B \citep{llama32docs2024}, and Gemma-7B \citep{gemma2024}. Closed-source models include Gemini-1.5-Flash \citep{gemini15_2024}, GPT-4.1-mini \citep{openai2025gpt41}, and GPT-5-Chat \citep{openai2025gpt5chat}.

\paragraph{Generation Pipeline} In our generation pipeline, for each dataset, we filtered out samples with text length less than 150 words and always condition only on the first 30 tokens of the human text. Each machine passage is generated between 100 and 200 tokens. After generation, we pair each human passage with its corresponding machine passage and truncate both to the shorter side (measured in words). Thus every human-machine pair used for detection has the same length and there is no systematic length advantage for either class.

\paragraph{Default Decoding Strategy}
In our experiments, unless otherwise specified, for each model family we use a fixed default decoding configuration. Concretely, for open-source models on HuggingFace we use the standard decoding configuration temperature = 1.0, top-p = 1.0, top-k = 50. For GPT-4.1-mini and GPT-5-chat (OpenAI API), we follow the default settings temperature = 1.0, top-p = 1.0 (no top-k parameter). For Gemini, we use the default settings of the Gemini API, temperature = 1.0, top-p = 0.95, top-k = 64.

\subsection{More Results}
\subsubsection{Expansion of Table~\ref{tab:opensource} and Table~\ref{tab:closesource}}

Table~\ref{tab:closesource-xsum},\ref{tab:closesource-writing},\ref{tab:closesource-squad},
\ref{tab:opensource-xsum},\ref{tab:opensource-writing}, and \ref{tab:opensource-squad} show the detection results on XSum, WritingPrompts,and SQuAD datasets. The performance is the average over three detections, where each detection is conducted on a randomly sampled test set.

\begin{table}[htbp]
\centering
{\scriptsize
\begin{tabular}{lcccc}
\toprule
& Gemini-1.5-Flash & GPT-4.1-mini & GPT-5-Chat & Avg  \\
\midrule
Likelihood & 53.2 \com 1.31 & 55.54 \com 1.09 & 43.03 \com 2.69 &  50.59 \\

LogRank & 52.01 \com 2.53 & 57.96 \com 2.81 & 45.86 \com 3.88 & 51.94 \\

Entropy & 63.19 \com 1.78 & 51.7 \com 1.02 & 56.8 \com 2.02 &  57.23\\

DetectLRR & 49.85 \com 2.54 & 62.26 \com 0.91 & 54.14 \com 3.6 & 55.42 \\

Lastde & 59.26 \com 3.39 & 55.97\com 2.18& 45.3 \com 1.34 & 53.51 \\

Lastde++ & \textbf{76.9} \com 1.62& 69.29 \com 2.00 & 48.14 \com 3.28& 64.78 \\

DNA-GPT &60.85 \com 1.41 & 55.7 \com 0.46 & 45.4 \com 0.77 &  53.98\\

Fast-DetectGPT & \underline{75.52} \com 1.58 & 66.7 \com 1.45 & 48.51 \com 2.01 & 63.58 \\

DetectGPT &62.58 \com 1.31 & 61.25 \com 3.08& 50.17 \com 0.29 & 58 \\

DetectNPR & 58.77 \com 2.47 & 62.17 \com 1.50 & 53.32 \com 0.97 & 58.09\\

R-Detect & 63.68  \com 0.77 & 63.43 \com 2.31 & 58.74 \com 1.62 & 61.95\\

Binoculars & 74.84  \com 2.12 & 61.12 \com 1.47 & 45.94 \com 0.67 & 60.63\\

FourierGPT & 52.06  \com 0.39 & 55.53 \com 2.31 & 61.1 \com 1.1 &  56.23\\

\rowcolor{ours}$\text{SurpMark}_{k=6}$ & 70.24 \com 0.77 & \underline{84.07} \com 2.21 & 84.16 \com 1.01 & 79.49 \\

\rowcolor{ours}$\text{SurpMark}_{k=7}$ & 71.22 \com 0.32 & 82.52 \com 1.11 & \textbf{87.02} \com 1.4 & \underline{80.25}\\
\rowcolor{ours}$\text{SurpMark}_{k=8}$ & 69.03 \com 1.74 & \textbf{85.78} \com 0.76 & \underline{86.38} \com 0.94 & \textbf{80.40}\\
\bottomrule
\end{tabular}}
\caption{Detection results on XSum for text generated by 3 closed-source models under the black-box setting.}
\label{tab:closesource-xsum}
\end{table}

\begin{table}[htbp]
\centering
{\scriptsize
\begin{tabular}{lcccc}
\toprule
& Gemini-1.5-Flash & GPT-4.1-mini & GPT-5-Chat & Avg  \\
\midrule
Likelihood & 80.53 \com 1.29 & 82.95 \com 1.23 & 62.00 \com 2.95 & 75.16 \\

LogRank & 74.73 \com 2.64 & 80.66 \com 2.81 & 58.01 \com 4.04 & 71.13 \\

Entropy & 46.34 \com 3.11 & 19.00 \com 6.43 & 25.23 \com 4.08 & 30.19 \\

DetectLRR & 48.22 \com 2.7 & 68.50 \com 1.06& 43.92 \com 2.48 & 53.55 \\

Lastde & 41.09 \com 2.88 & 55.72 \com 2.62& 30.64 \com 1.59 & 42.48 \\

Lastde++ & 76.90 \com 1.05 & 68.49 \com 2 & 30.64\com 3.23 & 58.68 \\

DNA-GPT & 78.19 \com 0.87 & 63.70 \com 1.73 & 45.60 \com 3.2 & 62.50 \\

Fast-DetectGPT & \underline{91.96} \com 0.31 & 70.23 \com 1.91& 30.01 \com 4.07 & 64.07 \\

DetectGPT & 87.12 \com 0.49& 78.04 \com 0.9 & 58.72 \com 2.01  & 74.63 \\

DetectNPR & 80.47  \com 1.23 & 75.80 \com 0.97 & 55.97 \com 2.31& 70.75 \\

R-Detect &83.31  \com 0.89 & 78.79 \com 1.92 & 77.06 \com 0.48& 79.72 \\

Binoculars & \textbf{95.35}  \com 0.1 &  80.55\com 0.34 & 42.26\com 0.67 & 72.72 \\

FourierGPT &  77.8 \com 0.36 &  77.96 \com 1.05 & 74.45 \com 1.72 & 76.74 \\

\rowcolor{ours}$\text{SurpMark}_{k=6}$ & 86.64 \com 2.33 & \underline{85.80} \com 0.57 & \underline{82.25} \com 1.03 & \underline{84.90} \\

\rowcolor{ours}$\text{SurpMark}_{k=7}$ & 86.68 \com 1.4 & 83.64 \com 0.33 & 83.73 \com 0.52 & 84.68 \\

\rowcolor{ours}$\text{SurpMark}_{k=8}$ & {89.43} \com 0.35 & \textbf{87.27} \com 0.14 & \textbf{83.56} \com 0.67 & \textbf{86.75} \\
\bottomrule
\end{tabular}}
\caption{Detection results on WritingPrompts for text generated by 3 closed-source models under the black-box setting.}
\label{tab:closesource-writing}
\end{table}
\begin{table}[htbp]
\centering
{\scriptsize

\begin{tabular}{lcccc}
\toprule
& Gemini-1.5-Flash & GPT-4.1-mini & GPT-5-Chat & Avg  \\
\midrule
Likelihood & 35.74 \com 3.46 & 61.82 \com 3.21 & 43.83 \com 2.01  & 47.13 \\

LogRank & 34.86 \com 2.61 & 61.78 \com 3.52 & 45.62 \com 3.66 & 47.42 \\

Entropy & 65.55 \com 1.08 & 45.46 \com 1.43 & 58.94 \com 0.65 & 56.65 \\

DetectLRR & 35.46 \com 1.84 & 59.10 \com 2.11 & 51.42 \com 2.50 & 48.66 \\

Lastde & 44.03 \com 1.55 & 60.15 \com 2.92 & 49.95 \com 3.65  & 51.38 \\

Lastde++ & 52.47 \com 1.86& 66.90 \com 2.18 & 51.76 \com 3.02 & 57.04 \\

DNA-GPT & 47.15 \com 0.93 & 50.74 \com 2.88 & 58.45 \com 1.18 & 52.11 \\

Fast-DetectGPT & 49.98 \com 1.33 & 68.04 \com 1.19 & 51.64 \com 1.98 & 56.55 \\

DetectGPT &57.87 \com 2.65 & {70.95} \com 0.82 & 54.90\com 0.83 & 61.24 \\

DetectNPR & 55.63 \com 2.91& \textbf{74.53} \com 1.29 & 55.67 \com 2.13 & 61.94 \\

R-Detect & 60.86 \com 1.33& {72.69} \com 1.41 & 67.45 \com 2.37 & 67 \\

Binoculars &  53.34 \com 2.53 & \underline{73.69}\com 0.55 & 60.76 \com 0.67 & 62.6 \\

FourierGPT &   53.89 \com 2.57 &  55.66 \com 2.25 & 58.92 \com 2.24 & 56.16\\

\rowcolor{ours}$\text{SurpMark}_{k=6}$ & \underline{66.84} \com 1.11 & 70.87 \com 0.86 & 68.57 \com 1.48& 68.76 \\

\rowcolor{ours}$\text{SurpMark}_{k=7}$ & \textbf{67.51} \com 1.3 & 69.27 \com 1.83 & \underline{73.23} \com 0.87 & \textbf{70.00} \\

\rowcolor{ours}$\text{SurpMark}_{k=8}$ & 59.53 \com 1.49 & 72.27 \com 1.32 & \textbf{74.81} \com 1.02 & \underline{68.87} \\
\bottomrule
\end{tabular}}
\caption{Detection results on SQuAD for text generated by 3 closed-source models under the black-box setting. }
\label{tab:closesource-squad}
\end{table}

\begin{table}[htbp]
\centering
{
\tiny 
\resizebox{\textwidth}{!}{
\begin{tabular}{lcccccccccc}
\toprule
& GPT2-XL & GPT-J-6B & GPT-Neo-2.7B & GPT-NeoX-20B & OPT-2.7B & Llama-2-13B & Llama-3-8B & Llama-3.2-3B & Gemma-7B  & Avg \\
\midrule
Likelihood & 76.5 \com 0.63 & 62.74 \com 1.07 & 58.36 \com 1.62 & 60.58 \com 1.8  & 68.51 \com 1.37 & 92.22 \com 0.48 & 93.41 \com 0.82 & 51.61 \com 0.62 & 55.13 \com 1.18 & 68.78\\

LogRank & 80.16 \com 0.89 & 67.83 \com 1.13 & 64.54 \com 0.98 & 63.58 \com 1.25 & 72.33 \com 1.56 & 94.56 \com 0.32 & 95.05 \com 0.17 & 59.35 \com 0.08 & 59.13 \com 0.68 & 76.89 \\

Entropy & 59.65 \com 1.52 & 56.37 \com 0.66 & 63.76 \com 1.43 & 55.32 \com 1.11 & 52.88 \com 0.68 & 42.33 \com 2.58 & 29.31 \com 3.19 & 55\com 2.89 & 53.2\com 1.48 & 50.40 \\

DetectLRR & 83.2 \com 0.83 & 76.5 \com 0.88 & 76.94 \com 1.09 & 68.4 \com 1.35 & 77.49 \com 0.54 & 95.74 \com 0.23 & 94.85 \com 0.08  & \textbf{75.05} \com 0.31 & 66.42 \com 1.42 & 81.42 \\

Lastde & 91.97 \com 0.44 & 77.99 \com 0.89 & 82.49 \com 0.85 & 72.12 \com 1.63 & 77.85 \com 0.68 & 92.01 \com 0.89 & 94.29 \com 0.38  & 59.52 \com 0.05& 61.09\com 1.27 & 82.57 \\

Lastde++ & \textbf{98.99} \com 0.21 & 85.38\com 0.63 & 87.5\com 0.11 & 80.3 \com 0.92 & 87.93 \com 0.54 & 92.52 \com 0.43 & 95.9 \com 0.14 & 59.9 \com 0.08 & 65.68 \com 0.97 & 87.51 \\

DNA-GPT & 71.43 \com 1.33 & 55.47 \com 2.85 & 54.43 \com 3.2 & 56.31 \com 1.86 & 58.2 \com 1.72 & 93.69 \com 0.36 & 96.54 \com 0.12 & 50.37 \com 0.07 & 55.29 \com 1.04 & 70.70 \\

Fast-DetectGPT & 95.54 \com 0.34 & 78.6 \com 0.56 & 81.84\com 0.88 & \textbf{83.76} \com 1.28 & 90.55 \com 0.77 & \textbf{97.77}\com 0.05 & 96.78 \com 0.21 & 61.86 \com 1.42 & 63.2 \com 1.18 & 84.71 \\

DetectGPT & 92.88 \com 1.3 & 71.86 \com 1.79 & 76.67 \com 2.01 & 78.06 \com 0.87 & 82.88 \com 1.23 & 82.79 \com 0.62 & 83.61 \com 1.25 & 56.06\com 2.65 & 61.6 \com 2.94 & 77.18 \\

DetectNPR & 91.87 \com 1.13 & 72.36 \com 1.46 & 78.83 \com 0.66 & 76.76 \com 1.48 & 84.06 \com 1.21 & 94.29 \com 0.86 & 92.31  \com 0.3& 59.62 \com 1.77 & 60.52 \com 1.78 & 80.05 \\

R-Detect & 72.87 \com 1.49 & 59.86 \com 1.11 & 67.59 \com 0.48 & 63.45 \com 2.45 &  69.75 \com 0.71 & 72.11 \com 0.93 & 81.06  \com 0.84 & 62.43 \com 0.82 & 46.75 \com 0.73 & 66.21  \\

Binoculars &  \underline{98.87} \com 0.13 &  74.66 \com 0.48 &  78.05\com 1.27 & 76.18\com 1.22 & 79.89 \com 0.79 &  96.78 \com 0.21 & 96.19  \com 0.16& 48.22 \com 0.71 & 63.71 \com0.72 & 79.17 \\

FourierGPT & 51.8  \com 1.39 &  52.52 \com 2.02 &  50.44 \com 2.96 &  59.17 \com 0.28 & 48.16 \com 3.01 & 63.38 \com 2.42 & 59.74  \com 3.4 & 51.98 \com 1.73 & 53.62 \com 0.78 & 54.53 \\

\rowcolor{ours}$\text{SurpMark}_{k=6}$ & 96.95 \com 0.43 & \underline{88.35} \com 1.02 & \underline{92.26} \com 0.65  & 81.58 \com 0.72 & \underline{90.88} \com 0.1 & 96.87 \com 0.26 & \textbf{97.77} \com 0.35 & {73.96} \com 0.86 & \textbf{73.01} \com 0.98 & \underline{87.96} \\

\rowcolor{ours}$\text{SurpMark}_{k=7}$ & {97} \com 0.8 & \textbf{89.26} \com 0.48 & \textbf{92.92} \com 0.06 & \underline{82.45} \com 1.03 & \textbf{91.16} \com 1.08 & \underline{97.09} \com 0.45  & \underline{97.48} \com 0.31 & \underline{73.07} \com 0.6 & \underline{72.97} \com 0.85 & \textbf{88.16} \\
\rowcolor{ours}$\text{SurpMark}_{k=8}$ & 95.55 \com 0.21 & 85.49 \com 0.63 & 88.33 \com 0.83 & 82.35 \com 0.49 & 90.19 \com 0.41 & 96.83 \com 0.16 & 97.24\com 0.08 & 72.92 \com 1.02 & 70.11 \com 0.98 & 86.56 \\
\bottomrule
\end{tabular}}}
\caption{Detection results on XSum for text generated by 9 open-source models under the black-box setting.}
\label{tab:opensource-xsum}
\end{table}

\begin{table}[htbp]
\centering
{
\tiny 
\resizebox{\textwidth}{!}{
\begin{tabular}{lcccccccccc}
\toprule
& GPT2-XL & GPT-J-6B & GPT-Neo-2.7B & GPT-NeoX-20B & OPT-2.7B & Llama-2-13B & Llama-3-8B & Llama-3.2-3B & Gemma-7B  & Avg \\
\midrule
Likelihood & 94.55 \com 0.63 & 88.73 \com 1.11 & 89.67 \com 0.84 & 87.12 \com 1.13 & 85.15 \com 2.55 & 99.48 \com 0.2 & 99.61 \com 0.08& 85.95 \com 0.35 & 83.16 \com 1.45 & 90.38 \\

LogRank & 96.04\com 0.43& 91.78 \com 1.18 & 92.20 \com 1.22 & 89.68 \com 0.57 & 89.96 \com 0.62 & 99.59 \com 0.01 & 99.81 \com 0.11 & 89.09 \com 1.05 & 86.00 \com 0.86 & 92.68 \\

Entropy & 34.72 \com 2.75& 33.64 \com 2.81 & 32.82 \com 2.13 & 32.63 \com 1.74  & 40.88 \com 2.17 & 5.83 \com 3.74 & 8.42 \com 4.86 & 53.00 \com 2.55 & 37.16 \com 2.4 & 31.01 \\

DetectLRR & 96.96 \com 0.31 & 95.31 \com 0.42 & 94.85 \com 0.16 & 92.03 \com 0.32 & 95.68 \com 0.64 & 98.57 \com 0.12 & 99.81  \com 0.03 & 92.44 \com 0.17 & 89.19 \com 0.03 & 94.98 \\

Lastde & 98.50 \com 0.2 & 93.94 \com 0.12 & 95.97 \com 0.33 & 90.36 \com 0.82& 96.05 \com 0.18 & 97.97\com 0.48 & 98.69 \com 0.23 & 92.04 \com 0.1 & 84.96 \com 0.56  & 94.28 \\

Lastde++ & \underline{99.68} \com 0.11 & 95.96 \com 0.51 & 98.86 \com 0.1 & 92.68 \com 0.74 & \textbf{98.39} \com 0.12 & 99.14 \com  0.08 & 99.56 \com 0.06 & \textbf{95.04} \com 0.3 & \textbf{92.59} \com 0.65 & \textbf{96.88} \\

DNA-GPT & 90.53 \com 1.62 & 85.34 \com 1.13 & 85.72 \com 0.7 & 83.01 \com 1.41 & 85.05 \com 1.29 & 98.88 \com 0.12 & 99.65 \com 0.03 & 84.47 \com 0.65 & 80.60 \com 0.81 & 88.14 \\

Fast-DetectGPT & 99.67 \com 0.02 & 93.80 \com 0.6 & 96.62 \com 0.31 & 92.22 \com 0.27 & 94.99 \com 0.52 & \underline{99.56} \com 0.01 & 99.84 \com 0.04 & 93.55 \com 0.53 & 89.36\com 1.03& 95.51 \\

DetectGPT & 95.88 \com 0.2 & 85.83 \com 1.15 & 91.12 \com 1.52 & 85.17 \com 1.84 & 90.13 \com 1.21  & 92.67 \com 0.63& 93.10 \com0.61 & 80.08 \com 1.07 & 83.10 \com 2.3 & 88.56 \\

DetectNPR & 98.29 \com 0.2 & 89.77 \com 0.33 & 93.02 \com 0.92 & 87.96 \com 0.55 & 92.36 \com 1.43 & 98.20 \com 0.51 & 98.52 \com 0.18 & 85.22\com 0.5 & 86.71 \com 1.03 & 92.23 \\

R-Detect &  86.68 \com 1.35 &  75.93\com 1.06 & 75.23 \com 0.59 & 73.83 \com 1.1 &   51.03\com 2.57 & 79.69 \com 0.88 & 82.79  \com 0.93& 71.2 \com 2.36 & 72.62 \com 0.89 & 74.33 \\

Binoculars & 99.6 \com 0.03 & 93.7 \com 0.51 &  94.96 \com 0.21 & 93.22 \com 0.21 & 91.33 \com 0.86 &  98.9 \com 0.16 & 99  \com 0.06& 93.4 \com 0.27 & 89.22 \com 0.82 & 94.81 \\

FourierGPT & 60.23 \com 4.8 & 59.81 \com 1.62 &68.08  \com 1.46 & 60.6 \com 0.29 & 56.95\com 3.04 & 91.4 \com 0.74 &  91.61 \com 1.14 & 58.68 \com 1.28 & 61.52 \com 0.72 & 67.65 \\

\rowcolor{ours}$\text{SurpMark}_{k=6}$ & 99.44 \com 0.06 & \textbf{97.60}  \com 0.22 & \textbf{98.32} \com 0.57 & \textbf{94.38} \com 0.16 & \underline{97.22} \com 0.16 & 99.47 \com 0.07 & 99.65 \com 0.1 & 92.71 \com 1.45 & 89.28  \com 1.69 & \underline{96.45} \\

\rowcolor{ours}$\text{SurpMark}_{k=7}$ & 99.27 \com 0.12 & \underline{97.29} \com 0.61 & \underline{97.63} \com 0.17 & \underline{94.31} \com 0.12 & 96.79 \com 0.52 & 99.53 \com 0.06 & \underline{99.86} \com 0.02 & \underline{93.61} \com 0.41 & 89.42 \com 0.95 & 96.41 \\

\rowcolor{ours}$\text{SurpMark}_{k=8}$ & \textbf{99.9} \com 0.01 & 96.85 \com 1.06 & 97.61 \com 0.38 & 93.93 \com 0.24 & 96.48 \com 0.4 & \textbf{99.59} \com 0.03 & \textbf{99.87} \com 0.03 & 91.65\com 0.37 & \underline{90.37} \com 1.43 & 96.25 \\
\bottomrule
\end{tabular}}}
\caption{Detection results on WritingPrompts for text generated by 9 open-source models under the black-box setting.}
\label{tab:opensource-writing}
\end{table}

\begin{table}[htbp]
\centering
{
\tiny 
\resizebox{\textwidth}{!}{
\begin{tabular}{lcccccccccc}
\toprule
& GPT2-XL & GPT-J-6B & GPT-Neo-2.7B & GPT-NeoX-20B & OPT-2.7B & Llama-2-13B & Llama-3-8B & Llama-3.2-3B & Gemma-7B  & Avg \\
\midrule
Likelihood & 84.00 \com 2.33 & 73.00 \com 3.12 & 71.93 \com 2.95 & 68.40 \com 1.32 & 78.01 \com 1.25 & 91.47 \com 1.43 & 88.77 \com 1.01 & 58.11 \com 1.86 & 59.10 \com 1.58 & 74.75 \\

LogRank & 88.39 \com 2.06 & 78.14 \com 0.96 & 78.13 \com 2.26 & 72.85 \com 1.45 & 83.68 \com 1.2& 93.55\com 0.59 & 90.48 \com 1.3 & 64.69\com 0.64 & 62.41 \com 1.72 & 79.15 \\

Entropy & 58.93 \com 3.11 & 51.43 \com 2.6 & 56.24 \com 2.91 & 49.86 \com 1.68 & 52.88 \com 3.1 & 38.92 \com 2.37& 38.72 \com 2.71& 51.00 \com 2.26 & 50.18 \com 1.82 & 49.80 \\

DetectLRR & 93.05 \com 0.11 & 85.61 \com 1.24 & 89.56 \com 1.01 & 80.38 \com 1.19 & 92.28 \com 1.05 & 94.98 \com 0.35 & 91.47 \com 1.45 & 77.14 \com 1.09 & 70.89 \com 2.31& 86.15 \\

Lastde & 97.45 \com 0.37 & 85.71 \com 1.45 & 88.82 \com 0.44 & 78.01 \com 1.87 & 92.78 \com 1.18 & 89.88 \com 1.03 & 90.89 \com 0.72 & 67.41 \com 2.9 & 62.40 \com 2.55 & 83.71 \\

Lastde++ & \textbf{99.72} \com 0.05 & \textbf{93.27} \com 0.42 & \textbf{96.51} \com 0.05 & 82.42 \com 0.3  & 96.13 \com 0.21 & 94.85 \com 0.14 & 94.72 \com 0.02 & \underline{77.47} \com 0.32 & \textbf{72.43}\com 0.24 & \textbf{89.72} \\

DNA-GPT & 83.97 \com 2.21 & 71.23 \com 2.17 & 78.21\com 1.45  & 71.93 \com 1.86 & 78.33 \com 1.43  & 95.15\com 0.49 & \underline{95.00} \com 0.32  & 59.52 \com 1.61 & 60.06 \com 1.67 & 77.04 \\

Fast-DetectGPT & 98.60 \com 0.05 & 88.09 \com 1.05 & 89.00 \com 1.18 & 81.79 \com 1.58 & 92.89 \com 0.6 & \textbf{97.32} \com 0.28 & \textbf{97.32} \com 0.05 & 67.56 \com 2.47 & 69.29 \com 0.61 & 86.87 \\

DetectGPT & 94.59 \com 0.43 & 80.95 \com 2.04  & 86.34 \com 1.21  & 69.04 \com 2.6  & 80.45 \com 2.84 & 84.08 \com 1.65 & 82.13 \com 1.72 & 56.56 \com 3.7 & 62.44 \com 1.54 & 77.40 \\

DetectNPR & 94.64 \com 0.26 & 83.59 \com 1.24& 87.34 \com 1.29& 75.01\com 2.13  & 83.07 \com 1.78 & 93.09 \com 0.69 & 90.18 \com 1.05 & 63.52 \com 2.43 & 67.25 \com 1.7 & 81.97 \\

R-Detect & 63.58 \com 0.97 & 55.04 \com 0.64 & 60.28 \com 1.67 & 52.77 \com 2.64 &  51.03 \com 0.72 & 88.15 \com 0.69 &  81.06 \com 0.87 & 53.03 \com 2.77 & 47.02 \com 3.4 & 61.33 \\

Binoculars & 99.09 \com 0.04 & 88.91 \com 1.03 &  89.49 \com 0.46 & 76.66 \com 1.21 & 89.49 \com 0.27 & 95.1 \com 0.02 & 94.04  \com 0.3& 63.46  \com 0.46 & 67.77 \com 2.58 & 84.89\\

FourierGPT &  52.12\com 3.12 & 50.5 \com 2.56 & 56.5 \com 2.79 & 49.76 \com 1.82 &  52.3 \com 2.61 & 62.49 \com 0.86 &  64.82 \com 1.22& 53.83 \com 0.72 & 52.38 \com 2.49 & 54.97 \\

\rowcolor{ours}$\text{SurpMark}_{k=6}$ & 97.88 \com 0.55 & \underline{92.93} \com 0.82  & 94.99 \com 0.3 & \textbf{84.39}\com 0.18 & 95.37 \com 0.6 & 95.89 \com 0.49 & 93.76 \com 0.35 & \textbf{78.54}\com 1.97 & \underline{69.92} \com 0.54 & \underline{89.30} \\

\rowcolor{ours}$\text{SurpMark}_{k=7}$ & \underline{98.77} \com 0.72 & 92.74 \com 0.45 & \underline{95.72} \com 0.38 & \underline{82.45} \com 1.03 & \underline{96.68} \com 0.65 & \underline{96.13} \com 0.3 & 94.17 \com 0.57 & 75.55 \com 1.21 & 68.27 \com 0.95 & 88.94 \\

\rowcolor{ours}$\text{SurpMark}_{k=8}$ & 98.76 \com 0.66 & 90.78 \com 0.23 & 94.56\com 0.1 & 79.36 \com 1.67 & \textbf{97.26} \com 0.21 & 94.81 \com 0.41 & 93.32 \com 0.16& 76.55 \com 1.2 & 67.47 \com 0.83 & 88.10 \\
\bottomrule
\end{tabular}}}
\caption{Detection results on SQuAD for text generated by 9 open-source models under the black-box setting.}
\label{tab:opensource-squad}
\end{table}
\newpage 
\subsubsection{Empirical Calibration of the Bin-count Scaling Constant}\label{app:choice-of-k}

Our intention in Section 4.2 is justify the scaling law $k=\Theta(N^{1/5})$. In practice, for each dataset we treat the theorem as providing the functional form $k=CN^{1/5}$ and then select $k$ by a small grid search. To handle constant $C$, we examined the ratio $\frac{k}{N^{1/5}}$ across several reference size and found it to be consistently around 0.8. This suggests that in our regime the implicit constant is approximately $C \approx 0.8$, and that the empirically chosen $k$ is well aligned with the theoretical scaling law.

\begin{table}[htbp]
\centering{\small
\begin{tabular}{lcccc}
\toprule
Number of ref samples 
& $N$ (approx.\ total length) 
& Empirical best $k$ 
& $N^{1/5}$ 
& $\frac{k}{N^{1/5}}$ \\
\midrule
100  & 15{,}000  & 6 & 6.84  & 0.88 \\
300  & 45{,}000  & 7 & 8.52  & 0.82 \\
400  & 60{,}000  & 7 & 9.03  & 0.78 \\
600  & 90{,}000  & 7 & 9.80  & 0.71 \\
900  & 135{,}000 & 9 & 10.62 & 0.85 \\
\bottomrule
\end{tabular}}
\caption{Scaling of the empirically optimal number of bins $k$ with  $N$.}
\label{tab:k-vs-N}
\end{table}

\subsubsection{Empirical Validation of the Asymptotic Normal Approximation} \label{sec:shapiro-wilk}
While the asymptotic variance in Theorem 4.4 does not provide a simple closed-form expression, Appendix~\ref{sec:proof-asymp-gaussian} along with Lemma~\ref{lemma:var-markov-chain} give an explicit numerical procedure to solve it. To quantitatively compare this theoretical variance with empirical fluctuations, we proceed as follows. We first compute the theoretical variance using the estimated Markov kernels from reference data. Then we estimate the empirical variance of $\Delta \text{GJS}_n$ in detection procedure. Table~\ref{tab:var-theory-vs-emp} reports theoretical variance and empirical variance with test length 250. Overall, the theoretical variance captures the right order of magnitude $\Delta \text{GJS}_n$ fluctuations, so we interpret it as a conservative asymptotic scale parameter rather than a precise finite-sample variance estimator.

\begin{table}[htbp]
\centering{\tiny
\begin{tabular}{lcccccc}
\toprule
Model 
& Emp $\sigma^2$ (Human) 
& Emp $\sigma^2$ (LM) 
& Th $\sigma^2$ (Human) 
& Th $\sigma^2$ (LM) 
& Ratio Th/Emp (Human) 
& Ratio Th/Emp (LM) \\
\midrule
Llama3-8B   & $1.15\times 10^{-5}$ & $1.06\times 10^{-5}$ & $2.48\times 10^{-5}$ & $7.50\times 10^{-5}$ & 2.16 & 7.08 \\
Llama3.2-3B & $1.12\times 10^{-5}$ & $9.93\times 10^{-6}$ & $2.73\times 10^{-5}$ & $9.11\times 10^{-5}$ & 2.44 & 9.17 \\
Gemma-7B    & $1.50\times 10^{-6}$ & $5.96\times 10^{-7}$ & $3.56\times 10^{-6}$ & $2.43\times 10^{-6}$ & 2.37 & 4.08 \\
\bottomrule
\end{tabular}}
\caption{Comparison between empirical and theoretical variances of $\Delta \mathrm{GJS}$ under human and LM text.}
\label{tab:var-theory-vs-emp}
\end{table}
Finally, to assess the distributional shape, we ran Shapiro-Wilk tests on the obtained $\Delta \text{GJS}_n$ score, as shown in Table~\ref{tab:shapiro-wilk}, the Shapiro-Wilk statistics are close to 1 and the p-values are not small (larger than 0.05). This indicates no evidence against normality and empirically supports the central-limit-theorem-based approximation in Theorem~\ref{sec:proof-asymp-gaussian}, consistent with the variance comparison above.

\begin{table}[t]
\centering{\scriptsize
\begin{tabular}{lccc}
\toprule
Setting 
& SQuAD@GPT-5-chat 
& WritingPrompts@Llama3-8B 
& XSum@Qwen3-8B \\
\midrule
{$H_1$ (LM text)} stat     & 0.9952 & 0.9856 & 0.9974 \\
{$H_1$(LM text)} p-value  & 0.9078 & 0.1203 & 0.9969 \\
{$H_0$ (human text)} stat  & 0.9876 & 0.9854 & 0.9929 \\
{$H_0$ (human text)} p-value & 0.2032 & 0.1143 & 0.6632 \\
\bottomrule
\end{tabular}}
\caption{Shapiro-Wilk test statistics and p-values for $\Delta \mathrm{GJS}_n$ under LM-generated ($H_1$) and human ($H_0$) text.}
\label{tab:shapiro-wilk}
\end{table}

\subsubsection{Score Distribution}
\begin{figure}[htbp]
\centering
\includegraphics[width=0.3 \textwidth]{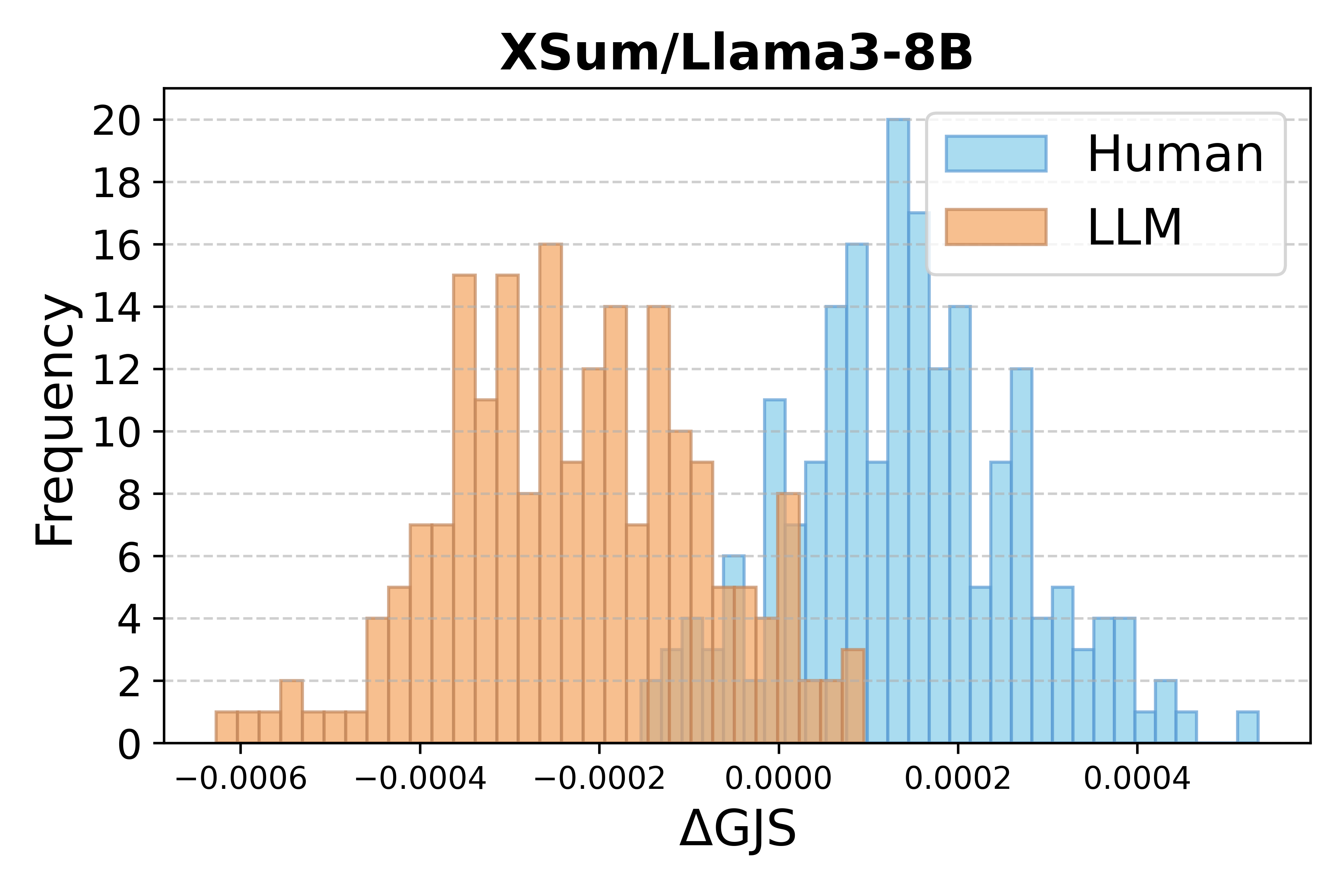}
\includegraphics[width=0.3\textwidth]{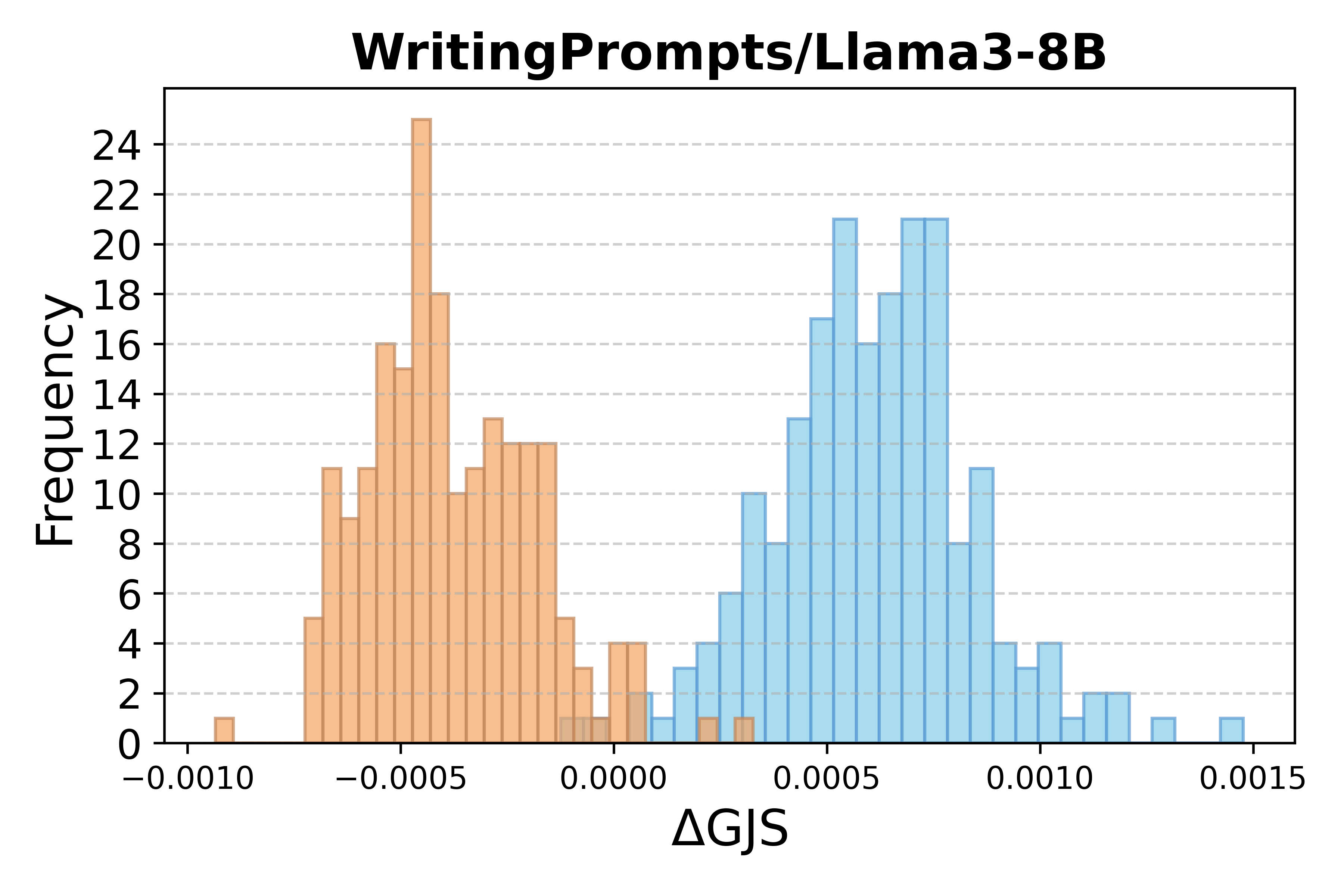}
\includegraphics[width=0.3\textwidth]{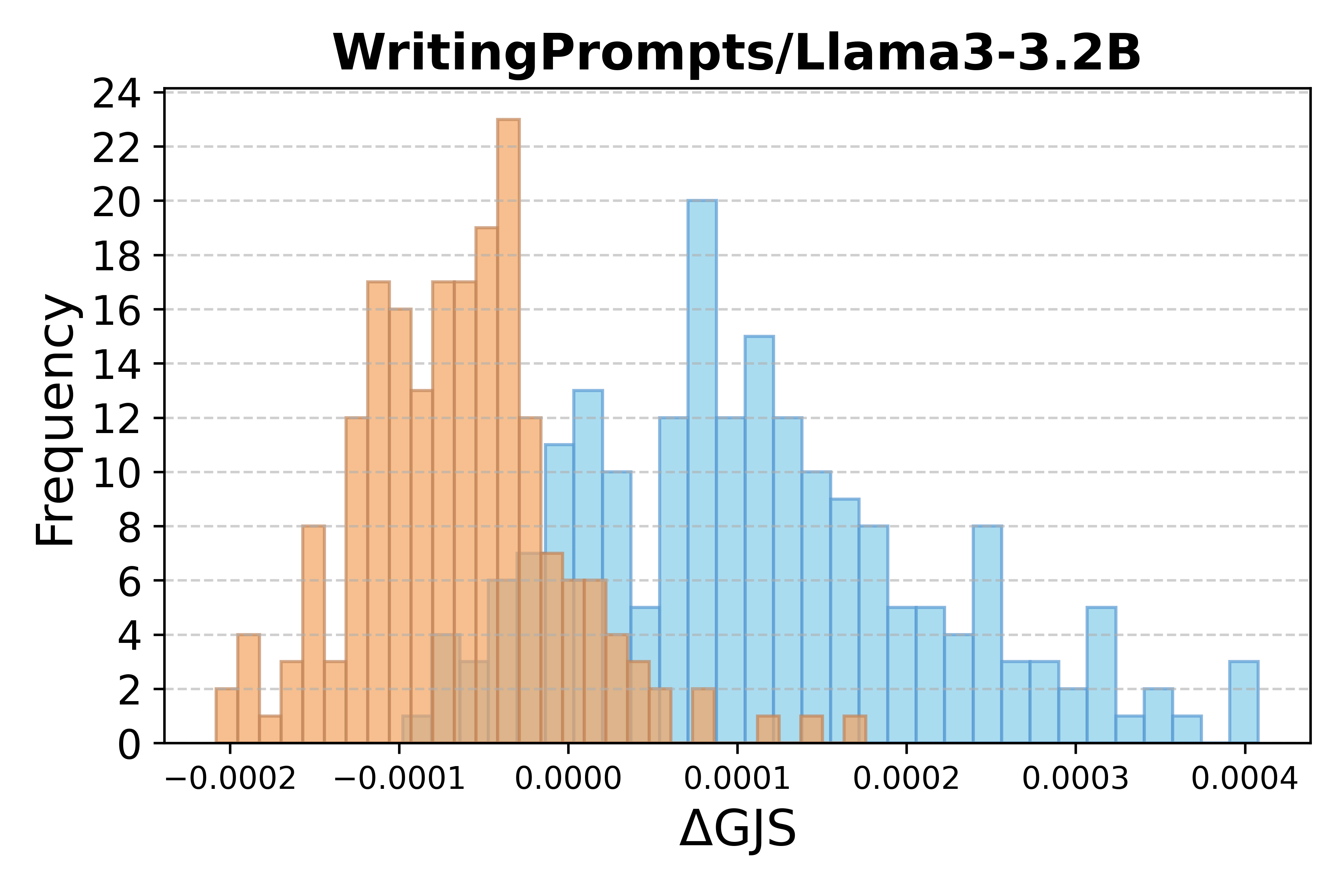}
\includegraphics[width=0.3\textwidth]{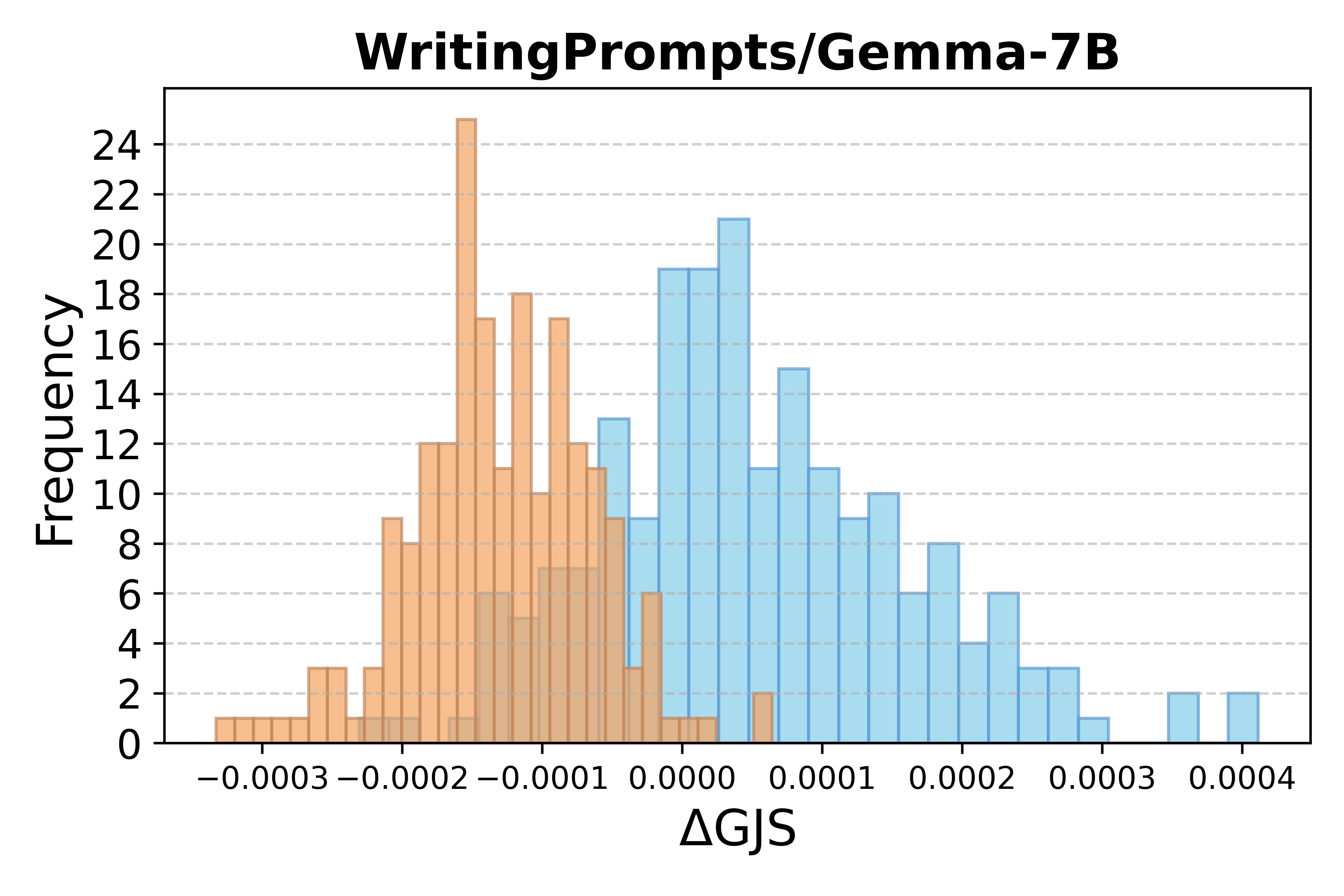}
\includegraphics[width=0.3\textwidth]{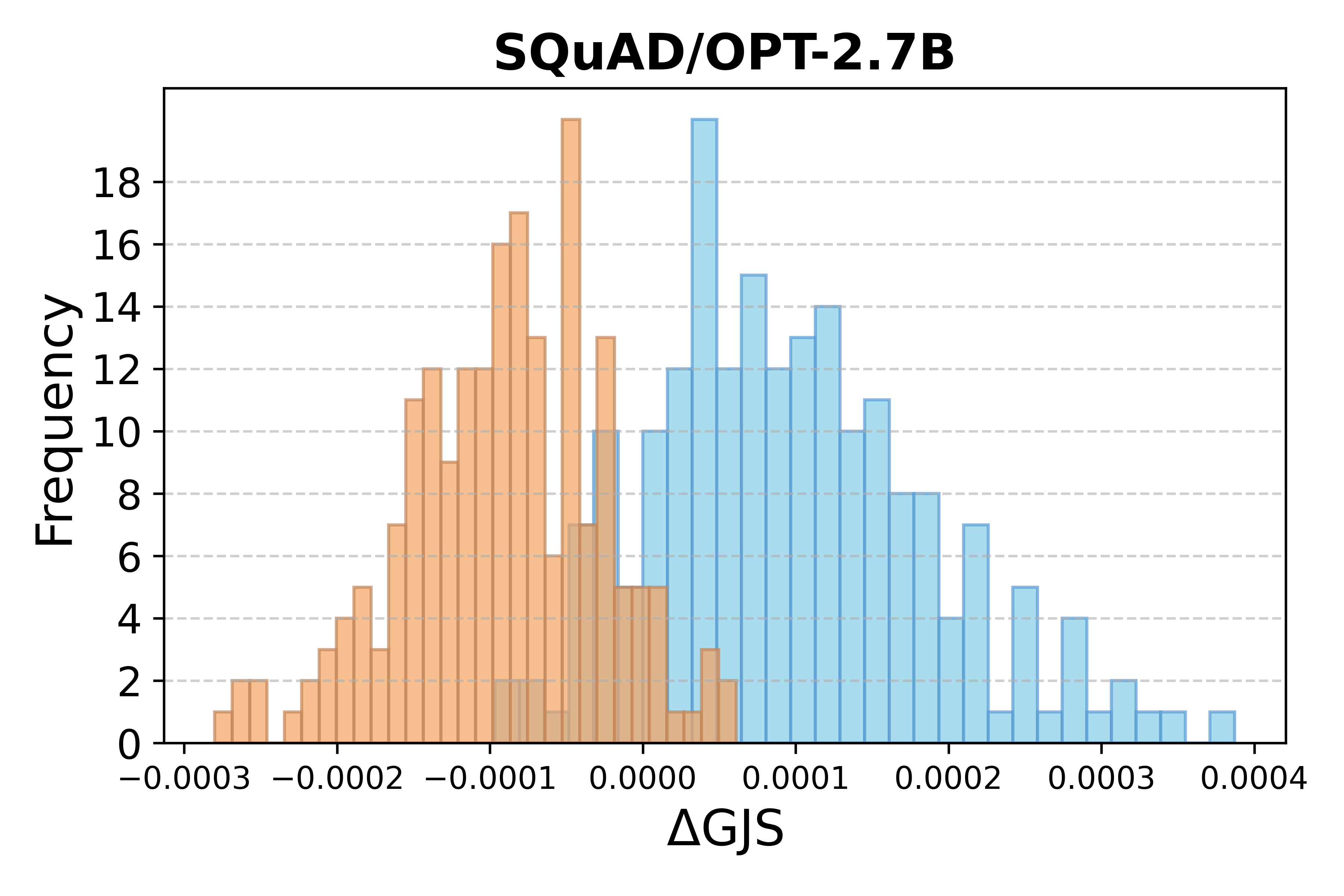}
\includegraphics[width=0.3\textwidth]{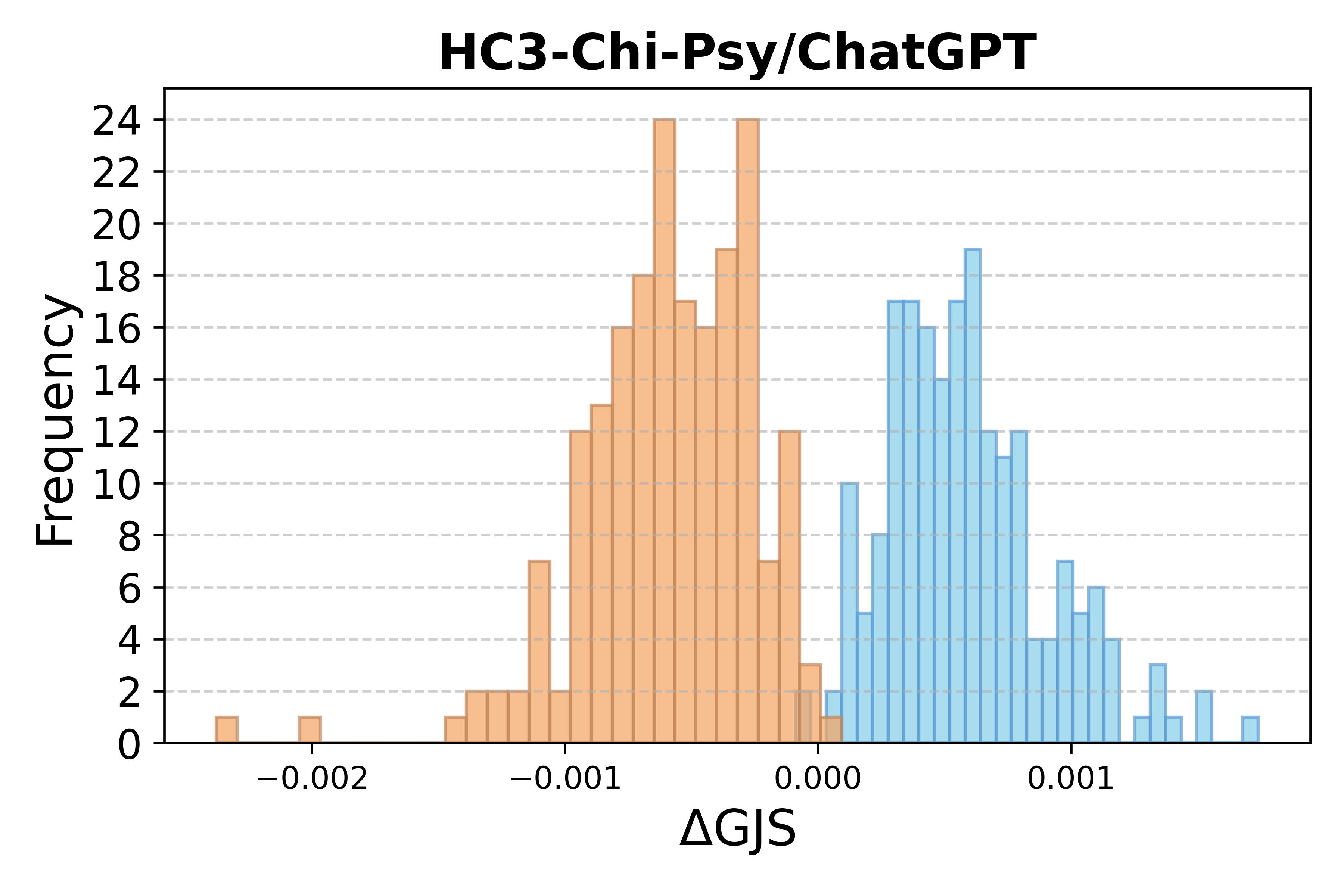}
\caption{SurpMark's score distribution.}
\label{fig:appendix-score-distribution}
\end{figure}

\subsubsection{Effect of Test Length} 
\begin{figure}[htbp]
\centering
\includegraphics[width=0.4 \textwidth]{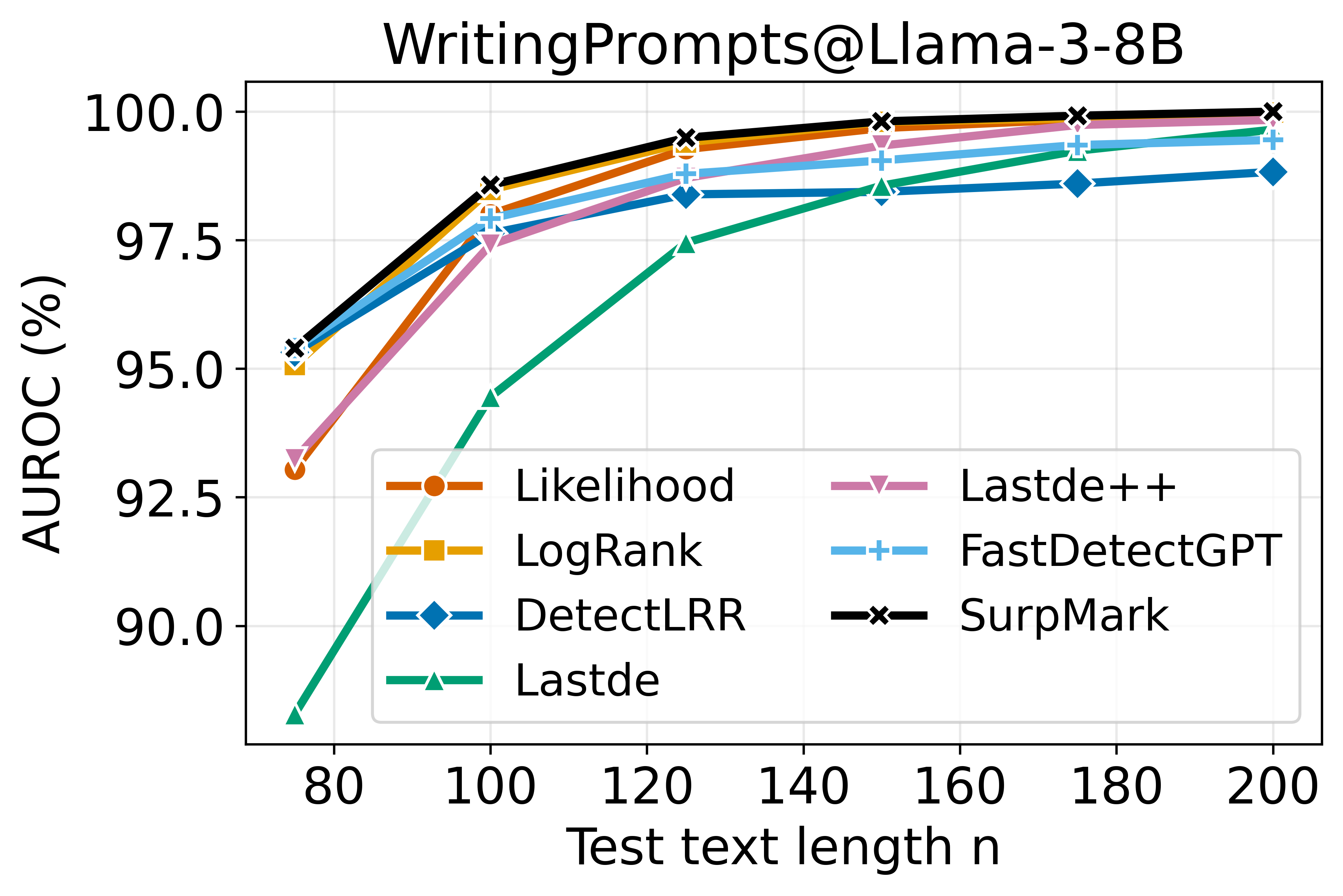}
\includegraphics[width=0.39\textwidth]{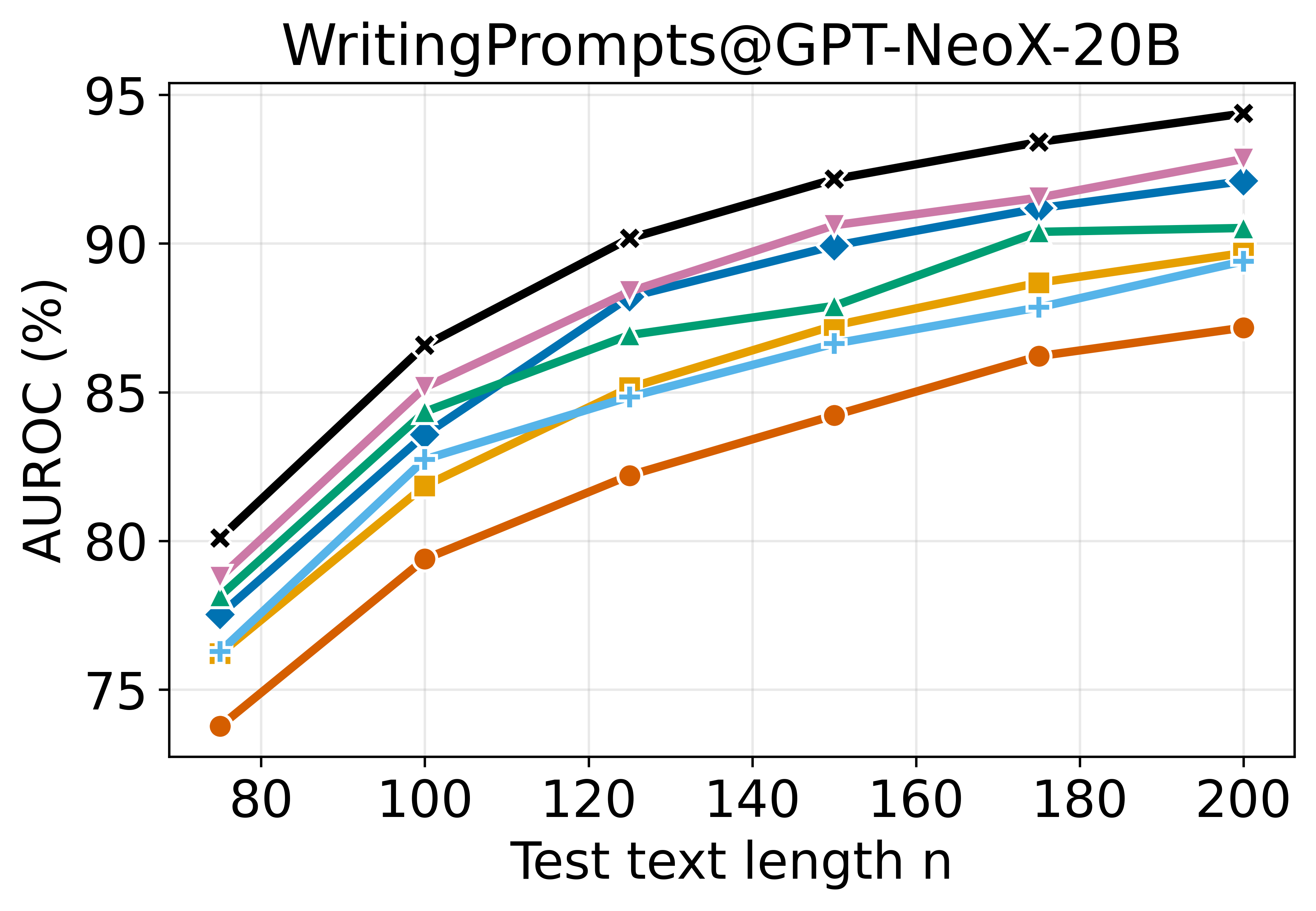}
\caption{AUROC vs test length.}
\label{fig:appendix-test-length}
\end{figure}

\newpage
\subsubsection{TPR} \label{sec:tpr}
In Table~\ref{tab:tpr-fpr}, we include TPR@FPR=1\% and 5\% for SurpMark and two strong baselines (Lastde++ and Fast-DetectGPT) across evaluation settings. Overall, these results indicate that SurpMark is particularly effective in the low-false-positive regime.

\begin{table}[htbp]
\centering
{\tiny
\begin{tabular}{l l c c c c c c }
\toprule
Method          &            & 
\makecell{XSum@\\GPT-5-Chat} & 
\makecell{WritingPrompts@\\GPT-4.1-mini} & 
\makecell{XSum@\\Llama2-13B} & 
\makecell{SQuAD@\\GPT-Neo-2.7B} & 
\makecell{WritingPrompts@\\Llama3-8B} & 
\makecell{HC3-Chi-\\Psy}  \\
\midrule
Lastde++        & TPR@FPR=1\%    & 4.00            & 6.00                         & 76.67           & \textbf{53.33}     & 97.33                     & 22.00        \\
       & TPR@FPR=5\%    & 12.67           & 18.67                        & 82.67           & 81.33              & 99.33                     & 31.33     \\
Fast-DetectGPT  & TPR@FPR=1\%    & 2.00            & 3.30                         & \textbf{80.67}  & 47.33              & 94.67                     & 26.00        \\
  & TPR@FPR=5\%    & 4.00            & 22.00                        & 86.67           & 77.33              & 98.00                     & 40.00       \\
SurpMark        & TPR@FPR=1\%    & \textbf{31.33}  & \textbf{31.33}               & 75.33           & 41.33              & \textbf{100.00}           & \textbf{90.00}  \\
       & TPR@FPR=5\%    & \textbf{37.33}  & \textbf{50.00}               & \textbf{90.00}  & \textbf{90.00}     & \textbf{100.00}           & \textbf{97.33}  \\
\bottomrule
\end{tabular}
\caption{TPR at fixed FPR levels (1\% and 5\%) for different detectors and datasets.}
\label{tab:tpr-fpr}}
\end{table}

\subsubsection{Decoding Strategies} \label{app:decoding}
In Table~\ref{tab:method-decoding}, to evaluate the effect of decoding stratigy, we use standard decoding strategies as described in Appendix~\ref{sec:exp-configuration}, varying one hyperparameter at a time while keeping the others at their default values. For open-source models on HuggingFace and Gemini, we (i) set top-p = 0.96, (ii) set top-k = 40, (iii) set temperature = 0.7. For GPT-5-chat, we vary one parameter at a time: top-p = 0.96 or temperature = 0.7 (no top-k parameter is exposed). Across all three models and decoding strategies, SurpMark either matches or exceeds the best baseline, and is especially strong under top-p/top-k sampling.
\begin{table}[htbp]
\centering
{\scriptsize
\begin{tabular}{l c c c c c c c c}
\toprule
\multirow{2}{*}{Method / Data@Model} 
  & \multicolumn{3}{c}{XSum@OPT-2.7B} 
  & \multicolumn{3}{c}{XSum@Gemma-7B} 
  & \multicolumn{2}{c}{WritingPrompts@GPT-5-chat} \\

  & top-p & top-k & temperature 
  & top-p & top-k & temperature 
  & top-p & temperature \\
\midrule
Likelihood     
  & 79.24 & 67.95 & 93.53
  & 66.73 & 55.56 & 87.56
  & 57.29 & 66.99 \\
LogRank        
  & 82.01 & 72.11 & 94.72
  & 68.28 & 59.25 & 88.69
  & 55.03 & 65.60 \\
Entropy        
  & 46.87 & 56.53 & 45.27
  & 49.16 & 52.12 & 45.06
  & 36.24 & 33.68 \\
LRR            
  & 83.43 & 77.88 & 93.48
  & 68.66 & 67.86 & 86.22
  & 47.28 & 59.38 \\
Lastde         
  & 86.09 & 81.26 & 94.19
  & 68.34 & 58.24 & 85.03
  & 39.38 & 47.74 \\
Lastde++       
  & 92.64 & 87.38 & 97.24
  & 81.43 & 69.42 & 93.15
  & 45.15 & 57.26 \\
Fast-DetectGPT 
  & 90.64 & 85.03 & \textbf{98.28}
  & 80.41 & 68.70 & \textbf{95.99}
  & 41.61 & 57.54 \\
SurpMark $k=6$ 
  & 92.41 & \textbf{87.81} & 96.65
  & \textbf{82.13} & 72.38 & 93.79
  & 75.80 & \textbf{77.08} \\
SurpMark $k=7$ 
  & \textbf{93.90} & 87.20 & 95.96
  & 80.90 & \textbf{77.88} & 93.57
  & \textbf{77.32} & \textbf{77.08} \\
\bottomrule
\end{tabular}}
\caption{AUROC of different detectors across decoding parameters, datasets, and models.}
\label{tab:method-decoding}
\end{table}

\subsubsection{Paraphrasing Attack} \label{sec:paraphrasing-attack}
Here we examine the robustness of detection methods to the paraphrasing attack. For SurpMark, we consider three paraphrase scenarios. Ref-P applies paraphrasing only to the offline references. Test-P paraphrases only the incoming text, which is the most realistic case in practice. Both-P paraphrases both sides. We follow the setup of Lastde++ and Fast-DetectGPT, and use T5-Paraphraser to perform paraphrasing attacks on texts. Under the practically most relevant Test-P case, the losses are minimal. Under Ref-P, the changes are modest. Under Both-P the drop is larger but still competitive. It shows that SurpMark’s surprisal-dynamics features are largely invariant to semantics-preserving rewrites.
\begin{table}[htbp]
{\scriptsize
    \centering
    \begin{tabular}{ccc|cc|cc}
    \toprule
    & \multicolumn{2}{c}{Xsum@Llama-3-8B}& \multicolumn{2}{c}{WritingPrompts@GPT-NeoX-20B}& \multicolumn{2}{c}{SQuAD@Llama-2-13B} \\
    \midrule
        & Original & Paraphrased & Original & Paraphrased& Original & Paraphrased \\
        \midrule
      Fast-DetectGPT  & 96.78& 95.3 ($\downarrow$1.48)&92.22&89.51 ($\downarrow$2.71)&94.85&92.78 ($\downarrow$2.07)\\
      Lastde++  & 93.42 &  91.3 ($\downarrow$2.12)&92.68&91.94 ($\downarrow$0.74)&97.32&92.12 ($\downarrow$5.2)\\
      SurpMark Ref-P   & 97.77 & 97.06 ($\downarrow$0.61)&94.31&93.12 ($\downarrow$1.19)&96.13& 94.89 ($\downarrow$1.24)\\
      SurpMark Test-P   &97.77 & 97.33 ($\downarrow$0.44)&94.31&94.05 ($\downarrow$0.26)&96.13&95.46 ($\downarrow$0.67)\\
      SurpMark Both-P   & 97.77 &97.17 ($\downarrow$0.6)&94.31&92.22 ($\downarrow$2.09)&96.13&93.98 ($\downarrow$2.15)\\
         \bottomrule
    \end{tabular}
    \caption{Robustness to paraphrase attacks. AUROC on three settings—XSum@Llama-3-8B, WritingPrompts@GPT-NeoX-20B, and SQuAD@Llama-2-13B. For SurpMark, Ref-P/Test-P/Both-P denote paraphrasing the reference set, the test text, or both.}}
    \label{tab:paraphrase-attack}
\end{table}

\subsubsection{Prompt-Engineered Adversarial Attacks} \label{sec:prompts-eng-attack}In this section, we run experiments with simple prompt-engineered attacks beyond plain paraphrasing. Specifically, for the XSum and WritingPrompts datasets, we design two types of attacks: (\textbf{attack 1}) prompts that ask the model to mimic human writing style, using instructions such as “Messy casual summary of the news article.” or “Short story in a quick, slightly messy human style.”; and (\textbf{attack 2}) prompts that explicitly instruct the model to evade detection, such as “Write a summary of the article that is designed to evade AI-text detectors.” or “Continue the story in a way that is hard for AI-text detectors.” See Table~\ref{tab:attack-gptj} for comparison. “SurpMark ref-attack” applies the adversarial prompts only when generating the reference machine texts, “SurpMark test-attack” applies them only to the test texts, and “SurpMark both-attack” applies the same adversarial prompts to both the reference and test texts. Across both datasets, SurpMark variants (especially the test-attack and both-attack settings) experience much smaller accuracy drops under all three attacks, showing the strongest overall robustness.

\begin{table}[h]
\centering
{\scriptsize
\begin{tabular}{l c c c c c c}
\toprule
& \multicolumn{3}{c}{WritingPrompts@GPT-J-6B} 
& \multicolumn{3}{c}{XSum@GPT-J-6B} \\

& Original & Attack 1 & Attack 2 
& Original & Attack 1 & Attack 2 \\
\midrule
Lastde++             
& 96.96 
& 84.24 ($\downarrow 12.72$) 
& 85.42 ($\downarrow 11.52$)
& 85.38 
& 69.79 ($\downarrow 15.59$) 
& 73.55 ($\downarrow 11.83$) \\
Fast-DetectGPT       
& 93.80 
& 85.95 ($\downarrow 7.85$) 
& 79.26 ($\downarrow 14.54$)
& 78.60 
& 75.44 ($\downarrow 3.16$) 
& 74.09 ($\downarrow 4.51$) \\
SurpMark ref-attack  
& 97.60 
& 95.06 ($\downarrow 2.54$) 
& 94.67 ($\downarrow 2.93$)
& 88.35 
& 83.84 ($\downarrow 4.51$) 
& 83.85 ($\downarrow 4.50$) \\
SurpMark test-attack 
& 97.60 
& 95.62 ($\downarrow 1.98$) 
& 92.59 ($\downarrow 5.01$)
& 88.35 
& 86.37 ($\downarrow 1.98$) 
& 84.86 ($\downarrow 3.49$) \\
SurpMark both-attack 
& 97.60 
& 94.30 ($\downarrow 3.30$) 
& 92.74 ($\downarrow 4.86$)
& 88.35 
& 84.44 ($\downarrow 3.91$) 
& 85.23 ($\downarrow 3.12$) \\
\bottomrule
\end{tabular}}
\caption{AUROC under adversarial attacks for different detectors on GPT-J-6B.}
\label{tab:attack-gptj}
\end{table}

\subsubsection{\quad Ablation on Necessity of First-Order Markov Chain} \label{sec:ablation-fo-markov}

In Table~\ref{tab:unigram-markov-comparison}, we evaluate the necessity of the use of first-order markov chain by comparing against the 1-gram distribution of surprisal states. Across the datasets, the first-order Markov features outperform the 1-gram distribution, with especially large gains on GPT-5-chat. This shows that modeling surprisal transitions, rather than only the stationary distribution, is particularly important for harder-to-detect models.
\begin{table}[htbp]
\centering
{\scriptsize
\begin{tabular}{l cccccc}
\toprule
                          & \multicolumn{3}{c}{GPT-J-6B}      & \multicolumn{3}{c}{GPT-5-chat} \\
Metric / Dataset          & XSum        & WritingPrompts & SQuAD    & XSum        & WritingPrompts & SQuAD    \\
\midrule
1-gram distribution       & 86.07       & 96.60          & 91.62    & 55.89       & 78.43          & 54.58    \\
First-order Markov chain  & 88.35       & 97.60          & 92.93    & 84.16       & 82.25          & 68.57    \\
\bottomrule
\end{tabular}}
\caption{AUROC of unigram vs. first-order Markov detectors across models and datasets.}
\label{tab:unigram-markov-comparison}
\end{table}

\subsubsection{\quad Ablation on Necessity of GJS distance} \label{sec:ablation-GJS-l1-l2}

In Table~\ref{tab:metric-model-dataset}, we evaluate the effect of different distance metrics including GJS divergence, $L_1$ and $L_2$ norm distance. GJS achieves the best AUROC on most dataset and source model. This suggests that GJS is a more robust measure than $L_1$ and $L_2$.

\begin{table}[htbp]
    \centering
    {\small
    \begin{tabular}{l c c c}
\toprule
                     & \multicolumn{3}{c}{GPT-4.1 mini} \\
          & XSum        & WritingPrompts & SQuAD      \\
\midrule
GJS                & 82.52       & 83.64          & 69.27      \\
$L_1$              & 73.51       & 82.17          & 62.28      \\
$L_2$              & 73.58       & 83.04          & 59.14      \\
\bottomrule
\end{tabular}
\caption{Comparison of different distance metrics across datasets.}
\label{tab:metric-model-dataset}}
\end{table}

\subsubsection{ \quad Analysis of Performance Disparity: Marginal vs. Transition Surprisal} \label{marginal-transition}
We investigate the performance disparity observed between closed-source (e.g., GPT-5-chat) and open-source models. Our analysis suggests that the distinguishing factor lies in the divergence between the generator and human text at the marginal surprisal level versus the transitional level.

For many open-source models, the marginal surprisal gap—the difference in the stationary distribution of token surprisals—is sufficiently large. Consequently, detectors relying on marginal statistics (e.g., Likelihood, LogRank, Entropy) perform well, and the relative gain from SurpMark is moderate. Conversely, for advanced closed-source models, this marginal gap is nearly negligible, rendering unigram-based methods ineffective. However, a significant transition gap persists in the surprisal dynamics. SurpMark captures these temporal dependencies, explaining its substantial performance advantage on proprietary models.

To quantify this, we compute the Jensen-Shannon (JS) divergence for both marginal surprisal distributions (JS-marginal) and first-order transition distributions (JS-transition) between human and machine text. As shown in Table~\ref{tab:marginal_vs_transition}, for GPT-5-chat, the ratio of transition divergence to marginal divergence is approximately 30, indicating that the signal primarily resides in the dynamics. In contrast, for GPT-J-6B, this ratio is close to 1, suggesting that marginal statistics alone are nearly as informative as transition statistics.

\begin{table}[htbp]
\centering

{\scriptsize
\begin{tabular}{llccc}
\toprule
\textbf{Generator} & \textbf{Dataset} & \textbf{JS-marginal} & \textbf{JS-transition} & \textbf{Ratio (Transition / Marginal)} \\
\midrule
GPT-J-6B     & XSum  & 0.00180   & 0.00228 & $\approx 1.27$ \\
   & SQuAD & 0.00358   & 0.00392 & $\approx 1.09$ \\
\midrule
GPT-5-chat   & XSum  & 0.00006   & 0.00170 & $\mathbf{\approx 29.97}$ \\
  & SQuAD & 0.00024   & 0.00100 & $\approx 4.17$ \\
\midrule
GPT-4.1-mini & XSum  & 0.00030   & 0.00160 & $\approx 5.33$ \\
& SQuAD & 0.00052   & 0.00150 & $\approx 2.88$ \\
\bottomrule
\end{tabular}}

\caption{Comparison of Jensen-Shannon (JS) divergence on marginal surprisal distributions versus first-order transition distributions. The high ratio for closed-source models (e.g., GPT-5-chat) indicates that detection signals are dominated by transition dynamics rather than marginal statistics.}
\label{tab:marginal_vs_transition}
\end{table}

\subsubsection{\quad Threshold Selection} \label{app:threshold}
The natural decision rule is simply the sign test by setting $\tau=0$. Our detector is built around the difference between two GJS divergences. Intuitively, $\Delta \text{GJS}$ is positive when the test sequence is closer to the machine reference than to the human reference, and negative in the opposite case. Also, $\Delta \text{GJS}$ can be viewed as a log-likelihood ratio $\Lambda_{n,N}$. In the classical Neyman-Pearson framework, the optimal likelihood-ratio test with equal class priors and symmetric costs is precisely $\Lambda_{n,N} \gtrless 0$. We additionally perform a threshold sensitivity study in Table~\ref{tab:tau-f1}. For each dataset and generator, we sweep $\tau$ over the full score range on the test set, compute precision/recall, and identify an optimal threshold $\tau^*$ that maximizes F1. We then compare F1 at our fixed choice $\tau=0$. Across all generators and datasets, F1 at $\tau=0$ is typically about 95-97\% of the oracle F1. This shows that in practice, our parameter-free sign-based rule already operates very close to the best threshold.

\begin{table}[htbp]
\centering
{\scriptsize
\begin{tabular}{l c c c c}
\toprule
Setting                     & AUROC & $\tau^\ast$              & F1@$\,\tau^\ast$ & F1@$\tau=0$ \\
\midrule
XSum@GPT-J-6B               & 89.12 & $2.92\times 10^{-5}$     & 83.56            & 80.36       \\
WritingPrompts@Llama-2-13B  & 99.75 & $-9.29\times 10^{-6}$    & 98.66            & 98.66       \\
SQuAD@Llama-3-8B            & 93.56 & $-4.49\times 10^{-5}$    & 87.58            & 82.69       \\
WritingPrompts@GPT-5-chat   & 80.63 & $-1.34\times 10^{-5}$    & 76.13            & 75.07       \\
\bottomrule
\end{tabular}}
\caption{AUROC and F1 scores at the optimal threshold $\tau^\ast$ and at $\tau=0$ across different settings.}
\label{tab:tau-f1}
\end{table}

In Lastde, the authors propose a fixed threshold of 2 for Lastde++ regardless of the source model, motivated by plotting score distributions and empirical performance across their experiments. In Table~\ref{tab:metric-lastde-surpmark}, we therefore compare F1 of two methods at their respective threshold. Across three of the four settings, SurpMark achieves higher AUROC, and in all four settings it attains a higher F1. On SQuAD@Llama-3-8B, Lastde++ has slightly higher AUROC, but at their fixed thresholds SurpMark still achieves higher F1, indicating SurpMark's sign-based decision rule is better calibrated and less sensitive to threshold choice.

\begin{table}[htbp]
\centering
{\tiny
\begin{tabular}{l l c c c c}
\toprule
Metric                         & Method        & XSum@GPT-J-6B & WritingPrompts@Llama-2-13B & SQuAD@Llama-3-8B & WritingPrompts@GPT-5-chat \\
\midrule
AUROC                          & Lastde++      & 85.38         & 99.14                       & 94.72            & 30.64                     \\
                               & SurpMark $k=6$& 88.35         & 99.47                       & 93.76            & 82.25                     \\
F1 at respective& Lastde++    & 63.44         & 95.56                       & 80.93            & 0.00                      \\
fixed threshold                               & SurpMark $k=6$& 80.36         & 98.66                       & 82.69            & 75.07                     \\
\bottomrule
\end{tabular}}
\caption{Comparison of AUROC and F1 at fixed thresholds for Lastde++ and SurpMark ($k=6$) across different settings.}
\label{tab:metric-lastde-surpmark}
\end{table}
\subsubsection{\quad Character-Word Perturbation OOD}
\label{app:cw-perturbation-ood}
We consider character- and word-level perturbations as a controlled form of surface-level distribution shift, including spelling noise, character edits, and word substitutions. In this setting, the reference sets remain clean and source-pure, while the test sets are constructed by applying character-level, word-level perturbations to both human-written and LLM-generated texts. This creates an OOD shift by modifying the surface form of the test texts through perturbations.

\begin{table}[htbp]
\centering
\scriptsize
\caption{Results on character-word perturbation on the DetectRL benchmark. The reference sets remain clean, while the test sets are constructed by applying character-level or word-level perturbations. The LLM-generated samples are drawn from a mixture of outputs from multiple LMs, including Llama-2-70B, Claude-Instant, and GPT-3.5-Turbo.}
\label{table:char_word_perturbation}
\begin{tabular}{llcc}
\toprule
Perturbation & Method & AUROC & TPR@FPR=5\% \\
\midrule
\multirow{3}{*}{Character-level}
& Fast-DetectGPT & 97.23 & 89.33 \\
& Lastde++ & 83.86 & 40.67 \\
& SurpMark & \textbf{99.80} & \textbf{99.33} \\
\midrule
\multirow{3}{*}{Word-level}
& Fast-DetectGPT & 98.82 & 93.33 \\
& Lastde++ & 90.75 & 68.00 \\
& SurpMark & \textbf{99.59} & \textbf{99.33} \\
\bottomrule
\end{tabular}
\end{table}

\subsubsection{Detection Using Larger Proxy LMs}
We further evaluate the effect of proxy model choice on GPT-5-chat generations. 
As shown in Table~\ref{tab:gpt5_proxy_models}, using larger proxy models generally improves the performance of the baselines, especially when moving from Deepseek-R1-Distill-Qwen-14B to Qwen-3-32B. 
However, SurpMark consistently outperforms Fast-DetectGPT and Lastde++ across all proxy models and both datasets. 
Notably, SurpMark remains strong even with smaller proxies. 
These results suggest that SurpMark is less sensitive to proxy model choice and can provide robust detection signals without relying on the largest available proxy model.

\begin{table*}[htbp]
\centering
\caption{AUROC on XSum and WritingPrompts generated by GPT-5-chat, using different proxy models: Qwen-3-14B, Deepseek-R1-Distill-Qwen-14B, and Qwen-3-32B.}
\label{tab:gpt5_proxy_models}
{\scriptsize
\begin{tabular}{lcc|cc|cc}
\toprule
\multirow{2}{*}{Method}
& \multicolumn{2}{c}{Qwen-3-14B}
& \multicolumn{2}{c}{Deepseek-R1-Distill-Qwen-14B}
& \multicolumn{2}{c}{Qwen-3-32B} \\
\cmidrule(lr){2-3}
\cmidrule(lr){4-5}
\cmidrule(lr){6-7}
& XSum@GPT-5-chat & WP@GPT-5-chat
& XSum@GPT-5-chat & WP@GPT-5-chat
& XSum@GPT-5-chat & WP@GPT-5-chat \\
\midrule
Fast-DetectGPT
& 77.88 & 52.68
& 49.10 & 35.40
& 84.91 & 69.90 \\

Lastde++
& 75.26 & 46.98
& 46.25 & 29.71
& 80.09 & 61.80 \\

SurpMark
& \textbf{81.12} & \textbf{90.06}
& \textbf{80.81} & \textbf{84.12}
& \textbf{87.10} & \textbf{92.36} \\
\bottomrule
\end{tabular}}
\end{table*}

\subsubsection{White-box Detection Results}
We further evaluate the effect of proxy-model access under white-box and black-box settings. 
As shown in Table~\ref{tab:xsum_white_black_box}, all methods achieve very high AUROC in the white-box setting, where the true source model is available. 
This is consistent with prior observations in Lastde++, which suggest that detection becomes easier when the detector has access to the source model. 
More importantly, SurpMark remains the strongest method in the more practical black-box setting, where GPT2-Large is used as the proxy model instead of the true generator.
\begin{table*}[htbp]
\centering
\caption{AUROC on XSum under white-box and black-box settings for four source models. The white-box setting assumes access to the true source model, while the black-box setting uses GPT2-Large as the proxy model.}
\label{tab:xsum_white_black_box}
{\scriptsize
\begin{tabular}{lcc|cc|cc|cc}
\toprule
\multirow{2}{*}{Method}
& \multicolumn{2}{c}{Llama-3-8B}
& \multicolumn{2}{c}{Llama-3.2-3B}
& \multicolumn{2}{c}{GPT-Neo-2.7B}
& \multicolumn{2}{c}{OPT-2.7B} \\
\cmidrule(lr){2-3}
\cmidrule(lr){4-5}
\cmidrule(lr){6-7}
\cmidrule(lr){8-9}
& White-box & Black-box
& White-box & Black-box
& White-box & Black-box
& White-box & Black-box \\
\midrule
Fast-DetectGPT
& 99.14 & 96.78
& 90.54 & 61.86
& 98.78 & 81.84
& 97.94 & 90.55 \\

Lastde++
& 98.64 & 95.90
& 88.11 & 59.90
& 99.57 & 87.50
& 98.75 & 87.93 \\

SurpMark
& \textbf{99.39} & \textbf{97.77}
& \textbf{92.15} & \textbf{73.07}
& 99.41 & \textbf{92.92}
& \textbf{99.21} & \textbf{91.16} \\
\bottomrule
\end{tabular}}
\end{table*}

\subsubsection{Results on Kaggle DAIGT V2 Train Dataset}
We also evaluate SurpMark on the Kaggle DAIGT V2 Train Dataset, which represents a challenging mixed-generator setting with texts generated by more than 15 models. 
As shown in Table~\ref{tab:daigt_v2}, SurpMark consistently outperforms the baselines across multiple runs, achieving the highest AUROC and TPR@FPR=5\%. 
These results further demonstrate the robustness of SurpMark under highly heterogeneous generator sources.
\begin{table}[htbp]
\centering
\caption{Results on the Kaggle DAIGT V2 Train Dataset, a challenging mixed-generator setting containing texts generated by more than 15 models. We report AUROC and TPR@FPR=5\% with standard deviation over multiple runs.}
\label{tab:daigt_v2}
{\scriptsize
\begin{tabular}{lcc}
\toprule
Method & AUROC & TPR@FPR=5\%  \\
\midrule
Lastde++ & 88.37 \com 2.45 & 78.00 \com 4.06 \\
Fast-DetectGPT & 88.75 \com 2.49 & 81.33 \com 4.16 \\
SurpMark & \textbf{96.17} \com 1.56 & \textbf{85.78} \com 1.39 \\
\bottomrule
\end{tabular}}
\end{table}

\subsubsection{\quad Impact of Pretraining Exposure on Human Texts} \label{app:pretraining_contamination}

We further examine whether our detection signal could be explained by human-written texts appearing in the pretraining data of the proxy LM. 
We conduct an experiment where the human-written texts are newer than the proxy model. 
Specifically, we add results on Dolly, whose human-written responses were released in 2023, and use GPT2-Large, released in 2019, as the proxy model. 
This provides a recent-human / older-proxy setting where pretraining exposure of the human responses is unlikely.

As shown in Table~\ref{tab:dolly_proxy_auroc}, the AUROC obtained with GPT2-Large is comparable to that obtained with Qwen-2-0.5B-Instruct, a newer proxy model released in 2024. 
Interestingly, GPT2-Large even achieves slightly higher AUROC, suggesting that the detection performance is unlikely to be merely an artifact of human references appearing in the proxy model’s pretraining data.

To further analyze this effect, we measure the smoothness of token surprisal sequences using the mean absolute first-order difference, where lower values indicate smoother token dynamics. 
As shown in Table~\ref{tab:surprisal_smoothness}, machine-generated texts remain substantially smoother than human-written texts, consistent with prior observations in Lastde. 
Moreover, Dolly human-written texts are slightly smoother when scored by Qwen-2-0.5B-Instruct than by GPT2-Large, which is consistent with prior findings that pretraining exposure can make human-written text exhibit smoother likelihood trajectories. 
In our setting, however, this effect makes human text more LM-like and therefore makes detection harder rather than easier.

Overall, these results suggest that pretraining exposure alone is unlikely to explain the effectiveness of SurpMark. 
If anything, such exposure may increase the machine-likeness of some human-written texts and make our reported detection performance a conservative estimate.

\begin{table}[htbp]
\centering
\caption{AUROC on Dolly under different generator/proxy model pairs.}
\label{tab:dolly_proxy_auroc}{\small
\begin{tabular}{lc}
\toprule
Generator@Proxy & Dolly \\
\midrule
GPT-J-6B@GPT2-Large & 92.96 \\
GPT-J-6B@Qwen-2-0.5B-Instruct & 90.54 \\
GPT-Neo-2.7B@GPT2-Large & 95.06 \\
GPT-Neo-2.7B@Qwen-2-0.5B-Instruct & 92.71 \\
\bottomrule
\end{tabular}}
\end{table}
\begin{table*}[b]
\centering
\caption{Smoothness of mean token surprisal sequences on Dolly, measured by $\frac{1}{T-1}\sum_{t=2}^{T}|s_t-s_{t-1}|$, where $s_t$ is the negative log probability of the $t$-th token. Lower values indicate smoother token dynamics. HWT proxied by Qwen-2-0.5B-Instruct is slightly smoother than HWT proxied by GPT2-Large, while MGT proxied by Qwen-2-0.5B-Instruct is much smoother overall.}
\label{tab:surprisal_smoothness}{\scriptsize
\begin{tabular}{llccc}
\toprule
Metric & Split 
& HWT (Dolly) proxied by GPT2-Large
& HWT (Dolly) proxied by Qwen-2-0.5B-Instruct
& MGT proxied by Qwen-2-0.5B-Instruct \\
\midrule
\multirow{2}{*}{Smoothness ($\downarrow$ smoother)}
& Ref  & 2.79 & 2.74 & 2.12 \\
& Test & 2.86 & 2.80 & 2.06 \\
\bottomrule
\end{tabular}}
\end{table*}
\end{document}